\documentclass{article} 
\usepackage{iclr2021_conference,times}

\usepackage{amsmath,amsfonts,bm,amsthm}

\newcommand{\norm}[1]{\left\Vert#1\right\Vert}

\newcommand{\set}[1]{\left\{#1\right\}}
\newcommand{\parr}[1]{\left (#1\right )}

\newcommand{\Real}{\mathbb R}

\newcommand{\too}{\rightarrow}

\newcommand{\diag}{\textrm{diag}} 





\newcommand{\eg}{{e.g.}}
\newcommand{\ie}{{i.e.}}

\newtheorem{theorem}{Theorem}
\newtheorem{lemma}{Lemma}

\newtheorem{definition}{Definition}










\def\eqref#1{equation~\ref{#1}}









\def\1{\bm{1}}








\def\ve{{\bm{e}}}

\def\vp{{\bm{p}}}

\def\vu{{\bm{u}}}
\def\vv{{\bm{v}}}

\def\vx{{\bm{x}}}
\def\vy{{\bm{y}}}
\def\vz{{\bm{z}}}


\def\mA{{\bm{A}}}
\def\mB{{\bm{B}}}

\def\mI{{\bm{I}}}

\def\mU{{\bm{U}}}
\def\mV{{\bm{V}}}

\DeclareMathAlphabet{\mathsfit}{\encodingdefault}{\sfdefault}{m}{sl}
\SetMathAlphabet{\mathsfit}{bold}{\encodingdefault}{\sfdefault}{bx}{n}


\def\gG{{\mathcal{G}}}

\def\gM{{\mathcal{M}}}
\def\gN{{\mathcal{N}}}

\def\gS{{\mathcal{S}}}
\def\gT{{\mathcal{T}}}

\def\gX{{\mathcal{X}}}



\def\Real{{\mathbb{R}}}








\newcommand{\E}{\mathbb{E}}



\DeclareMathOperator*{\argmin}{arg\,min}

\usepackage{hyperref}
\usepackage{url}

\usepackage{graphicx}
\usepackage[utf8]{inputenc} 
\usepackage[T1]{fontenc}    
\usepackage{hyperref}       
\usepackage{url}            
\usepackage{booktabs}       
\usepackage{amsfonts}       
\usepackage{nicefrac}       
\usepackage{microtype}      
\usepackage{multirow}
\usepackage[colorinlistoftodos]{todonotes}
\usepackage{dsfont}
\usepackage{comment}
\usepackage{amsmath,bm,amssymb} 
\usepackage{color}
\usepackage{wrapfig}
\usepackage{adjustbox}
\usepackage{booktabs}
\usepackage{xr}
\usepackage{enumerate}
\usepackage{float}

\title{Isometric Autoencoders}

\author{Amos Gropp, Matan Atzmon \& Yaron Lipman \\
Weizmann Institute of Science \\
\texttt{\{amos.gropp,matan.atzmon,yaron.lipman\}@weizmann.ac.il} \\

}

%

\iclrfinalcopy 

\begin{document}
\maketitle

\begin{abstract}

High dimensional data is often assumed to be concentrated on or near a low-dimensional manifold. Autoencoders (AE) is a popular technique to learn representations of such data by pushing it through a neural network with a low dimension bottleneck while minimizing a reconstruction error. Using high capacity AE often leads to a large collection of minimizers, many of which represent a low dimensional manifold that fits the data well but generalizes poorly.

Two sources of bad generalization are: extrinsic, where the learned manifold possesses extraneous parts that are far from the data; and intrinsic, where the encoder and decoder introduce arbitrary distortion in the low dimensional parameterization. An approach taken to alleviate these issues is to add a regularizer that favors a particular solution; common regularizers promote sparsity, small derivatives, or robustness to noise.

In this paper, we advocate an isometry (\ie, local distance preserving) regularizer. Specifically, our regularizer encourages: (i) the decoder to be an isometry; and (ii) the encoder to be the decoder's pseudo-inverse, that is, the encoder extends the inverse of the decoder to the ambient space by orthogonal projection. In a nutshell, (i) and (ii) fix both intrinsic and extrinsic degrees of freedom and provide a non-linear generalization to principal component analysis (PCA). 

Experimenting with the isometry regularizer on dimensionality reduction tasks produces useful low-dimensional data representations.

 
 
\end{abstract}

\section{Introduction}
\label{s:intro}
A common assumption is that high dimensional data $\gX \subset \Real^D$ is sampled from some distribution $p$ concentrated on, or near, some lower $d$-dimensional submanifold $\gM\subset \Real^D$, where $d < D$. The task of estimating $p$ can therefore be decomposed into: (i) approximate the manifold $\gM$; and (ii) approximate $p$ restricted to, or concentrated near $\gM$. 

In this paper we focus on task (i), mostly known as \emph{manifold learning}. A common approach to approximate the $d$-dimensional manifold $\gM$, \eg, in \citep{tenenbaum2000global,roweis2000nonlinear,belkin2002laplacian, maaten2008visualizing, mcqueen2016nearly, mcinnes2018umap}, is to embed $\gX$ in $\Real^d$. This is often done by first constructing a graph $\gG$ where nearby samples in $\gX$ are conngected by edges, and second, optimizing for the locations of the samples in $\Real^d$ striving to minimize edge length distortions in $\gG$.  

Autoencoder (AE) can also be seen as a method to learn low dimensional manifold representation of high dimensional data $\gX$. AE is trying to reconstruct $\gX$ as the image of its low dimensional embedding. When restricting AE to linear encoders and decoders it learns linear subspaces; with mean squared reconstruction loss they reproduce principle component analysis (PCA). Using higher capacity neural networks as the encoder and decoder, allows complex manifolds to be approximated. To avoid overfitting, different regularizers are added to the AE loss. Popular regularizers include sparsity promoting \citep{ranzato2007efficient,ranzato2008sparse,glorot2011deep}, contractive or penalizing large derivatives \citep{rifai2011learning,Rifai2011ContractiveAE}, and denoising \citep{vincent2010stacked,poole2014analyzing}. Recent AE regularizers directly promote distance preservation of the encoder \citep{pai2019dimal,peterfreund2020loca}.

In this paper we advocate a novel AE regularization promoting isometry (\ie, local distance preservation), called Isometric-AE (I-AE). Our key idea is to promote the decoder to be isometric, and the encoder to be its \emph{pseudo-inverse}. Given an isometric encoder $\Real^d\too\Real^D$, there is no well-defined inverse $\Real^D\too\Real^d$; we define the pseudo-inverse to be a projection on the image of the decoder composed with the inverse of the decoder restricted to its image. 

Locally, the I-AE regularization therefore encourages: (i) the differential of the decoder $\mA\in\Real^{D\times d}$ to be an isometry, \ie, $\mA^T\mA = \mI_d$, where $\mI_d$ is the $d\times d$ identity matrix; and (ii) the differential of the encoder, $\mB\in\Real^{d\times D}$ to be the pseudo-inverse (now in the standard linear algebra sense) of the differential of the decoder $\mA\in\Real^{D\times d}$, namely, $\mB=\mA^+$. In view of (i) this implies $\mB=\mA^T$.
This means that \emph{locally} our decoder and encoder behave like PCA, where the encoder and decoder are linear transformations satisfying (i) and (ii); That is, the PCA encoder can be seen as a composition of an orthogonal projection on the linear subspace spanned by the decoder, followed by an orthogonal transformation (isometry) to the low dimensional space.

\begin{wraptable}[16]{r}{0.26\textwidth}
\vspace{-10pt}
\begin{tabular}{c}
\includegraphics[width=0.24\textwidth]{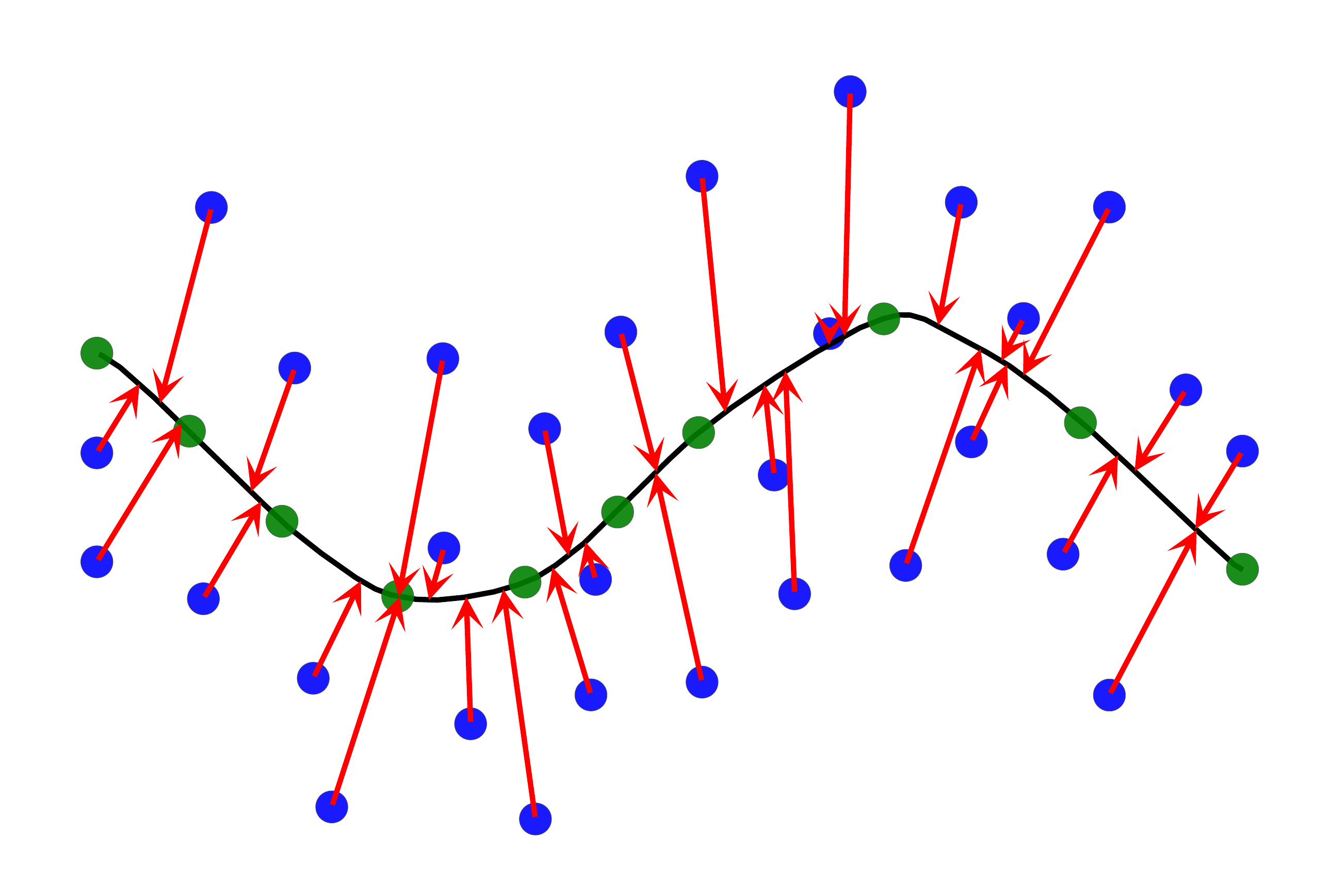}\\
\includegraphics[width=0.24\textwidth]{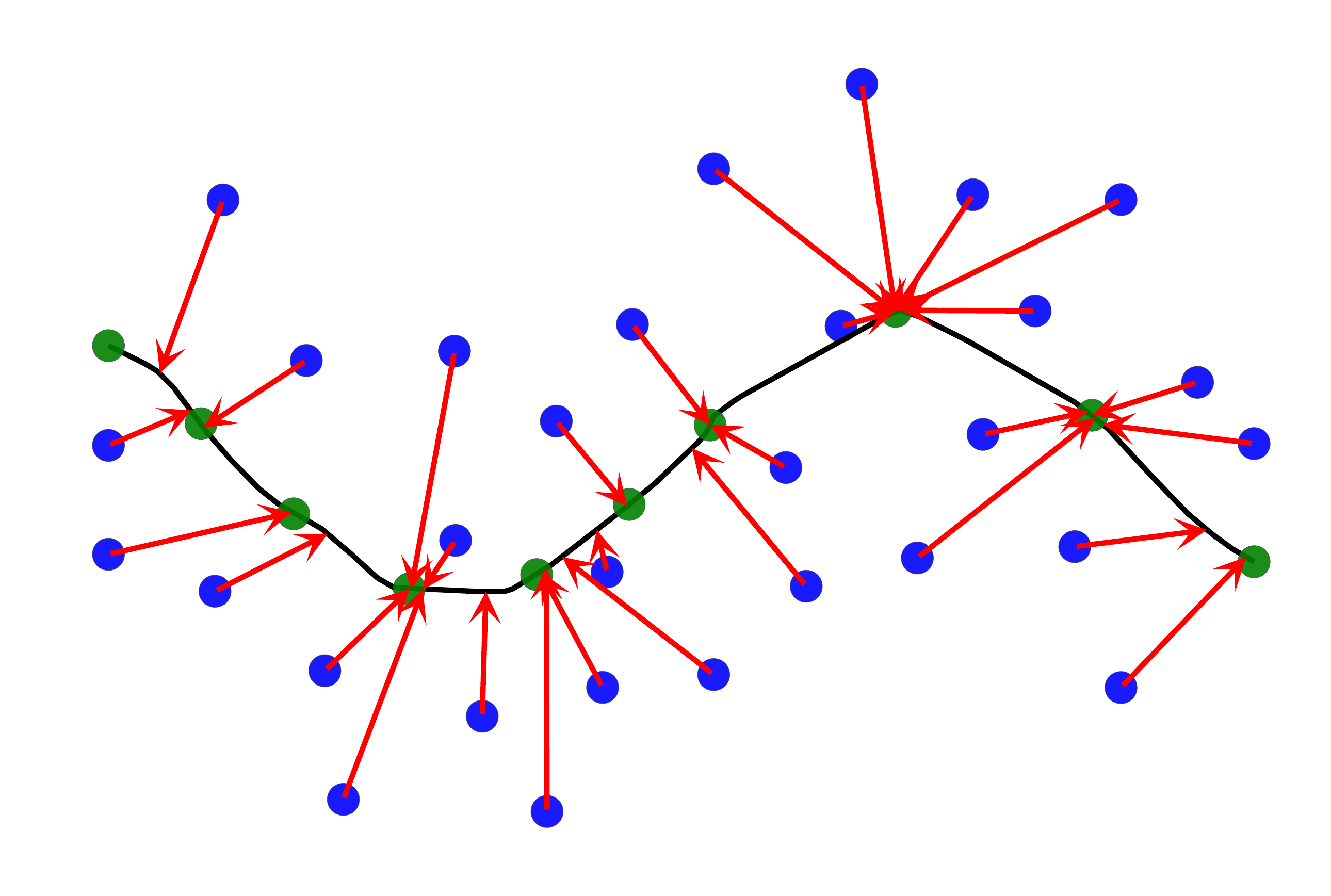}
\end{tabular}\caption{Top: I-AE; bottom: CAE.}\label{tab:teaser}
\end{wraptable} 
In a sense, our method can be seen as a version of denoising/contractive AEs (DAE/CAE, respectively). DAE and CAE promote a projection from the ambient space onto the data manifold, but can distort distances and be non-injective. Locally, using differentials again, projection on the learned manifold means $(\mA\mB)^2=\mA\mB$. Indeed, as can be readily checked conditions (i) and (ii) above imply $\mA(\mB\mA)\mB=\mA\mB$. This means that I-AE also belongs to the same class of DAE/CAE, capturing the variations in tangent directions of the data, $\gM$, while ignoring orthogonal variations which often represent noise \citep{vincent2010stacked,alain2014regularized}.  The benefit in I-AE is that its projection on the data manifold is locally an isometry, preserving distances and sampling the learned manifold evenly. The inset illustrates a simple experiment comparing contractive AE (CAE-bottom) and isometric AE (I-AE-top). Both AEs are trained on the green data points; the red arrows depict projection of points (in blue) in vicinity of the data onto the learned manifold (in black) as calculated by applying the encoder followed by the decoder. Note that CAE indeed projects on the learned manifold but not evenly, tending to shrink space around data points; in contrast I-AE provides a more even sampling of the learned manifold.

Experiments confirm that optimizing the I-AE loss results in a close-to-isometric encoder/decoder explaining the data. We further demonstrate the efficacy of I-AE for dimensionality reduction of different standard datatsets, showing its benefits over manifold learning and other AE baselines.

\section{Related works}

\paragraph{Manifold learning.}
Manifold learning generalize classic dimensionality reduction methods such as PCA \citep{doi:10.1080/14786440109462720} and MDS \citep{kruskal1964multidimensional,sammon1969nonlinear}, by aiming to preserve the local geometry of the data.   \cite{tenenbaum2000global} use the nn-graph to approximate the geodesic distances over the manifold, followed 
by MDS to preserve it in the lower dimension. \cite{roweis2000nonlinear, belkin2002laplacian,donoho2003hessian} use spectral methods to minimize different distortion energy functions over the graph matrix. \cite{coifman2005geometric, coifman2006diffusion} approximate the heat diffusion over the manifold by a random walk over the nn-graph, to gain a robust distance measure on the manifold. Stochastic neighboring embedding algorithms \citep{hinton2003stochastic, maaten2008visualizing} captures the local geometry of the data as a mixture of Gaussians around each data points, and try to find a low dimension mixture model by minimizing the KL-divergence.
In a relatively recent work, \cite{mcinnes2018umap} use iterative spectral and embedding optimization using fuzzy sets. 
Several works tried to adapt classic manifold learning ideas to neural networks and autoencoders.
\cite{pai2019dimal} suggest to embed high dimensional points into a low dimension with a neural network by constructing a metric between pairs of data points and minimizing the metric distortion energy. \cite{kato2019rate} suggest to learn an isometric decoder by using noisy latent variables. They prove under certain conditions that it encourages isometric decoder. \cite{peterfreund2020loca} suggest autoencoders that promote the isometry of the encoder over the data by approximating its differential gram matrix using sample covariance matrix. 
\cite{10.1145/3240508.3240607} encourage distance preserving autoencoders by minimizing metric distortion energy in common feature space.

\paragraph{Modern autoencoders.}
There is an extensive literature on extending autoencoders to a generative model (task (ii) in section \ref{s:intro}). That is, learning a probability distribution in addition to approximating the data manifold $\gM$. Variational autoencoder (VAE) \cite{Kingma2014AutoEncodingVB} and its variants \cite{makhzani2015adversarial,Burda2016ImportanceWA,sonderby2016ladder,higgins2017beta,abc,park2019,zhao2019infovae} are examples to such methods. In essence, these methods augment the AE structure with a learned probabilistic model in the low dimensional (latent) space $\Real^d$ that is used to approximate the probability $P$ that generated the observed data $\gX$.
More relevant to our work, are recent works suggesting regularizers for deterministic autoencoders that together with ex-post density estimation in latent space forms a generative model. \cite{ghosh2020from} suggested to reduce the decoder degrees of freedom, either by regularizing the norm of the decoder weights or the norm of the decoder differential. Other regularizers of the differential of the decoder, aiming towards a deterministic variant of VAE, were recently suggested in \cite{kumar2020implicit,kumar2020regularized}. In contrast to our method, these methods do not regularize the encoder explicitly.



\section{Isometric autoencoders}
\begin{wrapfigure}[6]{r}{0.26\textwidth}
  \begin{center}\vspace{-20pt}
    \includegraphics[width=0.24\textwidth]{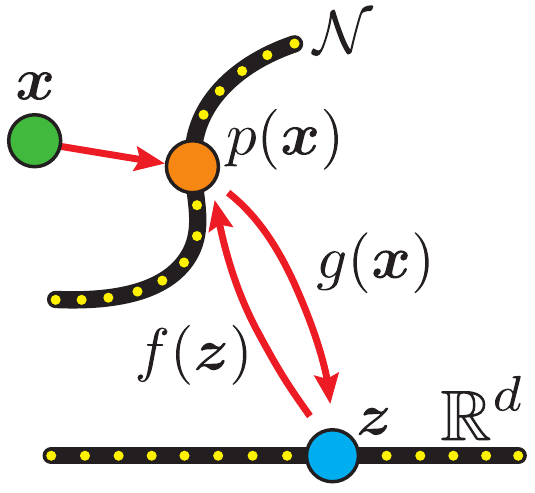}\vspace{-0pt}
  \end{center}
  \caption{I-AE.}\label{fig:setup}
\end{wrapfigure}
We consider high dimensional data points $\gX=\set{\vx_i}_{i=1}^n \subset \Real^D$ sampled from some probability distribution $P(\vx)$ in $\Real^D$ concentrated on or near some $d$ dimensional submanifold $\gM\subset \Real^D$, where $d<D$. 

Our goal is to compute \emph{isometric autoencoder} (I-AE) defined as follows. Let $g:\Real^D\too\Real^d$ denote the encoder, and $f:\Real^d\too\Real^D$ the decoder; $\gN$ is the learned manifold, \ie, the image of the decoder, $\gN=f(\Real^d)$.
I-AE is defined by the following requirements:

\begin{enumerate}[(i)]
\item The data $\gX$ is close to $\gN$.  \label{item:f_include_gX}
\item $f$ is an isometry. \label{item:f_iso}
\item $g$ is the pseudo-inverse of $f$. \label{item:g_piso}
\end{enumerate}

Figure \ref{fig:setup} is an illustration of I-AE. Let $\theta$ denote the parameters of $f$, and $\phi$ the parameters of $g$. We enforce the requirements (\ref{item:f_include_gX})-(\ref{item:g_piso}) by prescribing a loss function $L(\theta,\phi)$ and optimize it using standard stochastic gradient descent (SGD). We next break down the loss $L$ to its different components.

Condition (\ref{item:f_include_gX}) is promoted with the standard reconstruction loss in AE:
\begin{equation}\label{e:recon}
L_{\text{rec}}(\theta,\phi) = \frac{1}{n}\sum_{i=1}^n \norm{f(g(\vx_i))-\vx_i}^2,
\end{equation}
where $\norm{\cdot}$ is the 2-norm. 
%
%


Before handling conditions (\ref{item:f_iso}),(\ref{item:g_piso}) let us first define the notions of isometry and pseudo-inverse. A differentiable mapping $f$ between the euclidean spaces $\Real^d$ and $\Real^D$ is a local isometry if it has an orthogonal differential matrix $df(\vz)\in\Real^{D\times d}$, 
\begin{equation}\label{e:iso}
df(\vz)^T df(\vz) = \mI_d, 
\end{equation}
where $\mI_d\in\Real^{d\times d}$ is the identity matrix, and $df(\vz)_{ij}=\frac{\partial f^i}{\partial z_j}(\vz)$. A local isometry which is also a diffeomorphism is a global isometry. Restricting the decoder to isometry is beneficial for several reasons. First, Nash-Kuiper Embedding Theorem \cite{nash1956imbedding} asserts that non-expansive maps can be approximated arbitrary well with isometries if $D\geq d+1$ and hence promoting an isometry does not limit the expressive power of the decoder. Second, the low dimensional representation of the data computed with an isometric encoder preserves the geometric structure of the data. In particular volume, length, angles and probability densities are preserved between the low dimensional representation $\Real^d$, and the learned manifold $\gN$. Lastly, for a fixed manifold $\gN$ there is a huge space of possible decoders such that $\gN=f(\Real^d)$. For isometric $f$, this space is reduced considerably: Indeed, consider two isometries parameterizing $\gN$, \ie, $f_1,f_2:\Real^d\too \gN$. Then, since composition of isometries is an isometry we have that $f_2^{-1}\circ f_1:\Real^d\too\Real^d$ is a dimension-preserving isometry and hence a rigid motion. That is, all decoders of the same manifold are the same up to a rigid motion.

For the encoder the situation is different. Since $D>d$ the encoder $g$ cannot be an isometry in the standard sense. Therefore we ask $g$ to be the \emph{pseudo-inverse} of $f$. For that end we define the projection operator $\vp$ on a submanifold $\gN\subset \Real^D$ as $$\vp(\vx)=\argmin_{\vx'\in\gN}\norm{\vx-\vx'}.$$
\begin{definition}\label{def:piso}
We say the $g$ is the \emph{pseudo-inverse} of $f$ if $g$ can be written as $g=f^{-1}\circ \vp$, where $\vp$ is the projection on $\gN=f(\Real^d)$. \end{definition}
Consequently, if $g$ is the pseudo-inverse of an isometry $f$ then it extends the standard notion of isometry by projecting every point on a submanifold $\gN$ and then applying an isometry between the $d$-dimensional manifolds $\gN$ and $\Real^d$. See Figure \ref{fig:setup} for an illustration. 

\textbf{First-order characterization.} To encourage $f,g$ to satisfy the (local) isometry and the pseudo-inverse properties (resp.) we will first provide a first-order (necessary) characterization using their differentials:
\begin{theorem}\label{thm:iso_piso}
Let $f$ be a decoder and $g$ an encoder satisfying conditions (\ref{item:f_iso}),(\ref{item:g_piso}). Then their differentials $\mA=df(\vz)\in\Real^{D\times d}$, $\mB=dg(f(\vz))\in\Real^{d\times D}$ satisfy 
\begin{align} \label{e:AtA_I}
    \mA^T \mA &= \mI_d  \\ \label{e:BBt_I}
    \mB\mB^T &= \mI_d \\ \label{e:M_eq_Ainv}
    \mB &= \mA^T
\end{align}
\end{theorem}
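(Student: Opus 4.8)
The plan is to establish the three identities sequentially: \eqref{e:AtA_I} is essentially a restatement of the hypothesis, the bulk of the work goes into one geometric fact about the metric projection $\vp$, and \eqref{e:M_eq_Ainv} then follows by the chain rule, with \eqref{e:BBt_I} as an immediate consequence.

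First, \eqref{e:AtA_I} is immediate, since condition (\ref{item:f_iso}) is exactly the local isometry requirement \eqref{e:iso} evaluated at $\vz$, giving $\mA^T\mA = df(\vz)^T df(\vz) = \mI_d$. The useful consequence I would record is that the columns of $\mA$ form an orthonormal basis of the tangent space $T_{f(\vz)}\gN = \mathrm{col}(\mA)$, so the orthogonal projection of $\Real^D$ onto this tangent space is the symmetric idempotent $\mA\mA^T$.

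The crux, and the step I expect to be the main obstacle, is to show that the differential of the metric projection at a point of $\gN$ equals this orthogonal projector, $d\vp(f(\vz)) = \mA\mA^T$. I would derive it from the first-order optimality condition that defines $\vp(\vx)$ as the nearest point of $\gN$ to $\vx$: at the foot point the residual $\vx - \vp(\vx)$ is normal to $\gN$. Decomposing an infinitesimal displacement of $\vx$ as $\vw = \vw_T + \vw_N$ with $\vw_T\in T_{f(\vz)}\gN$ and $\vw_N$ normal, a tangential displacement moves the foot point by the same amount to first order, so $d\vp(f(\vz))$ acts as the identity on $T_{f(\vz)}\gN$; a purely normal displacement alters the nearest point only at second order, through the curvature of $\gN$, so $d\vp(f(\vz))$ annihilates $\vw_N$. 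Hence $d\vp(f(\vz))$ is precisely the orthogonal projector onto $T_{f(\vz)}\gN$, which by the previous paragraph is $\mA\mA^T$. Making the second-order claim rigorous, \eg\ via the implicit function theorem applied to the normal-equation characterization of $\vp$, is the delicate part.

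With this in hand the remaining identities are routine. Writing $g = f^{-1}\circ\vp$ and applying the chain rule at $f(\vz)$, together with $\vp(f(\vz)) = f(\vz)$, gives $\mB = \mC\,\mA\mA^T$, where $\mC = d(f^{-1})(f(\vz))$. Differentiating the relation $f^{-1}\circ f = \mathrm{id}_{\Real^d}$ yields $\mC\mA = \mI_d$, so $\mB = (\mC\mA)\mA^T = \mA^T$, which is \eqref{e:M_eq_Ainv}. Finally, substituting into $\mB\mB^T$ and using \eqref{e:AtA_I} gives $\mB\mB^T = \mA^T\mA = \mI_d$, establishing \eqref{e:BBt_I} and completing the proof.
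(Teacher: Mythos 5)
Your proposal follows essentially the same route as the paper: \eqref{e:AtA_I} from the definition of isometry, the identification $d\vp(f(\vz))=\mA\mA^T$ as the central geometric fact, and then the chain rule applied to $g=f^{-1}\circ\vp$ to get \eqref{e:M_eq_Ainv} and hence \eqref{e:BBt_I}. Your final step is, if anything, slightly cleaner than the paper's: you extract $\mC\mA=\mI_d$ from $f^{-1}\circ f=\mathrm{id}$ and write $\mB=(\mC\mA)\mA^T$, whereas the paper writes $\mB=\mA^T\mA\mA^T$ after asserting that $df^{-1}$ restricted to $\mathrm{Im}(\mA)$ is $\mA^T$ --- the same computation. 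The one place you diverge is exactly the step you flag as delicate: justifying that a normal displacement perturbs the foot point only to second order. The paper avoids this curvature/implicit-function-theorem analysis entirely by working with the squared distance function $\eta(\vx)=\frac{1}{2}\min_{\vx'\in\gN}\norm{\vx-\vx'}^2$: the envelope theorem gives $\nabla\eta(\vx)=\vx-\vp(\vx)$, so $d\vp(\vx)=\mI_D-\nabla^2\eta(\vx)$, and it then cites a result of Ambrosio that $\nabla^2\eta$ at a point of $\gN$ is the orthogonal projection onto $T_{\vx}\gN^\perp$. So your outline is correct and complete in structure, but to close the argument you would either need to carry out the implicit-function-theorem computation you sketch, or import the envelope-theorem shortcut the paper uses.
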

The theorem asserts that the differentials of the encoder and decoder are orthogonal (rectangular) matrices, and that the encoder is the pseudo-inverse of the differential of the decoder. Before proving this theorem, let us first use it to construct the relevant losses for promoting the isometry of $f$ and pseudo-inverse $g$. We need to promote conditions (\ref{e:AtA_I}), (\ref{e:BBt_I}), (\ref{e:M_eq_Ainv}). Since we want to avoid computing the full differentials $\mA=df(\vz)$, $\mB=dg(f(\vz))$, we will replace (\ref{e:AtA_I}) and (\ref{e:BBt_I}) with stochastic estimations based on the following lemma: denote the unit $d-1$-sphere by $\gS^{d-1}=\set{\vz\in\Real^d \vert \norm{\vz}=1}$.
\begin{lemma}\label{lem:orth}
Let $\mA\in\Real^{D\times d}$, where $d\leq D$. If $ \norm{\mA\vu} = 1$ for all $\vu\in \gS^{d-1}$, then $\mA$ is column-orthogonal, that is $\mA^T\mA=\mI_d$. 
\end{lemma}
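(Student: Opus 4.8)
The plan is to study the Gram matrix $\mM = \mA^T\mA \in \Real^{d\times d}$, which is automatically symmetric and positive semidefinite. First I would rewrite the hypothesis in terms of $\mM$: since $\norm{\mA\vu}^2 = \vu^T\mA^T\mA\vu = \vu^T\mM\vu$, the assumption $\norm{\mA\vu}=1$ for all $\vu\in\gS^{d-1}$ is precisely the statement that the quadratic form $\vu\mapsto\vu^T\mM\vu$ equals $1$ on the unit sphere. The goal then becomes showing $\mM = \mI_d$, which is equivalent to the claimed column-orthogonality $\mA^T\mA=\mI_d$.

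The cleanest route I would take is via the spectral theorem. Because $\mM$ is real symmetric, it admits an orthonormal eigenbasis $\vu_1,\dots,\vu_d$ with real eigenvalues $\lambda_1,\dots,\lambda_d$. Each $\vu_k$ is a unit vector, so $\vu_k\in\gS^{d-1}$ and the hypothesis applies directly: $\lambda_k = \vu_k^T\mM\vu_k = \norm{\mA\vu_k}^2 = 1$. Hence every eigenvalue equals $1$, and a symmetric matrix all of whose eigenvalues are $1$ must be $\mM=\mI_d$, which is the desired conclusion.

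Alternatively, one could avoid eigenvectors entirely. By homogeneity the identity $\vv^T\mM\vv = \norm{\vv}^2$ extends from unit vectors to all $\vv\in\Real^d$ (write $\vv=\norm{\vv}\vu$ with $\vu$ a unit vector), so the quadratic forms of $\mM$ and $\mI_d$ agree everywhere. A polarization argument then forces the associated symmetric bilinear forms to coincide, and since a symmetric matrix is determined by its quadratic form, $\mM=\mI_d$.

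There is no serious obstacle here; the only point that genuinely requires care is invoking the symmetry of $\mM=\mA^T\mA$, which is what lets a statement about the quadratic form on the unit sphere be upgraded to a statement about the whole matrix. (An arbitrary, non-symmetric matrix is not pinned down by its quadratic form—its skew-symmetric part is invisible to $\vv\mapsto\vv^T\mM\vv$—so symmetry is essential.) With symmetry in hand, either argument closes immediately.
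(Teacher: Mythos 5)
Your proof is correct and is essentially the paper's argument in different clothing: the paper diagonalizes via the SVD $\mA=\mU\Sigma\mV^T$ and tests the resulting quadratic form $\sum_i \sigma_i^2 v_i^2$ on the standard basis vectors, which is exactly your spectral-theorem argument applied to $\mM=\mA^T\mA$ (whose eigenvectors are the right singular vectors and whose eigenvalues are the $\sigma_i^2$). Both routes come down to evaluating the constant quadratic form at the principal directions to force every eigenvalue/singular value to equal $1$.
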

Therefore, the isometry promoting loss, encouraging (\ref{e:AtA_I}), is defined by 
\begin{equation}\label{e:loss_iso}
L_{\text{iso}}(\theta)= \E_{\vz,\vu} \Big( \norm{ df(\vz)\vu } - 1\Big )^2,
\end{equation}
where $\vz\sim P_{\text{iso}}(\Real^d)$, and $P_{\text{iso}}(\Real^d)$ is a probability measure on $\Real^d$; $\vu\sim P(\gS^{d-1})$, and $P(\gS^{d-1})$ is the standard rotation invariant probability measure on the $d-1$-sphere $\gS^{d-1}$. 
The pseudo-inverse promoting loss, encouraging (\ref{e:BBt_I}) would be
\begin{equation}\label{e:loss_piso}
L_{\text{piso}}(\phi)= \E_{\vx,\vu} \Big( \norm{ \vu^Tdg(\vx) } - 1\Big )^2,
\end{equation}
where $\vx\sim P(\gM)$ and $\vu\sim P(\gS^{d-1})$. As-usual, the expectation with respect to $P(\gM)$ is 
computed empirically using the data samples $\gX$. 

Lastly, (\ref{e:M_eq_Ainv}) might seem challenging to enforce with neural networks, however the orthogonality of $\mA,\mB$ can be leveraged to replace this loss with a more tractable loss asking the encoder is merely the inverse of the decoder over its image:
\begin{lemma}\label{lem:B_is_A+}
Let $\mA\in\Real^{D\times d}$, and $\mB\in\Real^{d\times D}$. If $\mA^T\mA=\mI_d=\mB\mB^T$ and $\mB\mA=\mI_d$ then $\mB=\mA^+=\mA^T$.
\end{lemma}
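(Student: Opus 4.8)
The plan is to prove the two asserted equalities separately and by purely algebraic means. First I would dispose of $\mA^+ = \mA^T$: the hypothesis $\mA^T\mA = \mI_d$ says $\mA$ has full column rank $d$, so its Moore--Penrose inverse is given by the standard full-rank formula $\mA^+ = (\mA^T\mA)^{-1}\mA^T$, which collapses to $\mA^T$ exactly under this hypothesis. (Equivalently, one checks the four Penrose conditions for the candidate $\mA^T$ directly, each reducing to $\mA^T\mA = \mI_d$.)

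The substance of the lemma is $\mB = \mA^T$, and the cleanest route is to show that the difference $\mC := \mB - \mA^T \in \Real^{d\times D}$ vanishes by computing the $d\times d$ Gram matrix $\mC\mC^T$. Expanding gives
\begin{equation*}
\mC\mC^T = \mB\mB^T - \mB\mA - \mA^T\mB^T + \mA^T\mA .
\end{equation*}
Here $\mB\mB^T = \mI_d$, $\mA^T\mA = \mI_d$, and $\mB\mA = \mI_d$ are the three hypotheses, while the remaining cross term is handled by the transpose identity $\mA^T\mB^T = (\mB\mA)^T = \mI_d^T = \mI_d$. Each of the four summands is therefore $\mI_d$, and with the signs as written they cancel: $\mC\mC^T = \mI_d - \mI_d - \mI_d + \mI_d = 0$, the $d\times d$ zero matrix. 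Finally I would conclude $\mC = 0$ from $\mC\mC^T = 0$, using that the $i$-th diagonal entry of $\mC\mC^T$ is $\sum_j \mC_{ij}^2$, so a vanishing diagonal forces every entry of $\mC$ to be zero. This yields $\mB = \mA^T$, and combined with the first step, $\mB = \mA^+ = \mA^T$.

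I do not expect a genuine obstacle in this lemma; it is a one-line Gram-matrix computation, and the only thing requiring care is the bookkeeping of signs and the recognition of the identity $\mA^T\mB^T = (\mB\mA)^T = \mI_d$, which is precisely what makes all four terms collapse. For intuition it is worth noting the geometry the computation secretly encodes: $\mA^T\mA = \mI_d$ makes $\mA$ an isometric embedding of $\Real^d$ onto $V := \mathrm{col}(\mA)$; the relation $\mB\mA = \mI_d$ forces $\mB$ and $\mA^T$ to agree on $V$; and $\mB\mB^T = \mI_d$ (all singular values of $\mB$ equal one) forces $V = \mathrm{row}(\mB)$, so that both $\mB$ and $\mA^T$ annihilate $V^\perp$. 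The matrix identity $\mC\mC^T = 0$ simply packages this orthogonal splitting of $\Real^D$ into a single line.
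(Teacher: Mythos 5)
Your proof is correct, and it takes a genuinely different (and arguably slicker) route than the paper's. The paper completes $\mA$ to an orthogonal matrix $\mU=[\mA,\mV]$ of $\Real^{D\times D}$, expands $\mI_d=\mB\mU\mU^T\mB^T=\mB\mA\mA^T\mB^T+\mB\mV\mV^T\mB^T=\mI_d+\mB\mV\mV^T\mB^T$ to conclude from positive semidefiniteness that $\mB\mV=0$, and then checks $\mB\mU=\mA^T\mU$ column-block by column-block; the geometric content (that $\mB$ annihilates the orthogonal complement of $\mathrm{col}(\mA)$) is carried explicitly by the auxiliary matrix $\mV$. You instead form $\mC=\mB-\mA^T$ and observe that all four terms of $\mC\mC^T$ equal $\mI_d$ by the three hypotheses together with $\mA^T\mB^T=(\mB\mA)^T$, so $\mC\mC^T=0$ and hence $\mC=0$ since its diagonal entries are the squared row norms of $\mC$. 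Your version avoids invoking the existence of an orthogonal completion and needs no positive-semidefiniteness argument beyond the trivial one at the very end, at the cost of hiding the orthogonal-splitting picture inside the algebra (which your closing intuition paragraph correctly recovers). Your handling of $\mA^+=\mA^T$ via the full-column-rank formula $(\mA^T\mA)^{-1}\mA^T$ is also fine; the paper gets this for free from $\mB=\mA^T$ and $\mB\mV=0$. Both arguments are complete and correct.
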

Fortunately, this is already taken care of by the reconstruction loss: since low reconstruction loss in \eqref{e:recon} forces the encoder and the decoder to be the inverse of one another over the data manifold, \ie $g(f(\vz))=\vz$, it encourages  $\mB\mA=\mI_d$ and therefore, by Lemma \ref{lem:B_is_A+}, automatically encourages \eqref{e:M_eq_Ainv}. Note that invertability also implies bijectivity of the encoder/decoder restricted to the data manifold, pushing for global isometries (rather than local). 
%
%
Summing all up, we define our loss for I-AE by 
\begin{equation}\label{eq:loss}
L(\theta,\phi) = L_{\text{rec}}(\theta,\phi) +  \lambda_{\text{iso}} \parr{L_{\text{iso}}(\theta) + L_{\text{piso}}(\phi)},
\end{equation}
where $\lambda_{\text{iso}}$ is a parameter controlling the isometry-reconstruction trade-off. 

\subsection{Details and proofs.} 
Let us prove Theorem \ref{thm:iso_piso} characterizing the relation of the differentials of isometries and pseudo-isometries, $\mA=df(\vz)\in\Real^{D\times d}$,  $\mB=dg(f(\vz))\in\Real^{d\times D}$. First, by definition of isometry (\eqref{e:iso}), $\mA^T\mA=\mI_d$.  
We denote by $T_\vx \gN$ the $d$-dimensional tangent space to $\gN$ at $\vx\in\gN$; accordingly, $T_{\vx}\gN^\perp$ denotes the normal tangent space.
\begin{lemma}\label{lem:dp}
The differential $d\vp(\vx)\in\Real^{D\times D}$ at $\vx\in\gN$ of the projection operator $\vp:\Real^D\too \gN$ is 
\begin{equation}
d\vp(\vx)\vu = \begin{cases} \vu & \vu \in T_{\vx}\gN \\ 0 & \vu\in  T_{\vx}\gN^\perp  \end{cases} \end{equation}
That is, $d\vp(\vx)$ is the orthogonal projection on the tangent space of $\gN$ at $\vx$.
\end{lemma}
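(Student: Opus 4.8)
The plan is to combine a short geometric argument for the tangential directions with an implicit-function-theorem computation for the normal directions, the latter in fact yielding the full formula at once. First note that for $\vx\in\gN$ we have $\vp(\vx)=\vx$, since the nearest point of $\gN$ to a point already lying on $\gN$ is the point itself; hence $\vp$ fixes $\gN$ pointwise and $d\vp(\vx)$ is meaningful there.

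For $\vu\in T_\vx\gN$ I would pick a smooth curve $\gamma:(-\eps,\eps)\too\gN$ with $\gamma(0)=\vx$ and $\gamma'(0)=\vu$. Since $\gamma(t)\in\gN$ for all $t$, the fixed-point property gives $\vp(\gamma(t))=\gamma(t)$, and differentiating at $t=0$ yields $d\vp(\vx)\vu=\gamma'(0)=\vu$. Thus $d\vp(\vx)$ restricts to the identity on $T_\vx\gN$, and because $\vp$ takes values in $\gN$ its image lies in $T_\vx\gN$; so $d\vp(\vx)$ is already \emph{some} projection onto the tangent space. The remaining content is to show it is the \emph{orthogonal} one, \ie, that its kernel is $T_\vx\gN^\perp$.

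For this I would use the first-order optimality characterization of the projection. Locally parametrize $\gN$ by the decoder $f$ and write $\vp(\vy)=f(\vz^*(\vy))$, where $\vz^*(\vy)=\argmin_\vz\norm{\vy-f(\vz)}^2$ satisfies the stationarity condition $df(\vz^*(\vy))^T(\vy-f(\vz^*(\vy)))=0$. Applying the implicit function theorem to $G(\vy,\vz)=df(\vz)^T(\vy-f(\vz))$ at $\vx=f(\vz_0)\in\gN$, the key simplification is that the residual $\vy-f(\vz^*)$ vanishes on $\gN$, so the second-derivative (Hessian-of-$f$) contribution to $\partial_\vz G$ drops out and one is left with $\partial_\vz G=-\mA^T\mA$ and $\partial_\vy G=\mA^T$, where $\mA=df(\vz_0)$. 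Hence $d\vz^*(\vx)=(\mA^T\mA)^{-1}\mA^T$, and the chain rule gives
\begin{equation*}
d\vp(\vx)=\mA\,d\vz^*(\vx)=\mA(\mA^T\mA)^{-1}\mA^T,
\end{equation*}
which is manifestly the orthogonal projection onto $\mathrm{col}(\mA)=T_\vx\gN$: it acts as the identity on the tangent space and annihilates $T_\vx\gN^\perp$, exactly the claimed case distinction. (When $f$ is an isometry, $\mA^T\mA=\mI_d$ and this is simply $\mA\mA^T$.)

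The main obstacle is analytic rather than algebraic: one must justify that $\vp$ is well defined and differentiable in a neighborhood of $\vx\in\gN$. This is where smoothness of $\gN$ enters — on a tubular neighborhood of the manifold the nearest-point projection is unique and smooth, which is what legitimizes both the curve argument and the implicit function theorem step. I would state this regularity as the standing assumption and then the differentiation is routine; the only point to be careful about is verifying that the Hessian term in $\partial_\vz G$ genuinely vanishes, which happens precisely because the residual is zero on $\gN$.
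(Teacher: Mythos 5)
Your argument is correct, but it takes a genuinely different route from the paper. The paper works with the squared distance function $\eta(\vx)=\frac{1}{2}\min_{\vx'\in\gN}\norm{\vx-\vx'}^2$, applies the envelope theorem to get $\nabla\eta(\vx)=\vx-\vp(\vx)$, hence $d\vp(\vx)=\mI_D-\nabla^2\eta(\vx)$, and then \emph{cites} an external result (Ambrosio--Soner, Theorem 3.1) for the fact that $\nabla^2\eta(\vx)$ is the orthogonal projection onto $T_\vx\gN^\perp$ at points of $\gN$. You instead differentiate the first-order optimality condition $df(\vz^*)^T(\vy-f(\vz^*))=0$ via the implicit function theorem; the vanishing of the residual on $\gN$ kills the Hessian-of-$f$ term and yields $d\vp(\vx)=\mA(\mA^T\mA)^{-1}\mA^T$ directly. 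Your version is self-contained where the paper's is not, and it produces the explicit matrix formula that the paper actually needs in the proof of Theorem 1 (where it specializes to $\mA\mA^T$ for isometric $f$); the price is that you must assume $f$ is an immersion so that $\mA^T\mA$ is invertible, whereas the paper's argument is phrased intrinsically in terms of the submanifold. Both approaches rest on the same regularity input --- uniqueness and smoothness of the nearest-point projection on a tubular neighborhood --- which you correctly identify as the real analytic content. Your separate curve argument for the tangential directions is redundant given the IFT computation, as you note, but it does no harm as a consistency check.
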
 
\begin{proof}
First, consider the squared distance function to $\gN$ defined by $\eta(\vx)=\frac{1}{2}\min_{\vx'\in\gN}\norm{\vx-\vx'}^2$. The envelope theorem implies that $\nabla \eta(\vx) = \vx-\vp(\vx)$. Differentiating both sides and rearranging we get $d\vp(\vx)=\mI_D - \nabla^2 \eta(\vx)$. As proved in \cite{ambrosio1994level} (Theorem 3.1), $\nabla^2 \eta(\vx)$ is the orthogonal projection on $T_{\vx}\gN^\perp$.
\end{proof}

Let $\vx=f(\vz)\in \gN$. Since $\vx\in\gN$ we have $\vp(\vx)=\vx$. Condition (\ref{item:g_piso}) asserts that $g(\vy)=f^{-1}(\vp(\vy))$; taking the derivative at $\vy=\vx$ we get $dg(\vx)=df^{-1}(\vx)d\vp(\vx)$. Lemma \ref{lem:dp} implies that $d\vp(\vx) = \mA \mA^T$, since $\mA\mA^T$ is the orthogonal projection on $\gT_\vx \gN$. Furthermore, $df^{-1}(\vx)$ restricted to $\mathrm{Im}(\mA)$ is $\mA^T$. Putting this together we get $\mB=dg(\vx)=\mA^T\mA\mA^T=\mA^T$. This implies that $\mB\mB^T=\mI_d$, and that $\mB=\mA^+=\mA^T$. This concludes the proof of Theorem \ref{thm:iso_piso}. \qed



\begin{proof}[Proof of Lemma \ref{lem:orth}.]
Writing the SVD of $\mA=\mU\Sigma\mV^T$, where $\Sigma=\diag(\sigma_1,\ldots,\sigma_d)$ are the singular values of $\mA$, we get that $\sum_{i=1}^d \sigma_i^2 \vv_i^2 = 1$ for all $\vv\in \gS^{d-1}$. Plugging $\vv=\ve_j$, $j\in[d]$ (the standard basis) we get that all $\sigma_i=1$ for $i\in[d]$ and $\mA=\mU\mV^T$ is orthogonal as claimed. 
\end{proof}

\begin{proof}[Proof of Lemma \ref{lem:B_is_A+}.]
Let $\mU=[\mA,\mV]$, $\mV\in\Real^{D\times(D-d)}$, be a completion of $\mA$ to an orthogonal matrix in $\Real^{D\times D}$. Now, $\mI_d = \mB\mU \mU^T \mB^T = \mI_d + \mB\mV\mV^T\mB^T$, and since $\mB\mV\mV^T\mB^T\succeq 0$ this means that $\mB\mV=0$, that is $\mB$ takes to null the orthogonal space to the column space of $\mA$. A direct computation shows that $\mB\mU=\mA^T\mU$ which in turn implies $\mB=\mA^T=\mA^+$.
\end{proof}

\paragraph{Implementation.}
Implementing the losses in \eqref{e:loss_iso} and  \eqref{e:loss_piso} requires making a choice for the probability densities and approximating the expectations. We take $P_{\text{iso}}(\Real^d)$ to be either uniform or gaussian fit to the latent codes $g(\gX)$; and $P(\gM)$ is approximated as the uniform distribution on $\gX$, as mentioned above. The expectations are estimated using Monte-Carlo sampling. That is, at each iteration we draw samples $\hat{\vx}\in \gX$, $\hat{\vz}\sim P_{\text{iso}}(\Real^d)$, $\hat{\vu}\sim P(\gS^{d-1})$ and use the approximations
\begin{align*}
L_{\text{iso}}(\theta) &\approx\big(\norm{df(\hat{\vz})\hat{\vu}}-1\big)^2\\
L_{\text{piso}}(\phi) &\approx \big(\norm{\hat{\vu}^T dg(\hat{\vx})}-1\big)^2
\end{align*}
The right differential multiplication $df(\hat{\vz})\hat{\vu}$ and left differential multiplication $\hat{\vu}^Tdg(\hat{\vx})$ are computed using forward and backward mode automatic differentiation (resp.). Their derivatives with respect to the networks' parameters $\theta,\phi$ are computed by another backward mode automatic differentiation. 




 

\section{Experiments}

\subsection{Evaluation}
We start by evaluating the effectiveness of our suggested I-AE regularizer, addressing the following questions: (i) does our suggested loss $L\left(\theta,\phi\right)$ in \eqref{eq:loss} drives I-AE training to converge to an isometry? (ii) What is the effect of the $L_{\text{piso}}$ term? In particular, does it encourage better manifold approximations as conjectured?
To that end, we examined the I-AE training on data points $\gX$ sampled uniformly from 3D surfaces with known global parameterizations. Figure \ref{fig:evaluation} shows qualitative comparison of the learned embeddings for various AE regularization techniques: Vanilla autoencoder (AE); Contractive autoencoder (CAE) \citep{Rifai2011ContractiveAE}; Contractive autoencoder with decoder weights tied to the encoder weights (TCAE) \citep{rifai2011learning}; Gradient penalty on the decoder (RAE-GP) \citep{ghosh2020from}; and Denoising autoencoder with gaussian noise (DAE) \citep{vincent2010stacked}. For fairness in evaluation, all methods were trained using the same training hyper-parameters. See Appendix for the complete experiment details including mathematical formulation of the different AE regularizers. In addition, we compared versus popular classic manifold learning techniques: U-MAP \citep{mcinnes2018umap}, t-SNE \citep{maaten2008visualizing} and LLE.  \citep{roweis2000nonlinear}. The results demonstrate that I-AE is able to learn an isometric embedding, showing some of the advantages in our method: sampling density and distances between input points is preserved in the learned low dimensional space.

\begin{wraptable}[6]{r}{0.51\columnwidth}
\centering\vspace{-5pt}
\resizebox{0.5\textwidth}{!}{    
\begin{tabular}{lrrrrrr} 
                & I-AE                & CAE & TCAE & RAE-GP & DAE & AE \\
    \midrule
    S Shape    & \textbf{0.03}      & 0.36 & 0.26 & 1.22 & 2.53 & 1.85   \\
    Swiss Roll    & \textbf{0.02}      & 1.00 & 0.38 & 1.75 & 1.80 & 1.63   \\
    Open Sphere   & \textbf{0.07} & 0.21 & 0.21 & 0.50 & 1.09 & 1.29  \\
    \end{tabular} 
    }
    \caption{Std of $\left\{l_{ij}\right\}$.}
    \label{tab:std}
\end{wraptable}
In addition, for the AE methods, we quantitatively evaluate how close is the learnt decoder to an isometry. For this purpose, we triangulate a grid of planar points $\left\{\vz_i\right\}\subset \Real^2$. We denote by $\left\{e_{ij}\right\}$ the triangles edges incident to grid points $\vz_i$ and $\vz_j$. Then, we measured the edge lengths ratio, $l_{ij} = {\norm{f\left(\vz_i\right) - f\left(\vz_i\right)}}/{\norm{e_{ij}}}$ 
expected to be $\approx 1$ for all edges $e_{ij}$ in an isometry.
In Table \ref{tab:std} we log the standard deviation (Std) of $\left\{l_{ij}\right\}$ for I-AE compared to other regularized AEs. For a fair comparison, we scaled $\vz_i$ so the mean of $l_{ij}$ is $1$ in all experiments. As can be seen in the table, the distribution of $\left\{l_{ij}\right\}$ for I-AE is significantly more concentrated than the different AE baselines. 

\begin{wrapfigure}[9]{r}{0.4\textwidth}
  \begin{center}\vspace{-22pt}
    \includegraphics[width=0.4\textwidth]{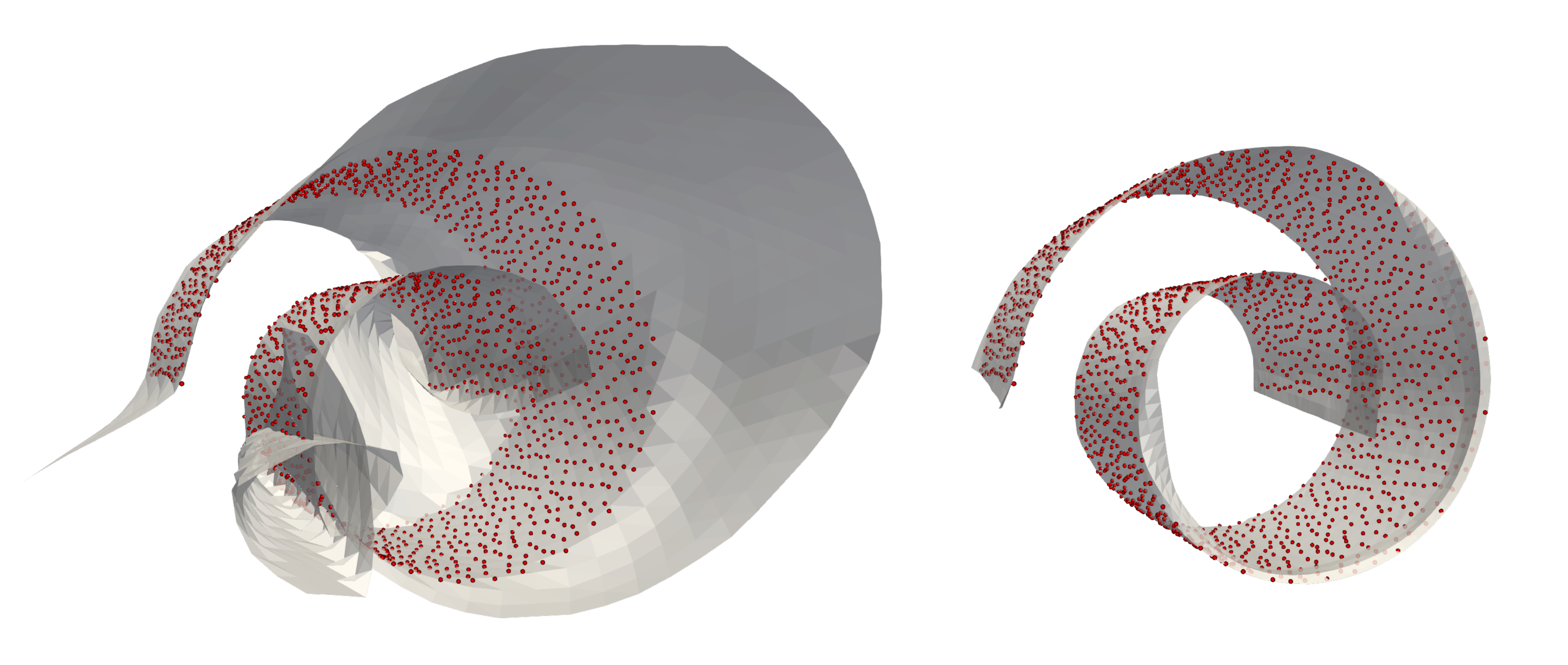}
  \end{center}
  \vspace{-15pt}
  \caption{Decoder surfaces without $L_{\text{piso}}$ (left) and with (right).}\label{fig:no_piso}
  \vspace{-20pt}
\end{wrapfigure}
Finally, although $L_{\text{iso}}$ is already responsible for learning an isometric decoder, the pseudo-inverse encoder (enforced by the loss $L_{\text{piso}}$) helps it converge to simpler solutions. We ran AE training with and without the $L_{\text{piso}}$ term. 
Figure \ref{fig:no_piso} shows in gray the learnt decoder surface, $\gN$, without $L_{\text{piso}}$ (left), containing extra (unnatural) surface parts compared to the learnt surface with $L_{\text{piso}}$ (right). 
In both cases we expect (and achieve) a decoder approximating an isometry that passes through the input data points. Nevertheless, the pseudo-inverse loss restricts some of the degrees of freedom of the encoder which in turn leads to a simpler solution.

\begin{figure}[t]
    \centering
    \setlength\tabcolsep{0.0pt}
    \begin{tabular}{c|cccccc|ccc}
        3D Data& \textbf{I-AE} &
        CAE& TCAE &
        RAE-GP&
        \shortstack{DEA}&
        AE&
        U-MAP&
        t-SNE&
        LLE\\
        \hline
        \includegraphics[width=0.1\textwidth]{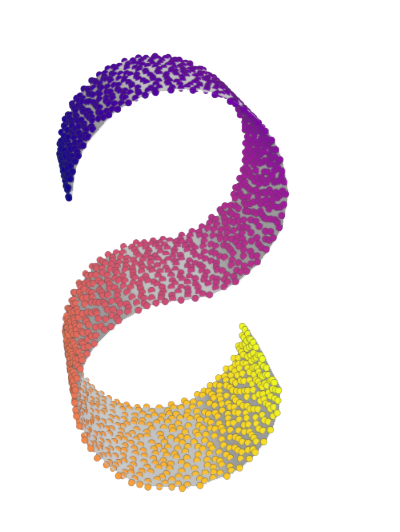}&
        \includegraphics[width=0.1\textwidth]{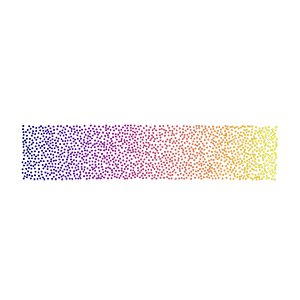}&
        \includegraphics[width=0.1\textwidth]{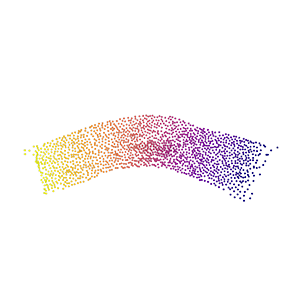}&
        \includegraphics[width=0.1\textwidth]{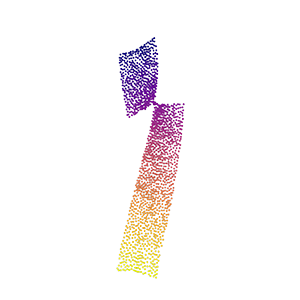}&
        \includegraphics[width=0.1\textwidth]{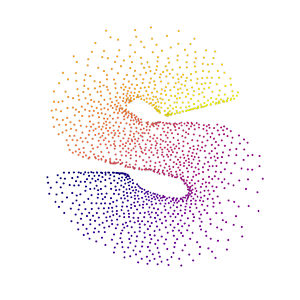}&
        \includegraphics[width=0.1\textwidth]{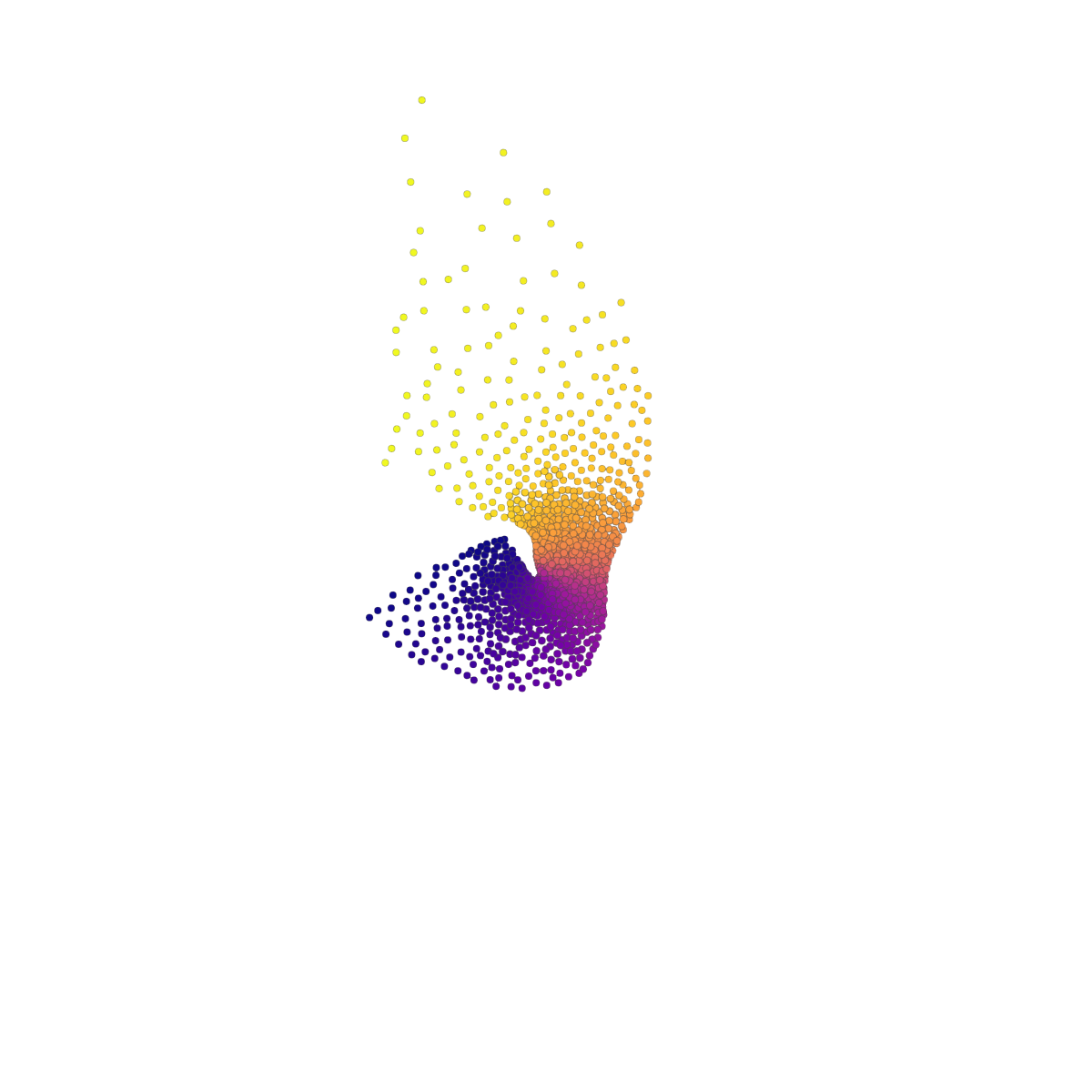}&
        \includegraphics[width=0.1\textwidth]{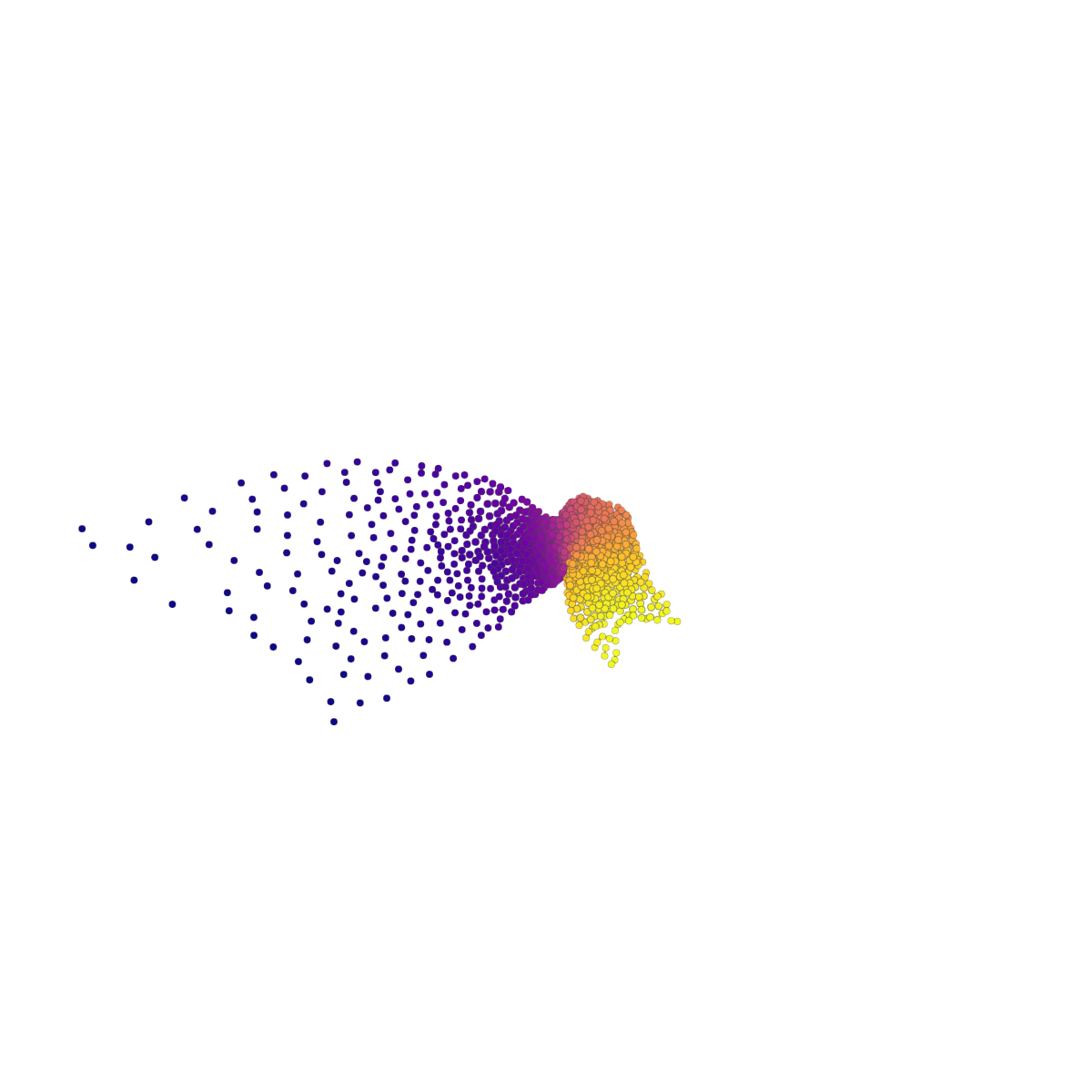}&
        \includegraphics[width=0.1\textwidth]{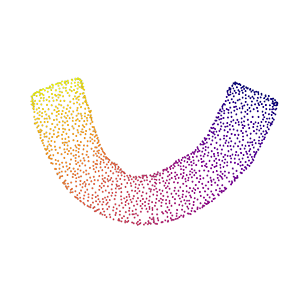}&
        \includegraphics[width=0.1\textwidth]{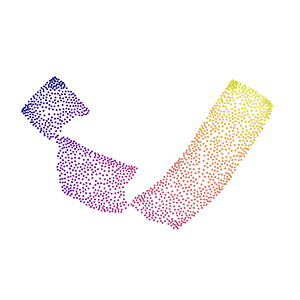}&
        \includegraphics[width=0.1\textwidth]{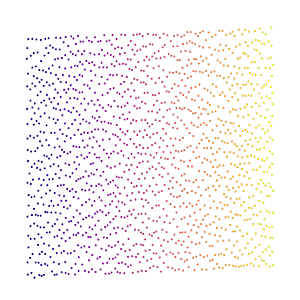}
        \\
        \includegraphics[width=0.1\textwidth]{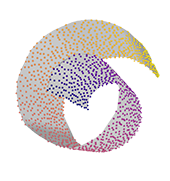}&
        \includegraphics[width=0.1\textwidth]{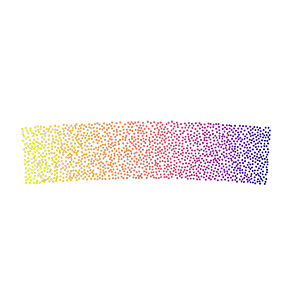}&
        \includegraphics[width=0.1\textwidth]{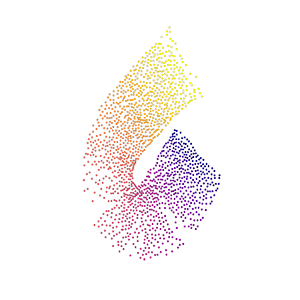}&
        \includegraphics[width=0.1\textwidth]{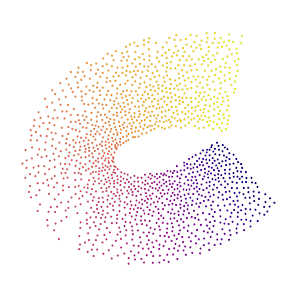}&
        \includegraphics[width=0.1\textwidth]{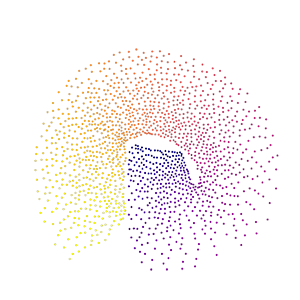}&
        \includegraphics[width=0.1\textwidth]{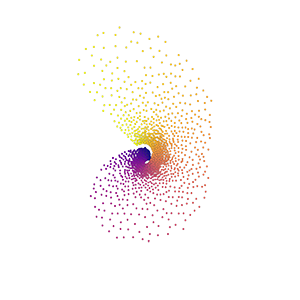}&
        \includegraphics[width=0.1\textwidth]{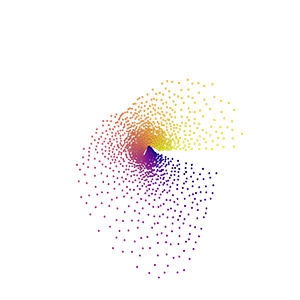}&
        \includegraphics[width=0.1\textwidth]{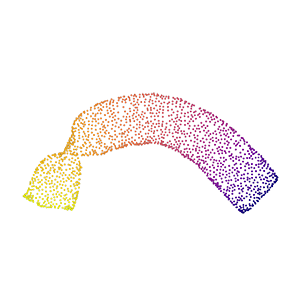}&
        \includegraphics[width=0.1\textwidth]{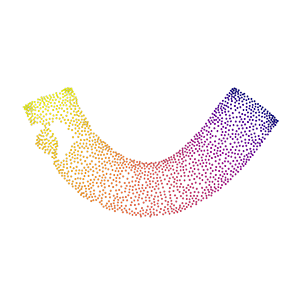}&
        \includegraphics[width=0.1\textwidth]{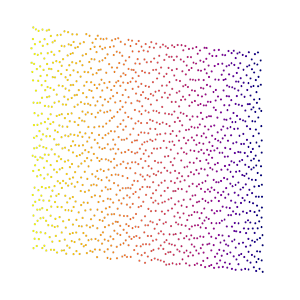}
        \\
        \includegraphics[width=0.1\textwidth]{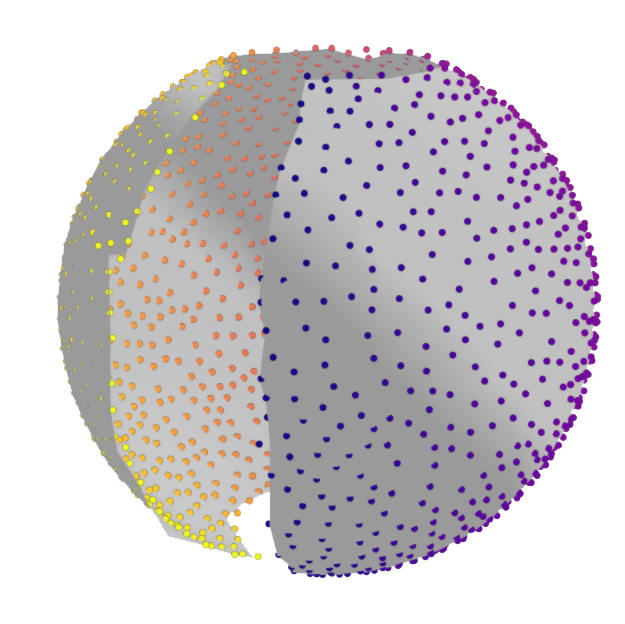}&
        \includegraphics[width=0.1\textwidth]{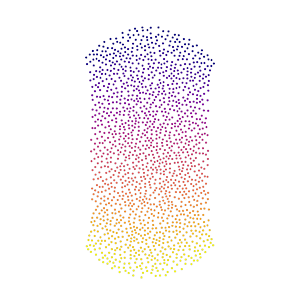}&
        \includegraphics[width=0.1\textwidth]{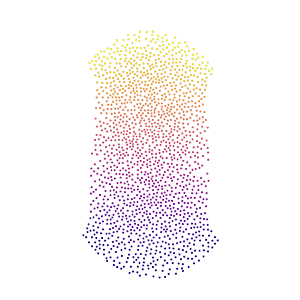}&
        \includegraphics[width=0.1\textwidth]{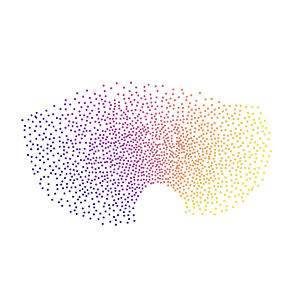}&
        \includegraphics[width=0.1\textwidth]{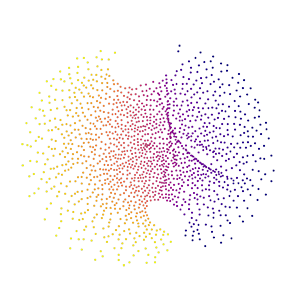}&
        \includegraphics[width=0.1\textwidth]{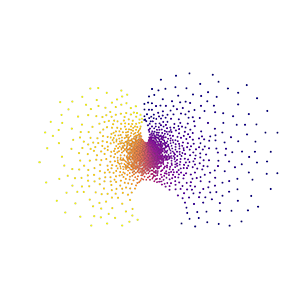}&
        \includegraphics[width=0.1\textwidth]{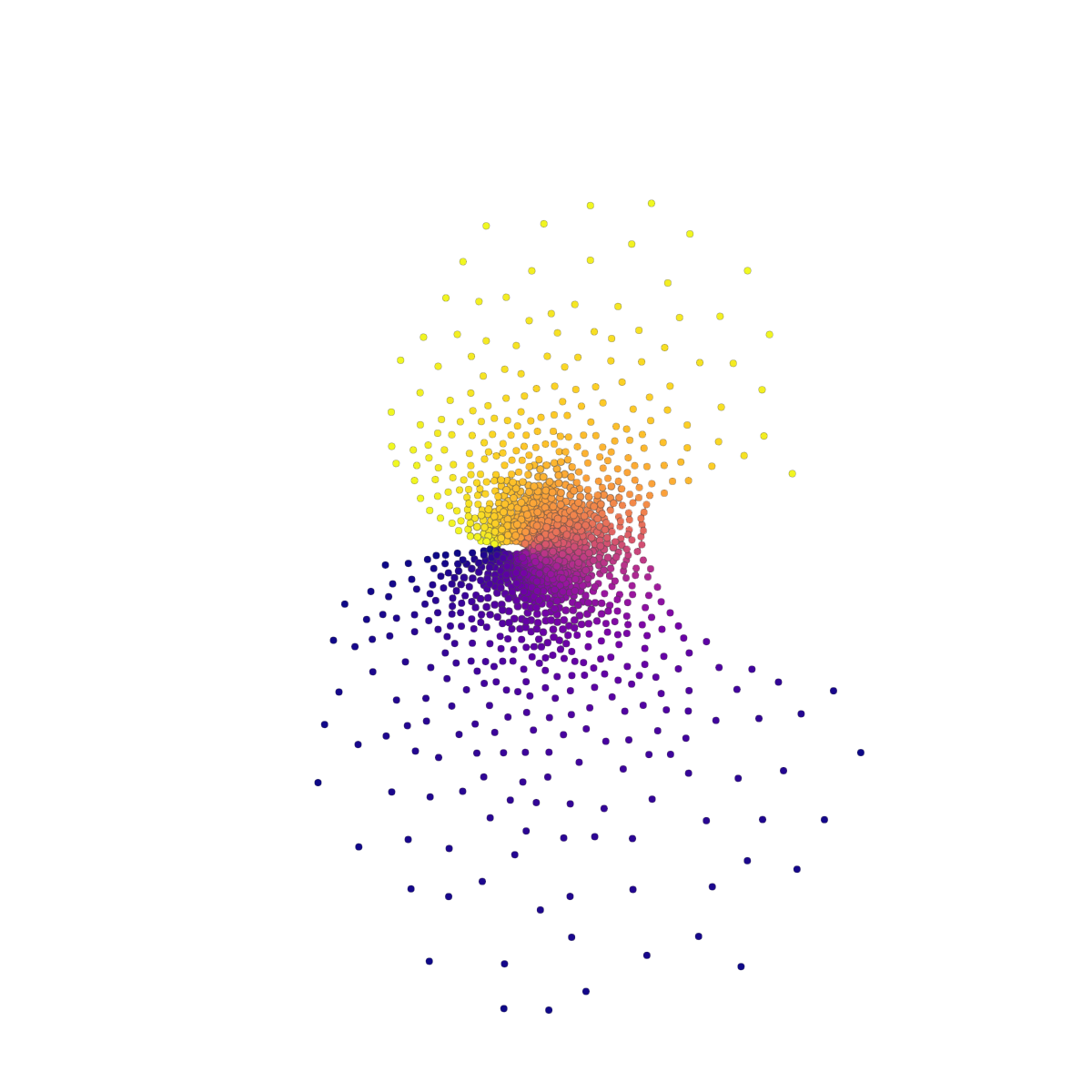}&
        \includegraphics[width=0.1\textwidth]{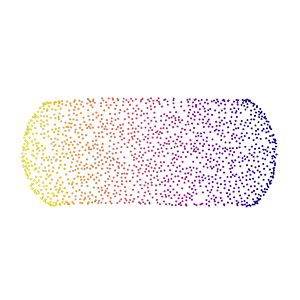}&
        \includegraphics[width=0.1\textwidth]{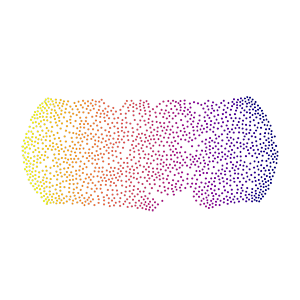}&
        \includegraphics[width=0.1\textwidth]{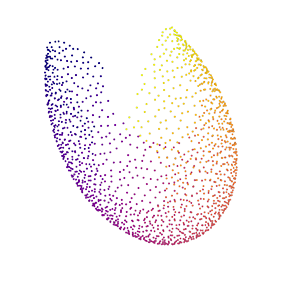}
    \end{tabular}
    
    \caption{Evaluation of $3D\too 2D$ embeddings. }
    \label{fig:evaluation}
\end{figure}
\subsection{Data visualization}\label{ss:visulaization}

\begin{figure}[th!]
    \centering
    \setlength\tabcolsep{0.3pt}
    \begin{tabular}{ccccc|cc}
        \textbf{I-AE} &
        CAE &
        RAE-GP&
        DAE&
        AE&
        U-MAP&
        t-SNE \\
        
        \hline
        \includegraphics[width=0.14\textwidth]{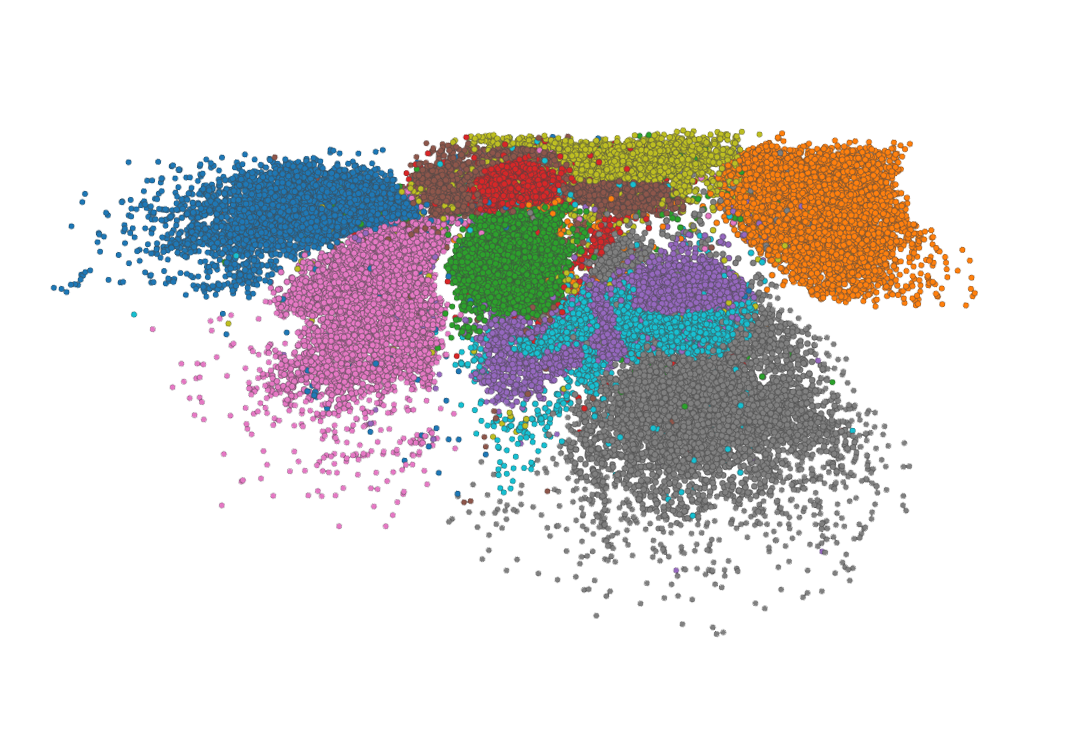}&
        \includegraphics[width=0.14\textwidth]{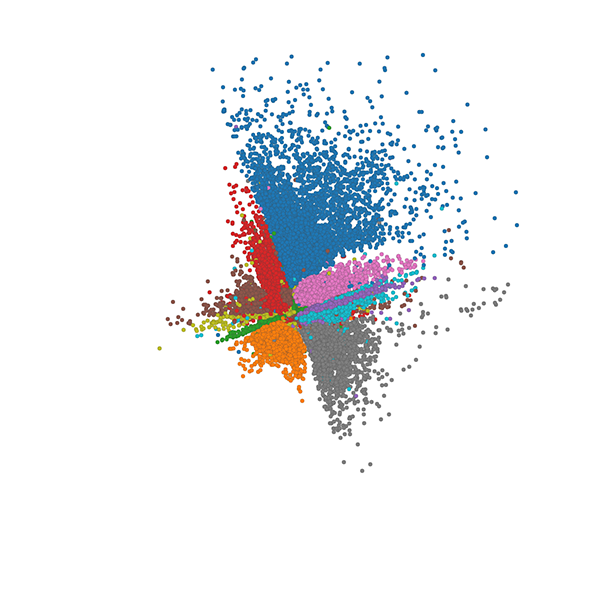}&
        \includegraphics[width=0.14\textwidth]{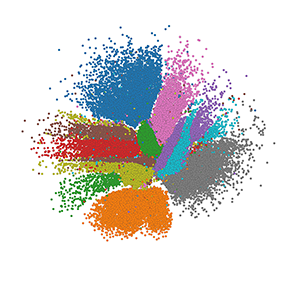}&
        \includegraphics[width=0.14\textwidth]{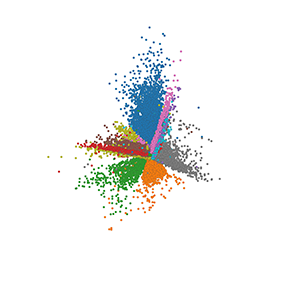}&
        \includegraphics[width=0.14\textwidth]{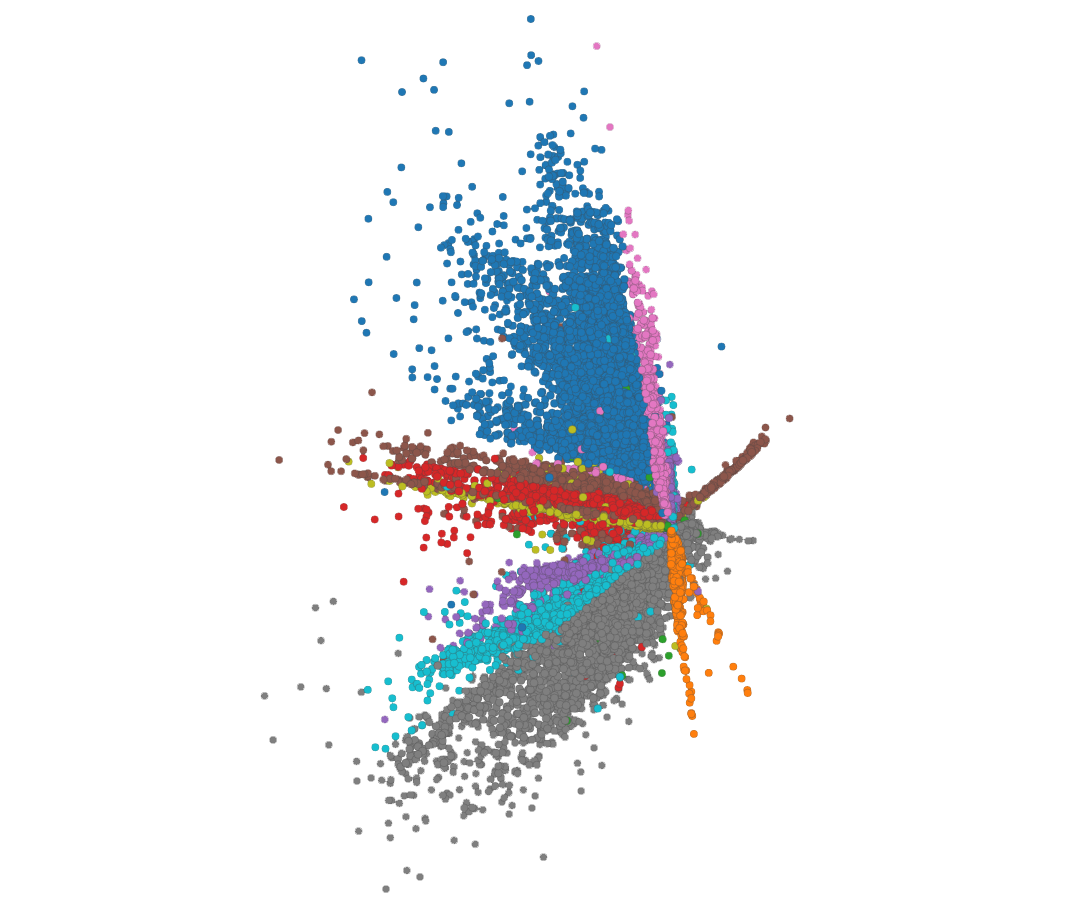}&
        \includegraphics[width=0.14\textwidth]{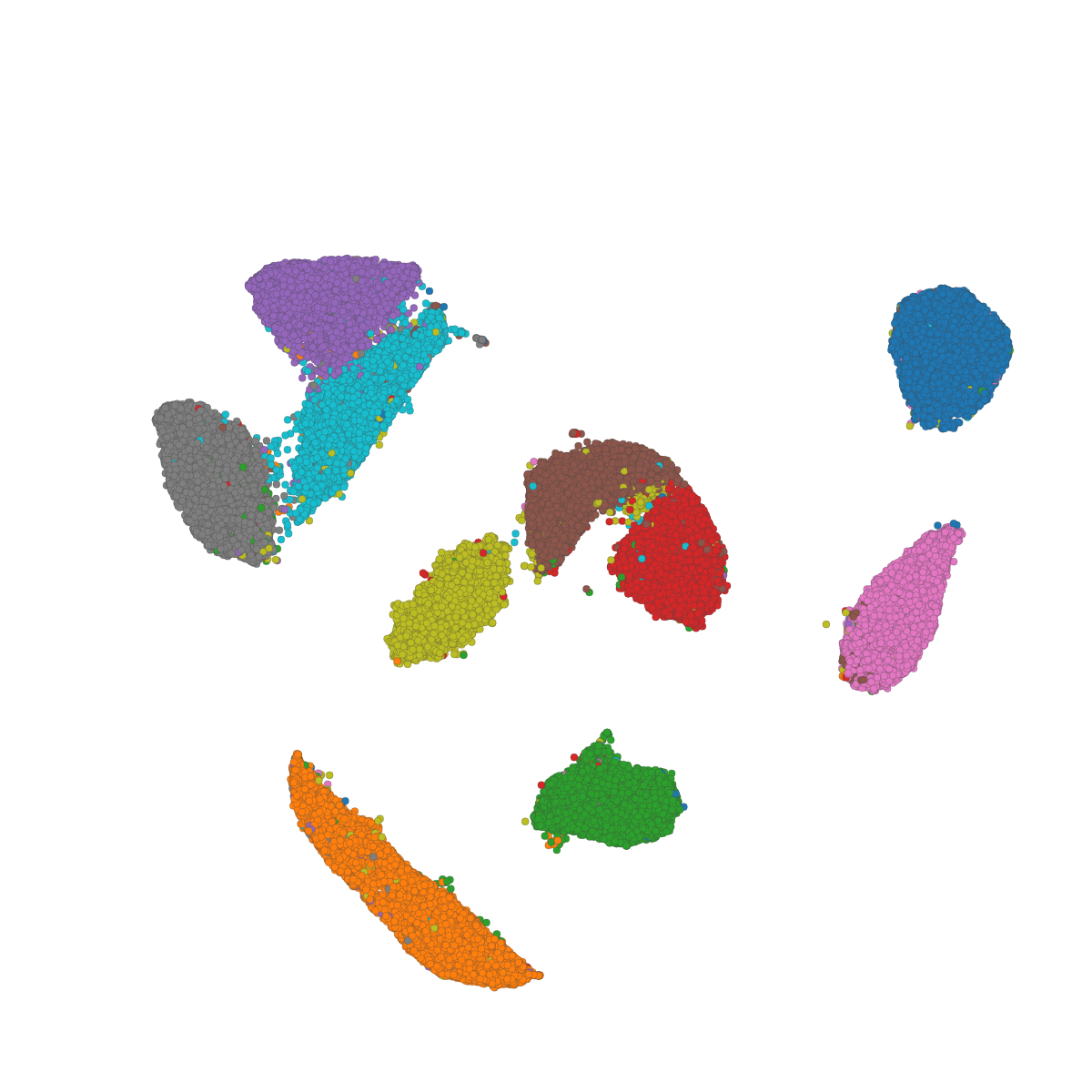}&
        \includegraphics[width=0.14\textwidth]{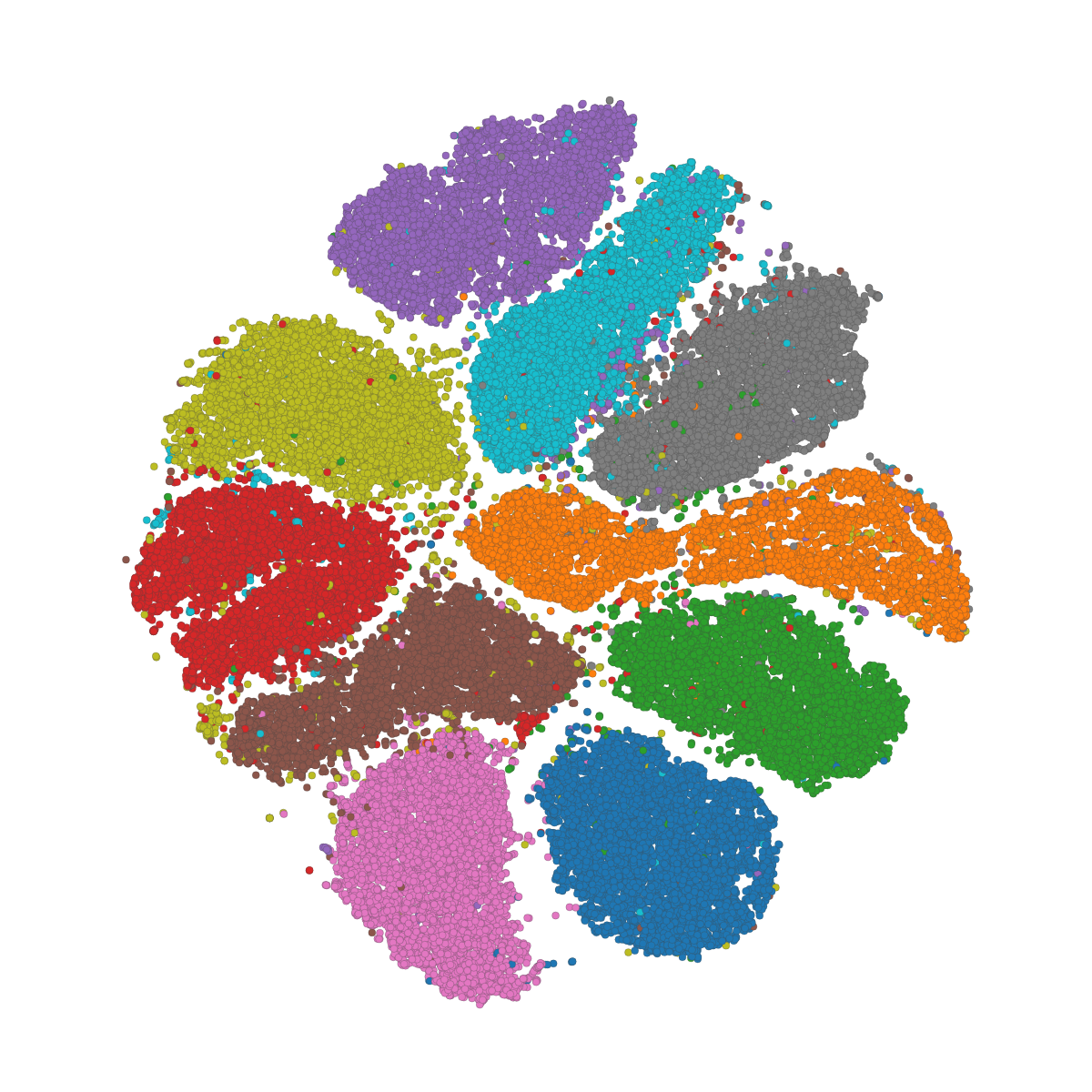}
        \\
        \includegraphics[width=0.14\textwidth]{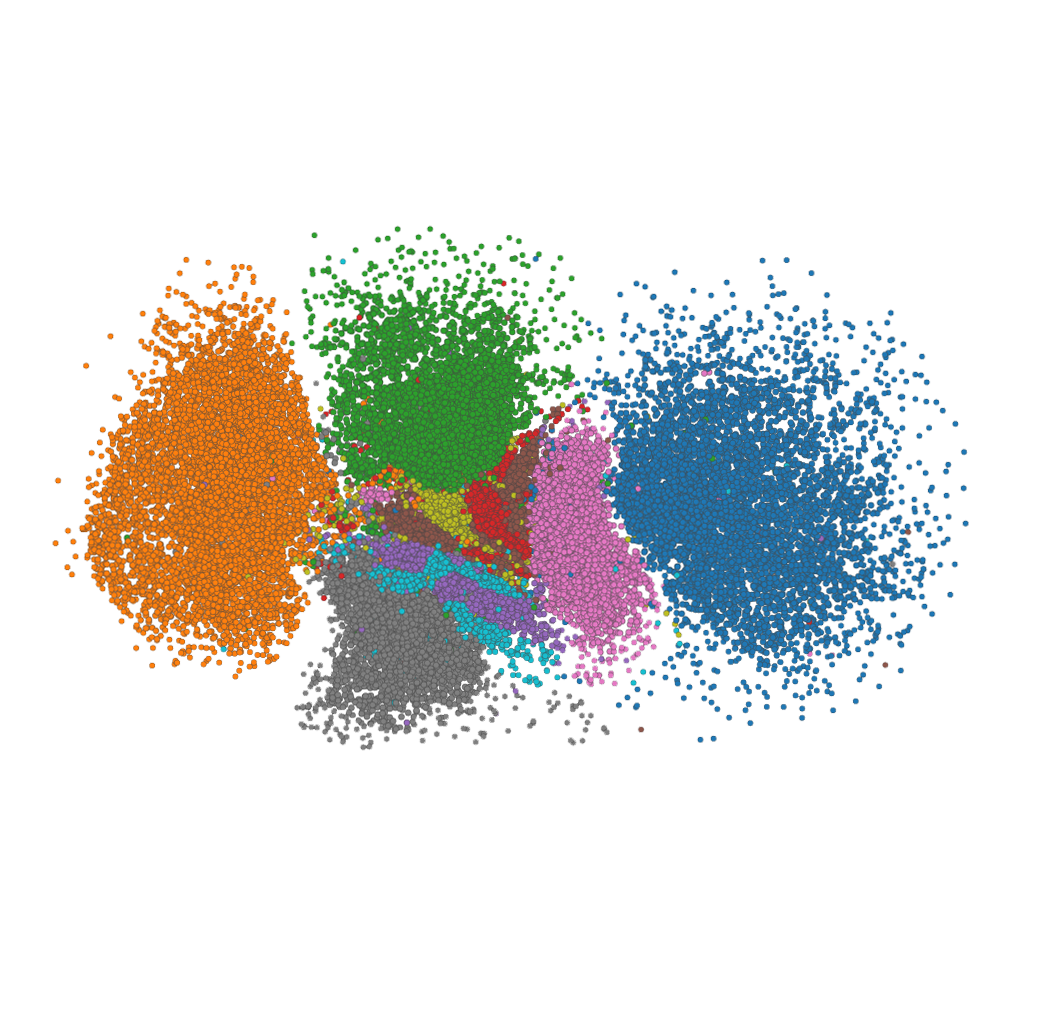}&
        \includegraphics[width=0.14\textwidth]{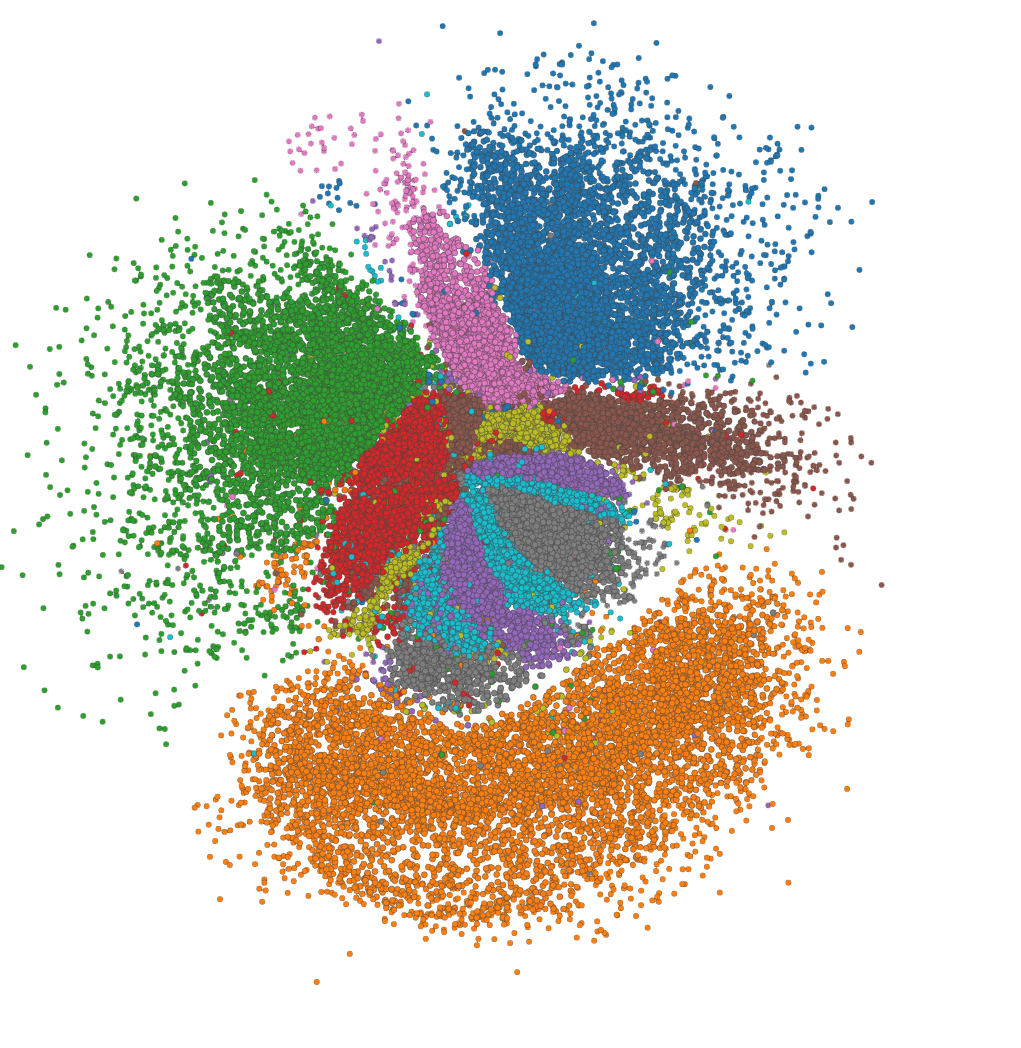}&
        \includegraphics[width=0.14\textwidth]{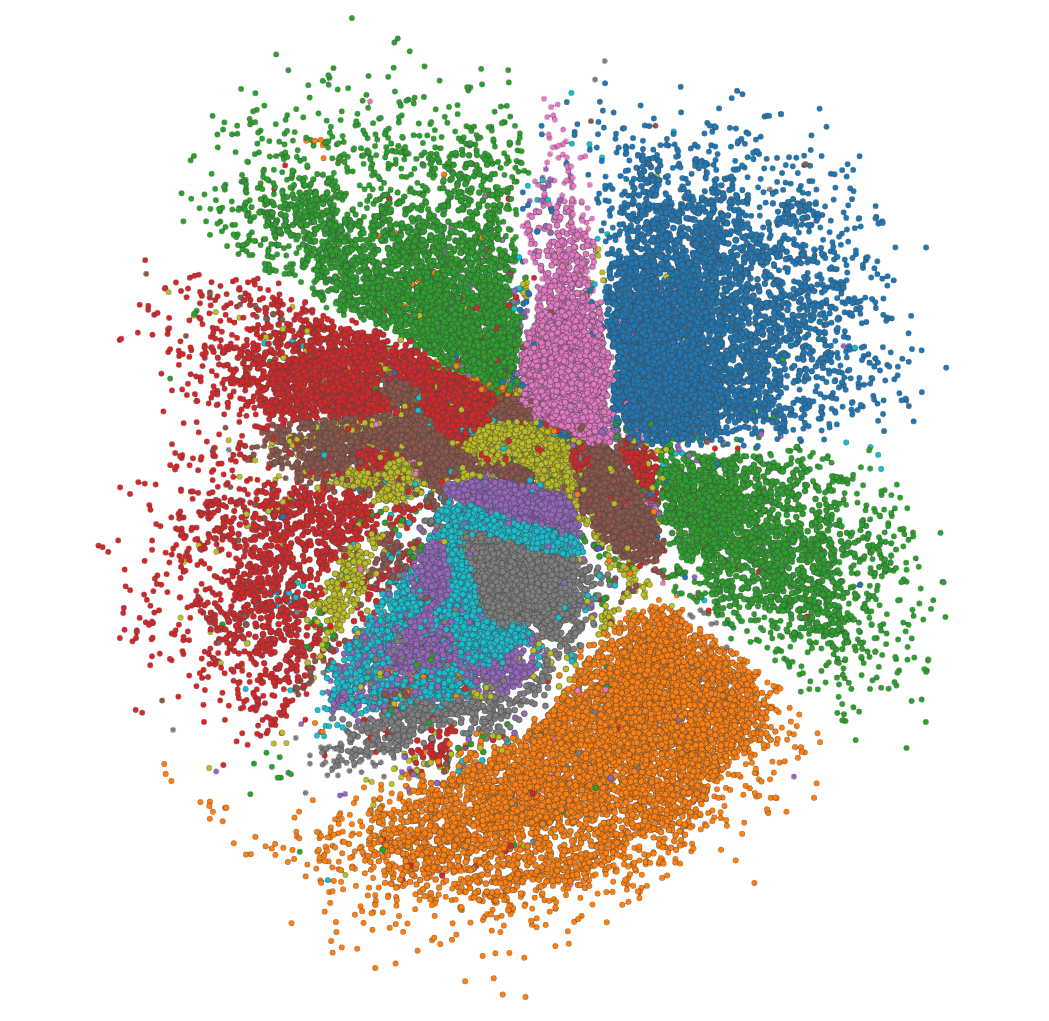}&
        \includegraphics[width=0.14\textwidth]{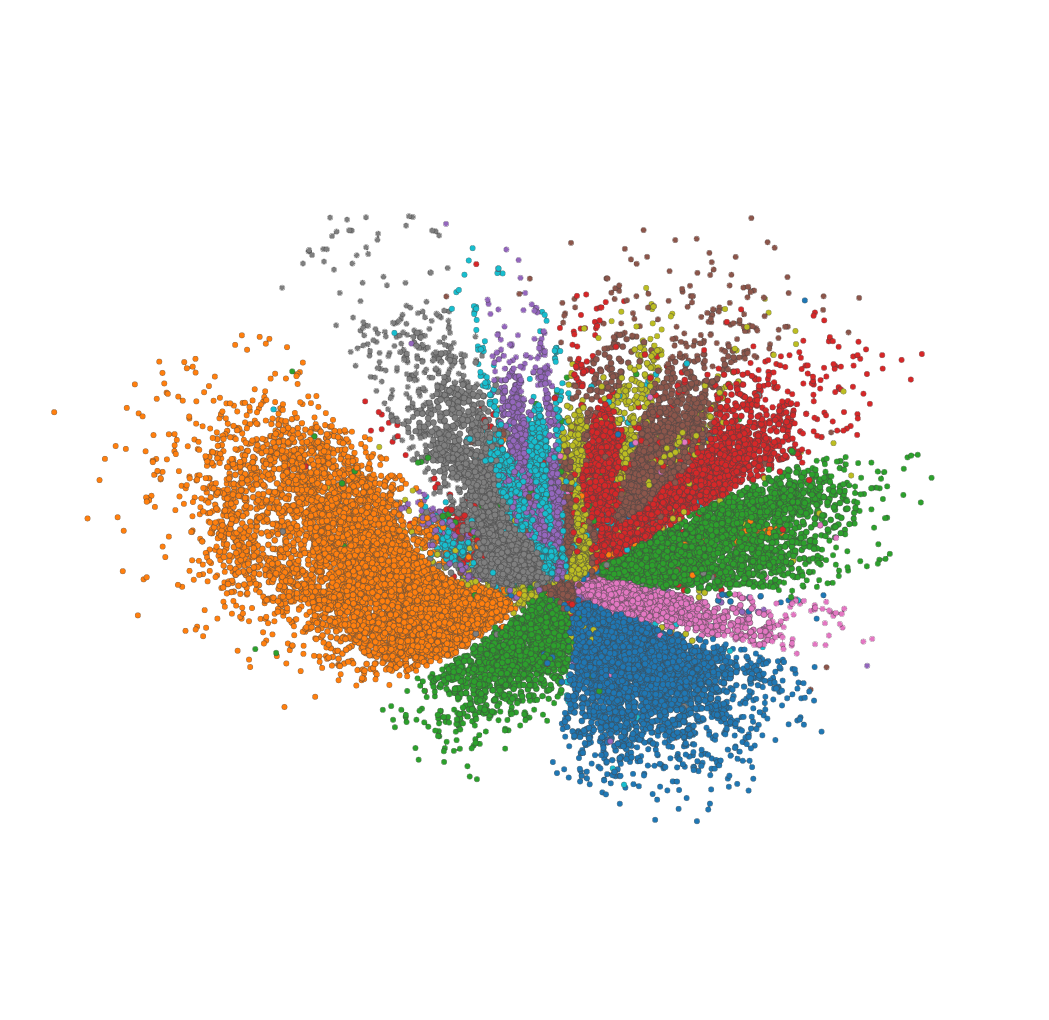}&
        \includegraphics[width=0.14\textwidth]{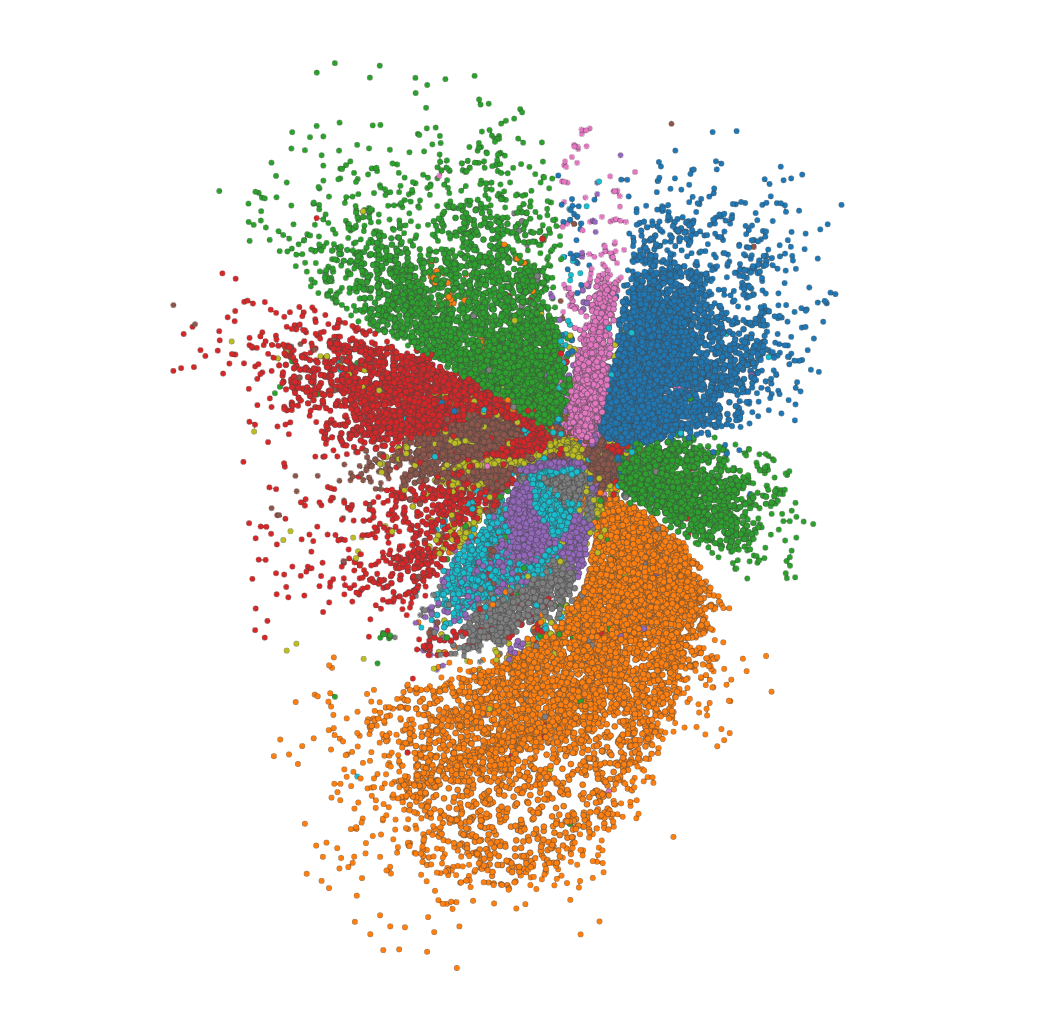} & & \\
        \hline
        
        \includegraphics[width=0.14\textwidth]{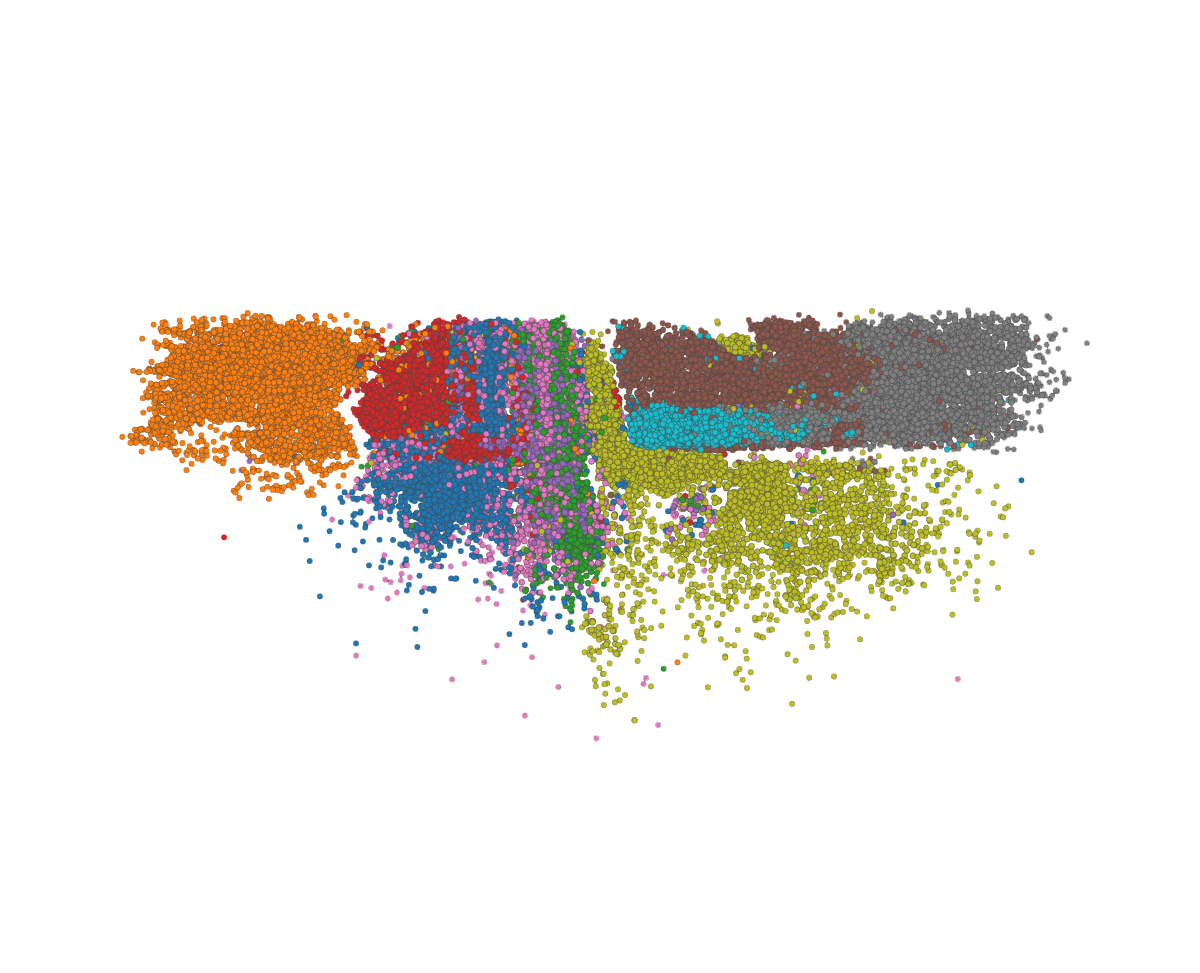}&
        \includegraphics[width=0.14\textwidth]{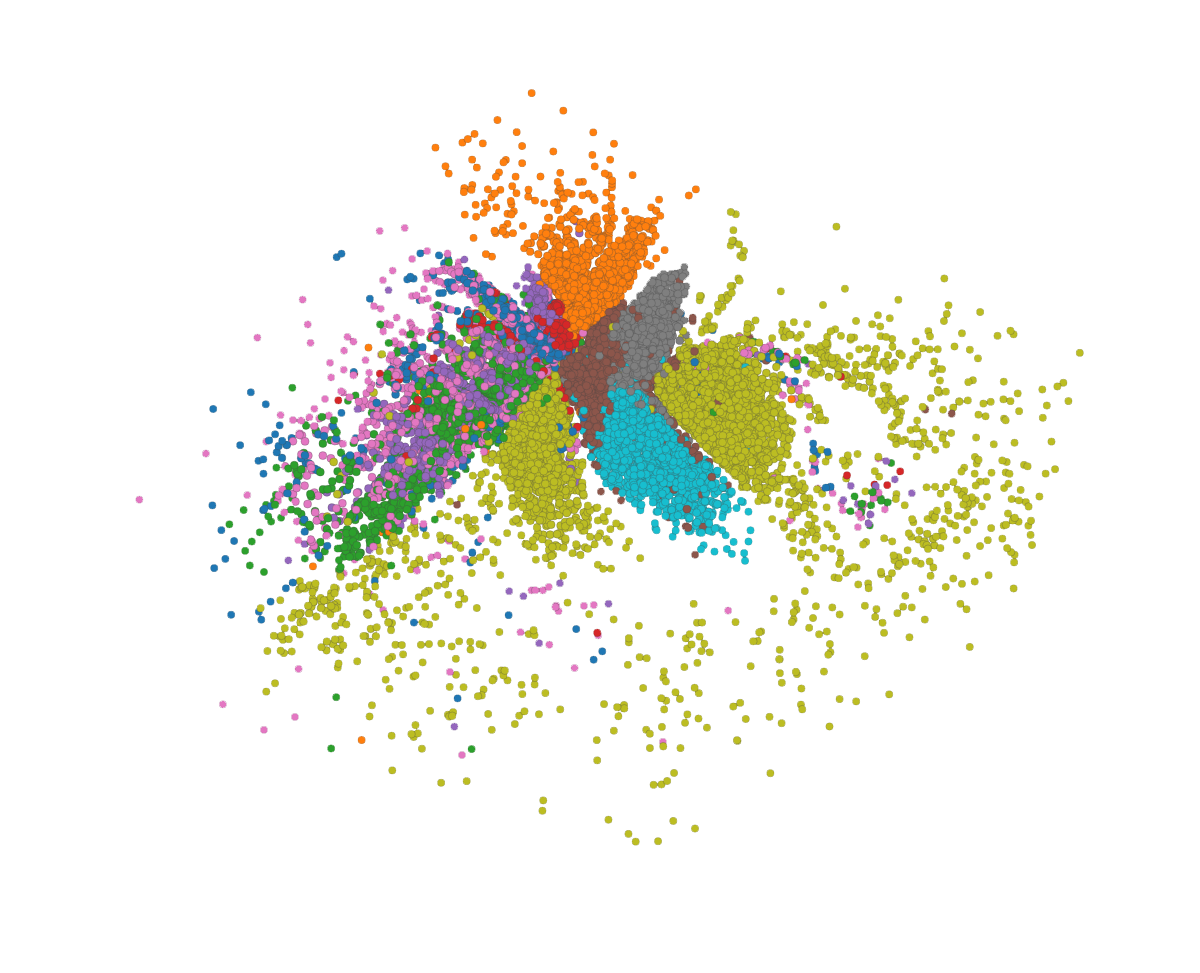}&
        \includegraphics[width=0.14\textwidth]{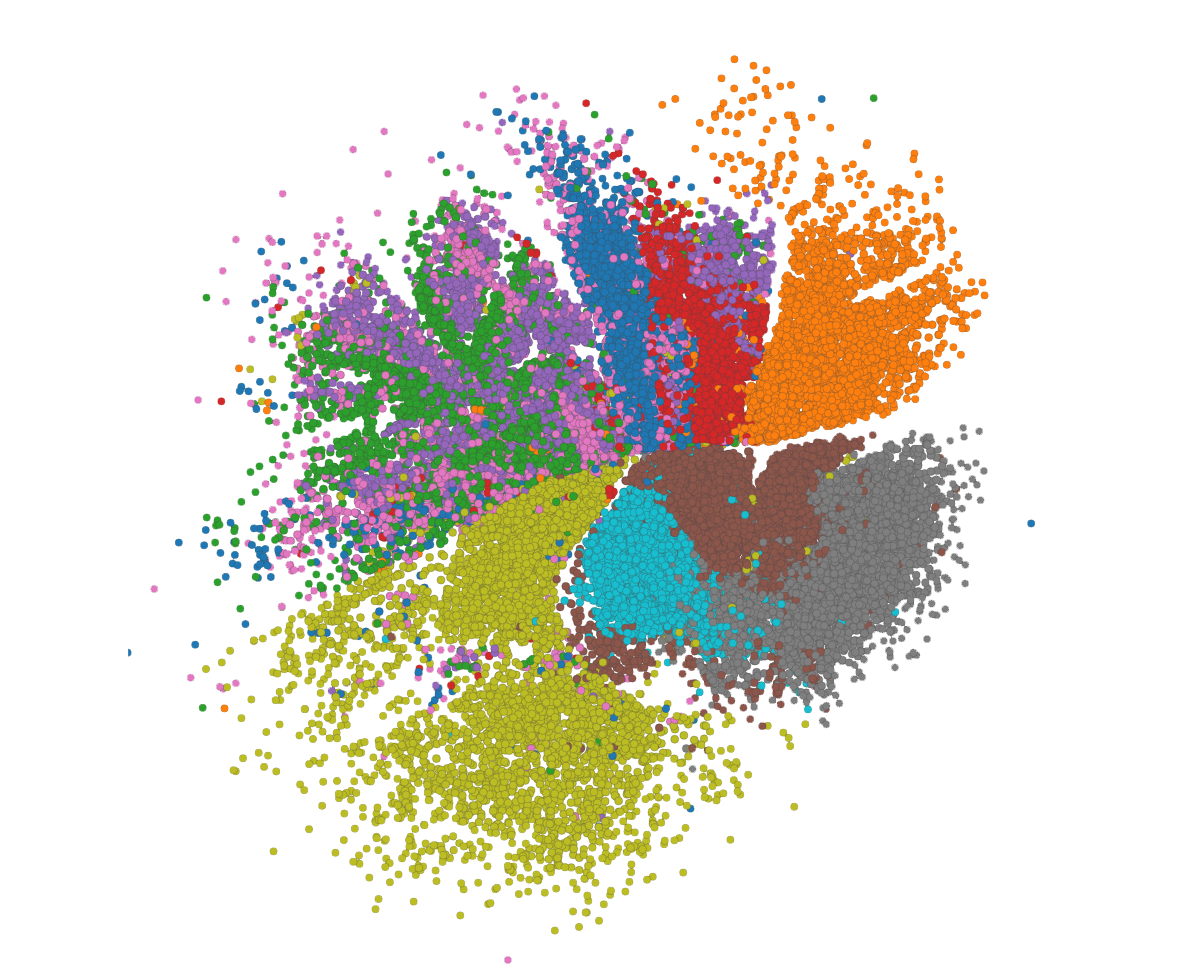}&
        \includegraphics[width=0.14\textwidth]{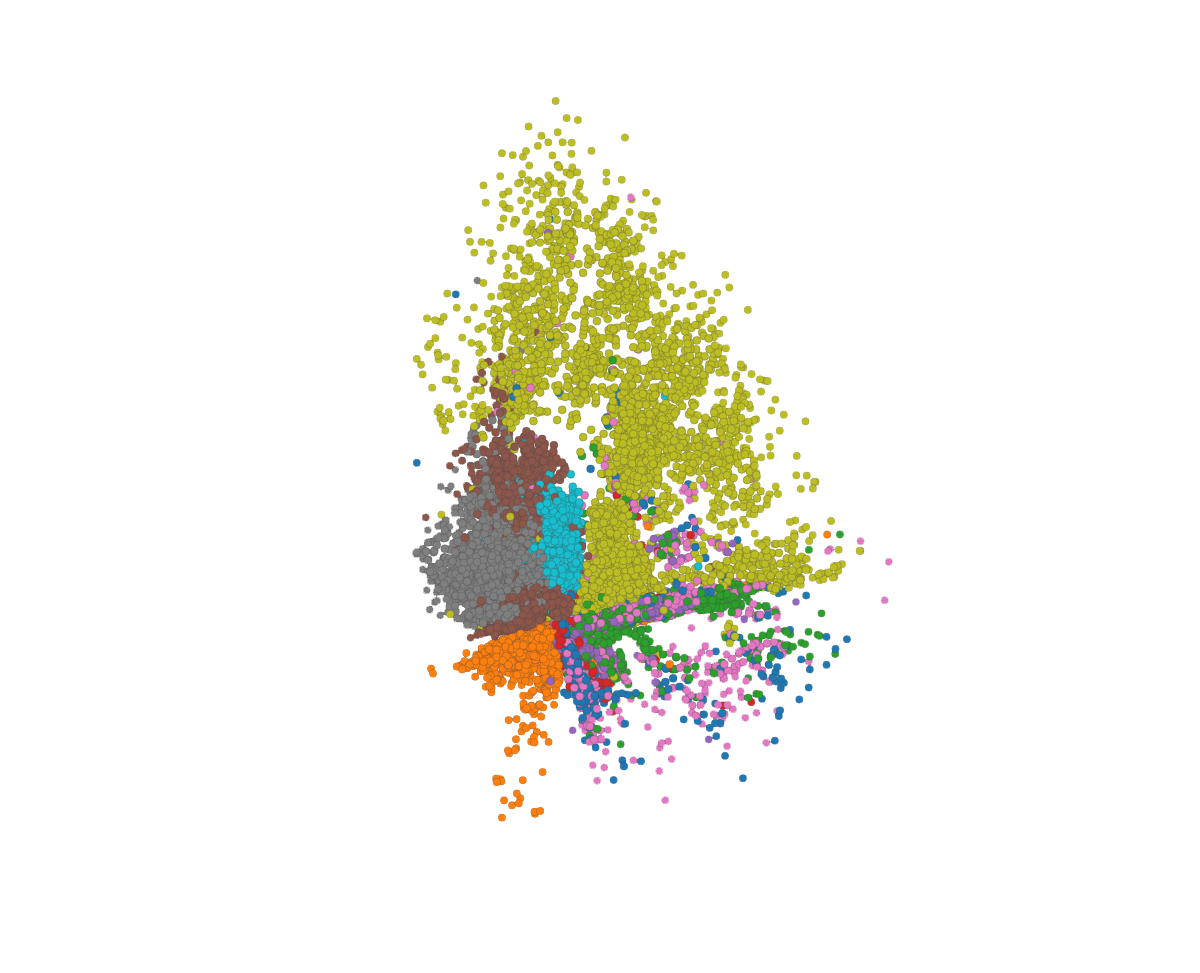}&
        \includegraphics[width=0.14\textwidth]{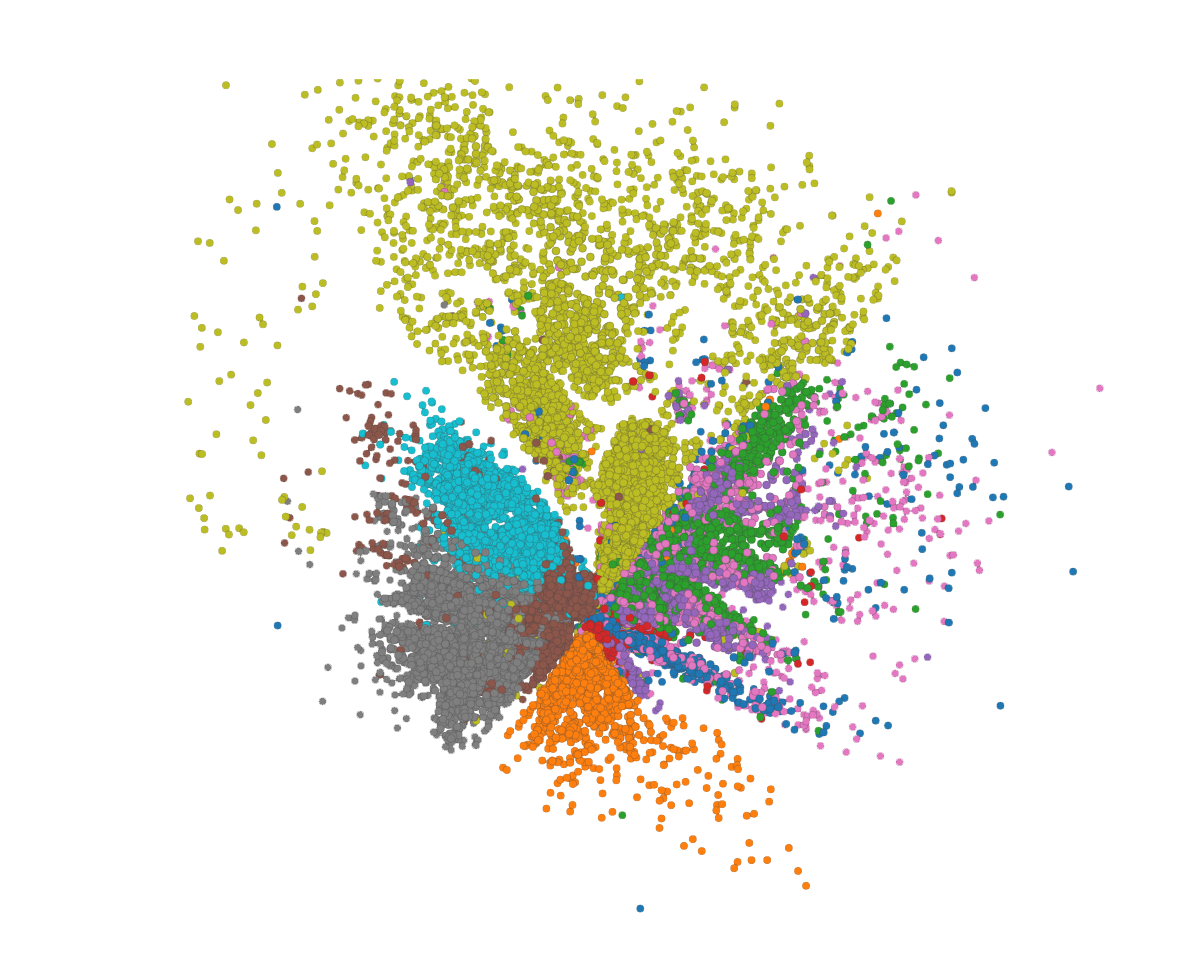}&
        \includegraphics[width=0.14\textwidth]{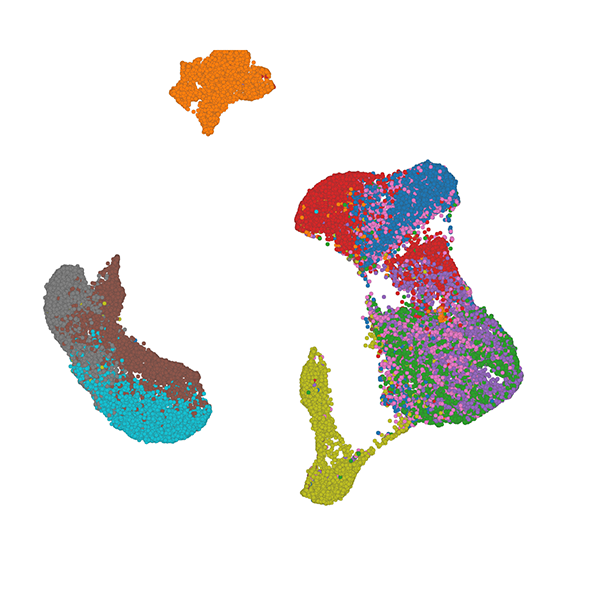}&
        \includegraphics[width=0.14\textwidth]{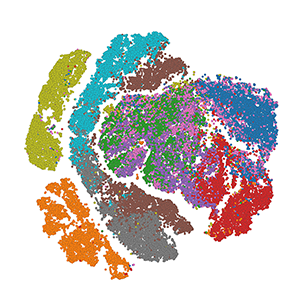}
        \\
        
        \includegraphics[width=0.14\textwidth]{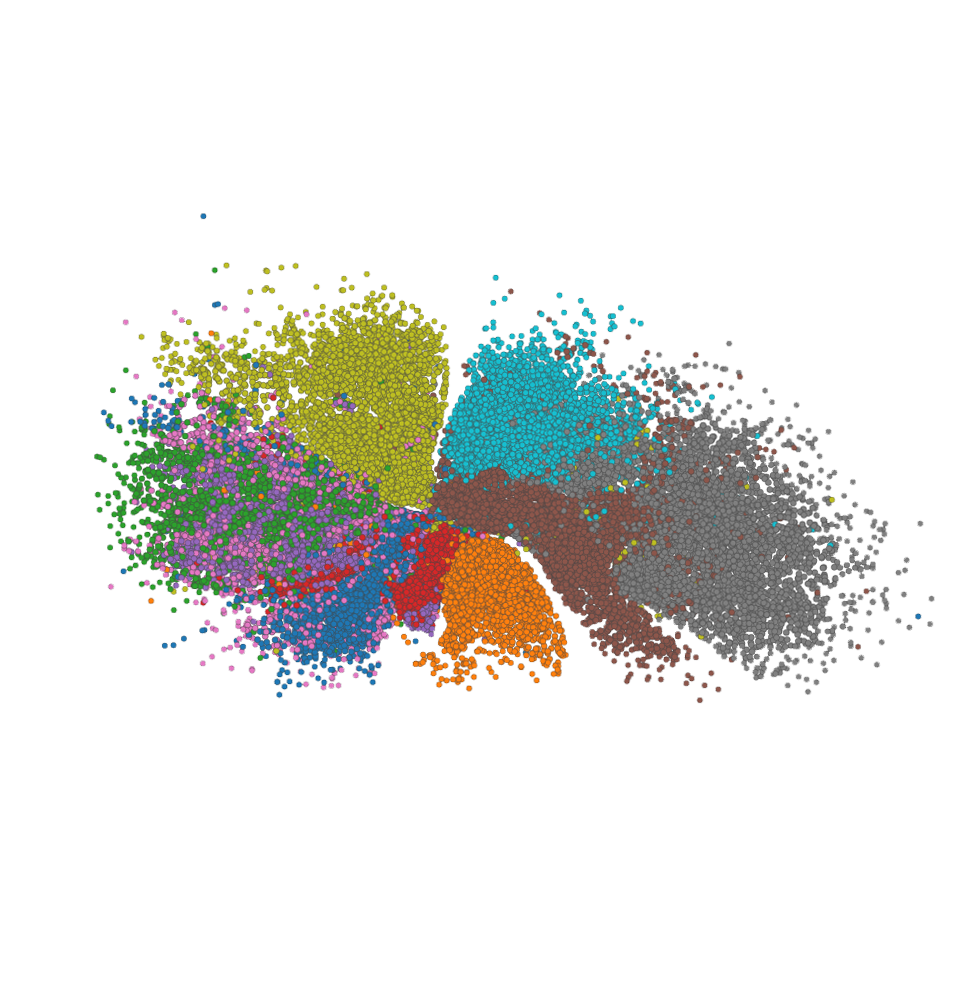}&
        \includegraphics[width=0.14\textwidth]{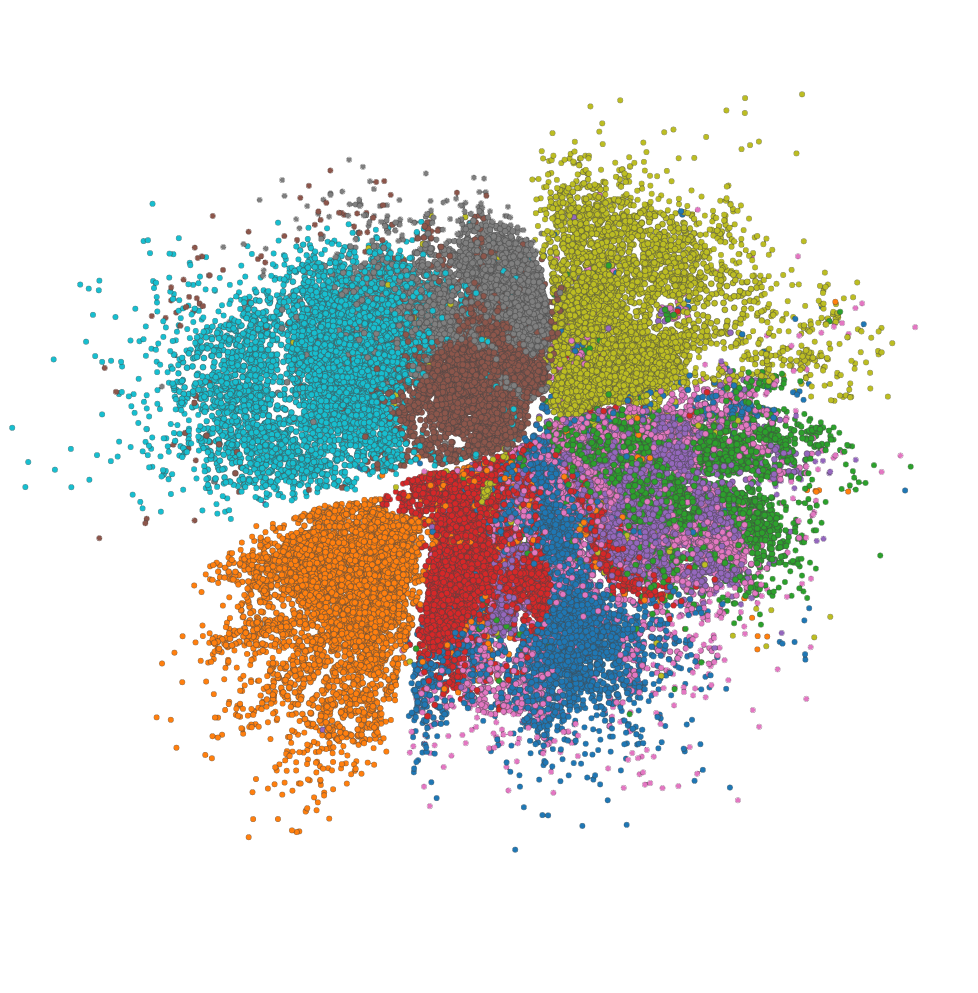}&
        \includegraphics[width=0.14\textwidth]{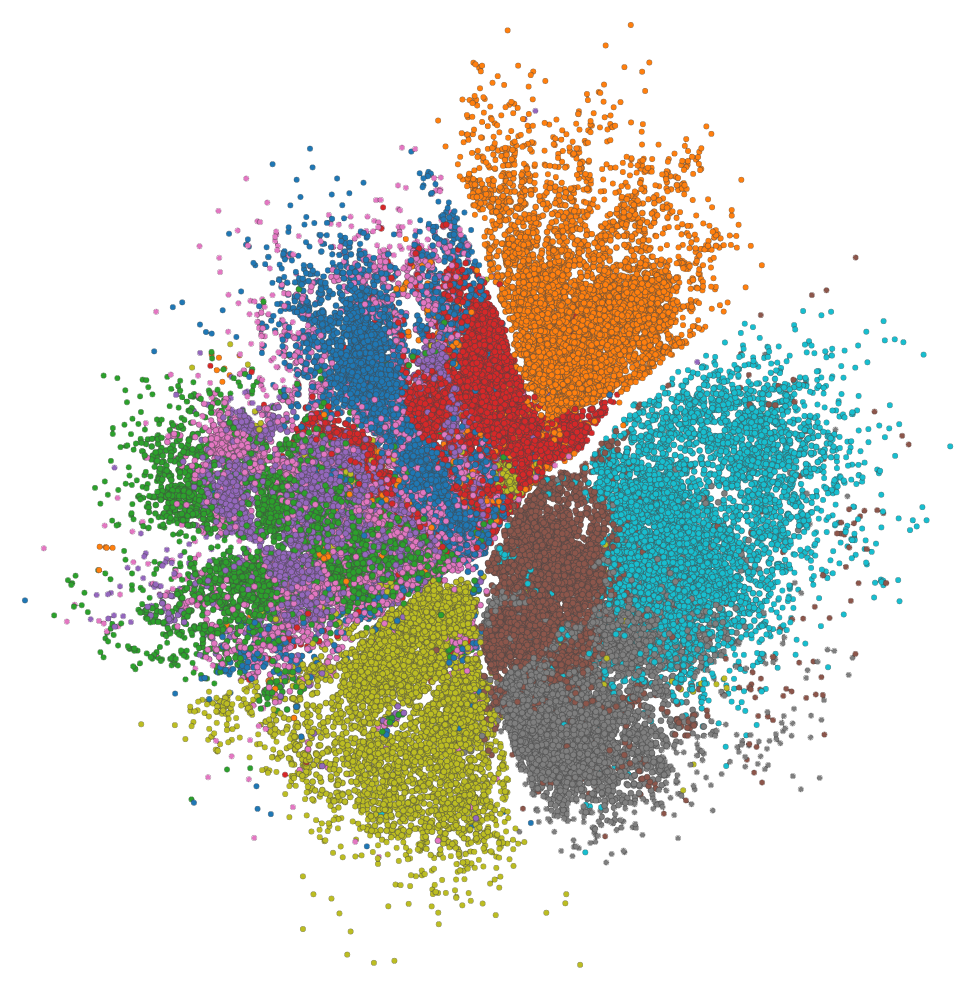}&
        \includegraphics[width=0.14\textwidth]{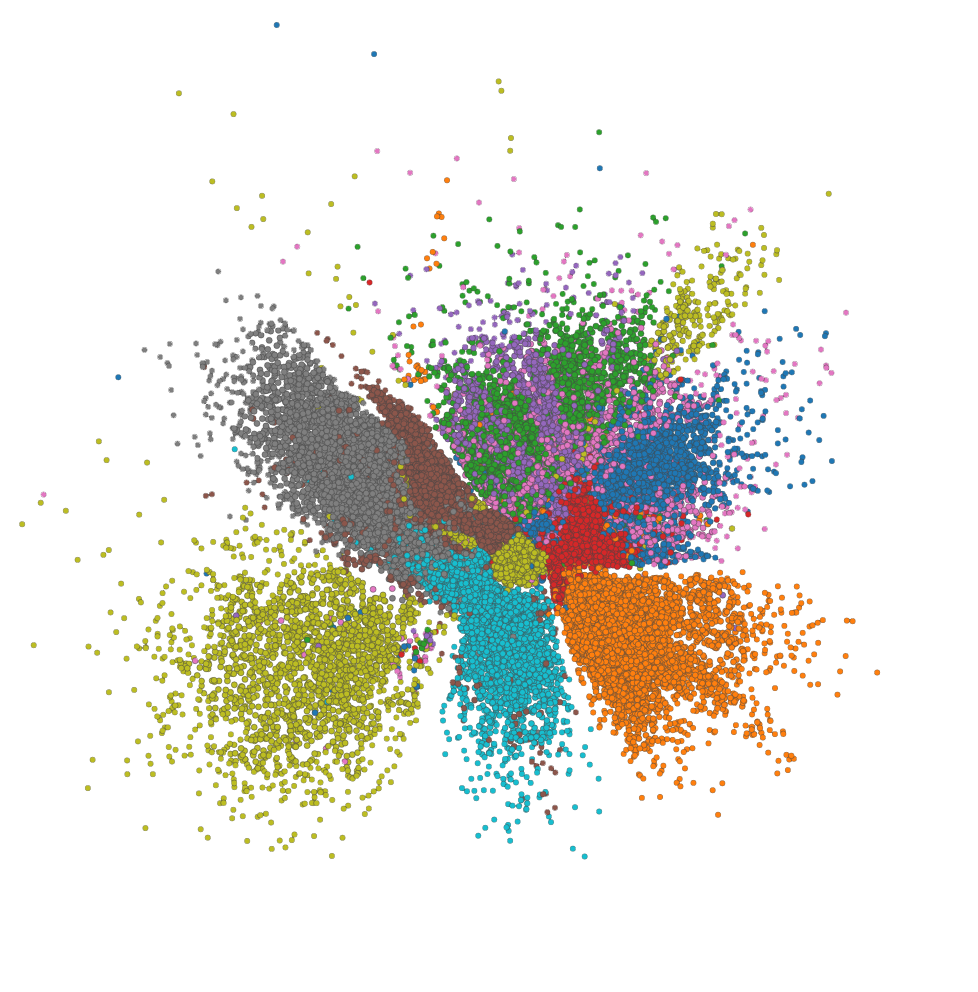}&
        \includegraphics[width=0.14\textwidth]{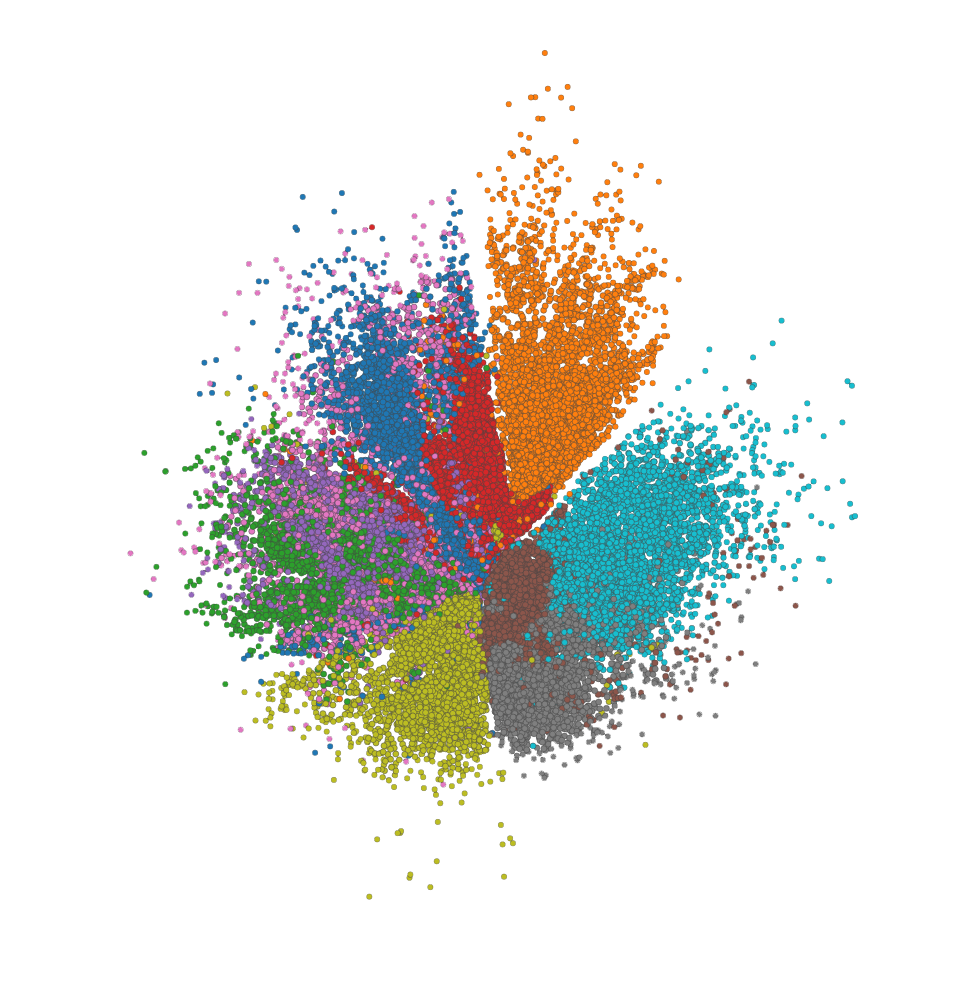}& &
        \\
        \hline
        \includegraphics[width=0.14\textwidth]{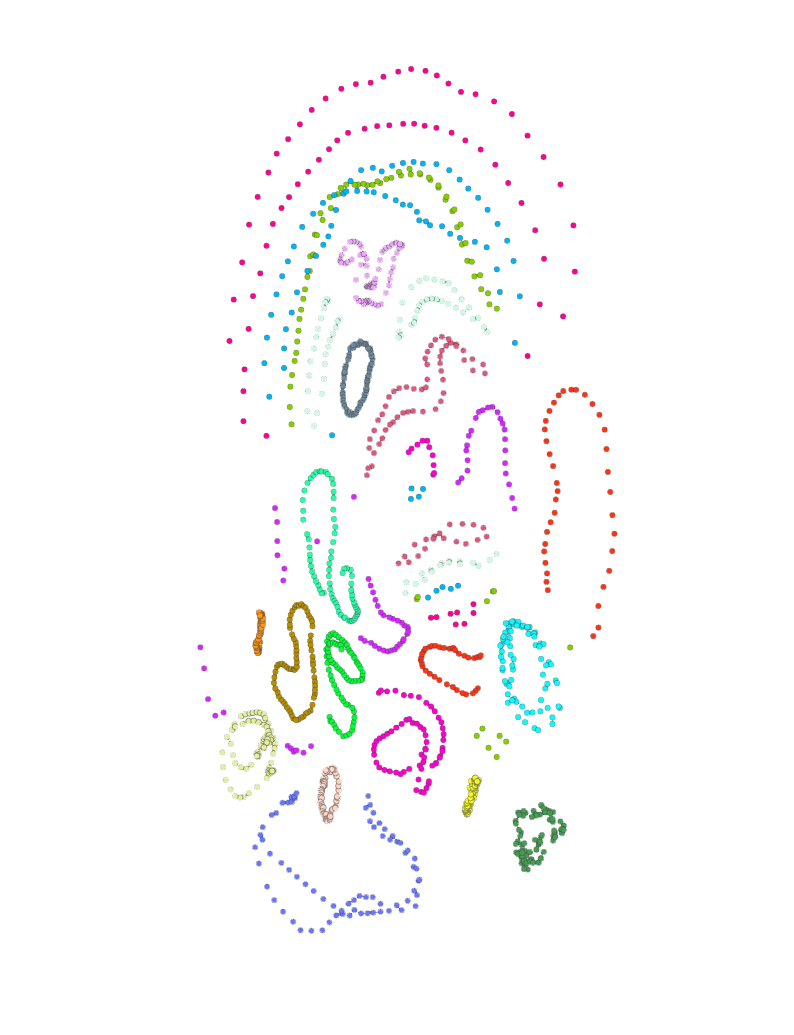}&
        \includegraphics[width=0.14\textwidth]{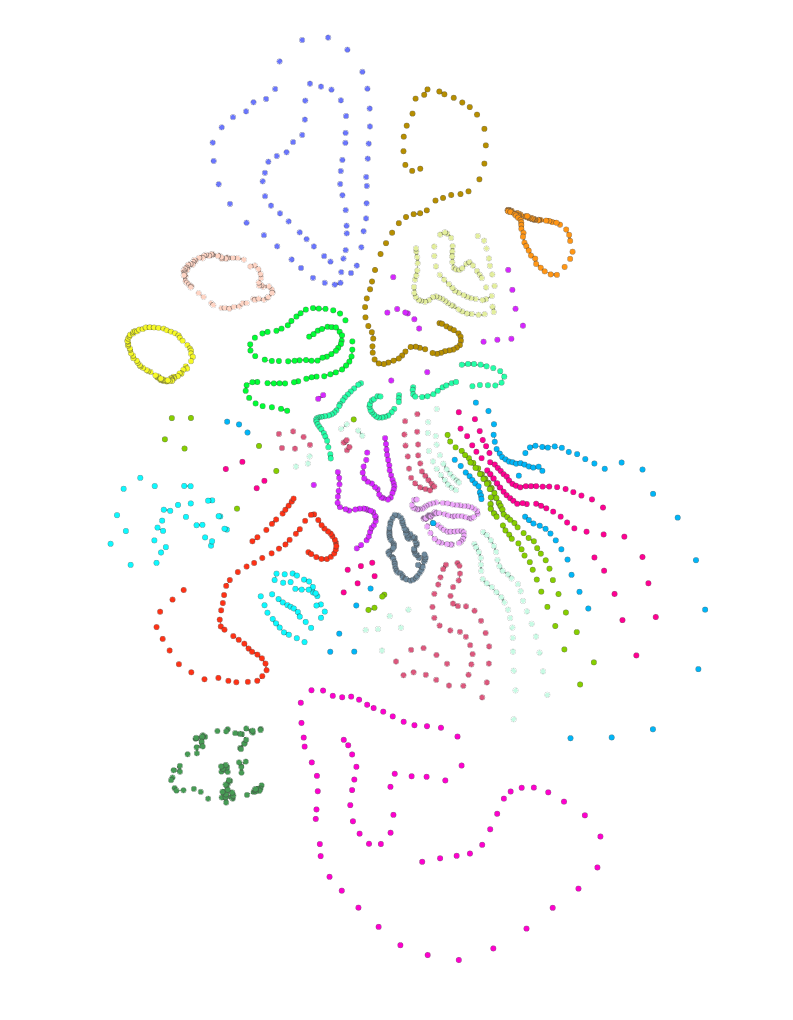}&
        \includegraphics[width=0.14\textwidth]{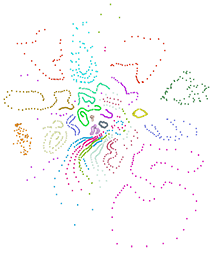}&
        \includegraphics[width=0.14\textwidth]{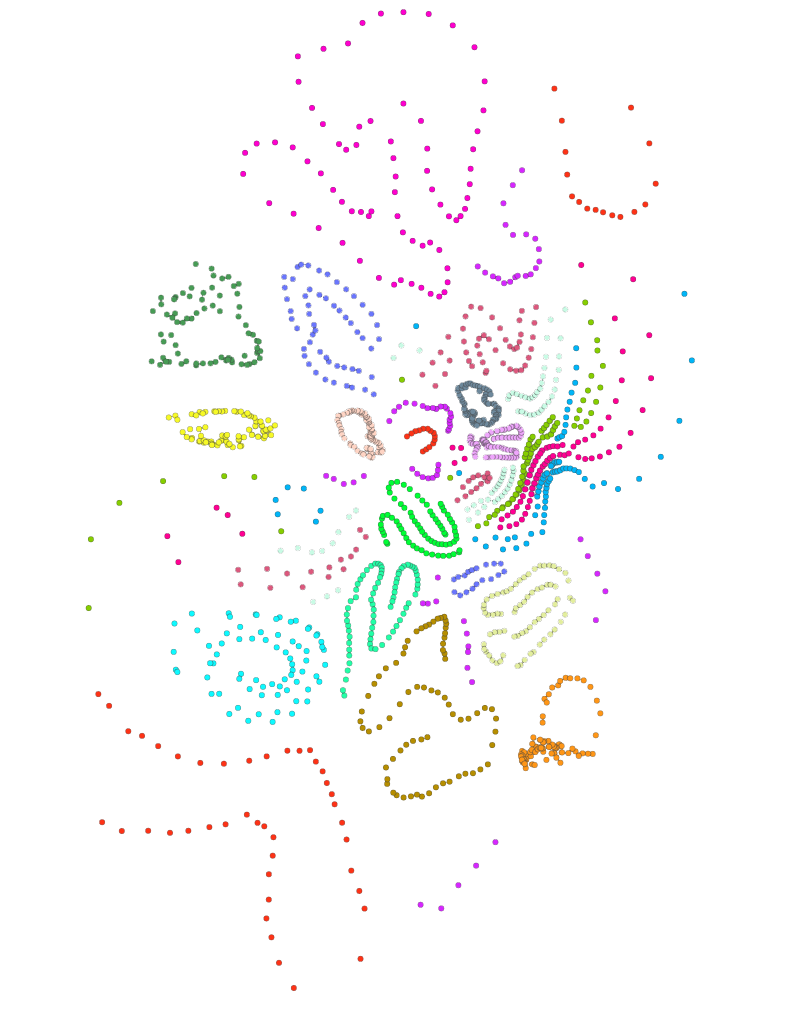}&
        \includegraphics[width=0.14\textwidth]{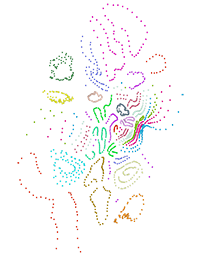}&
        \includegraphics[width=0.14\textwidth]{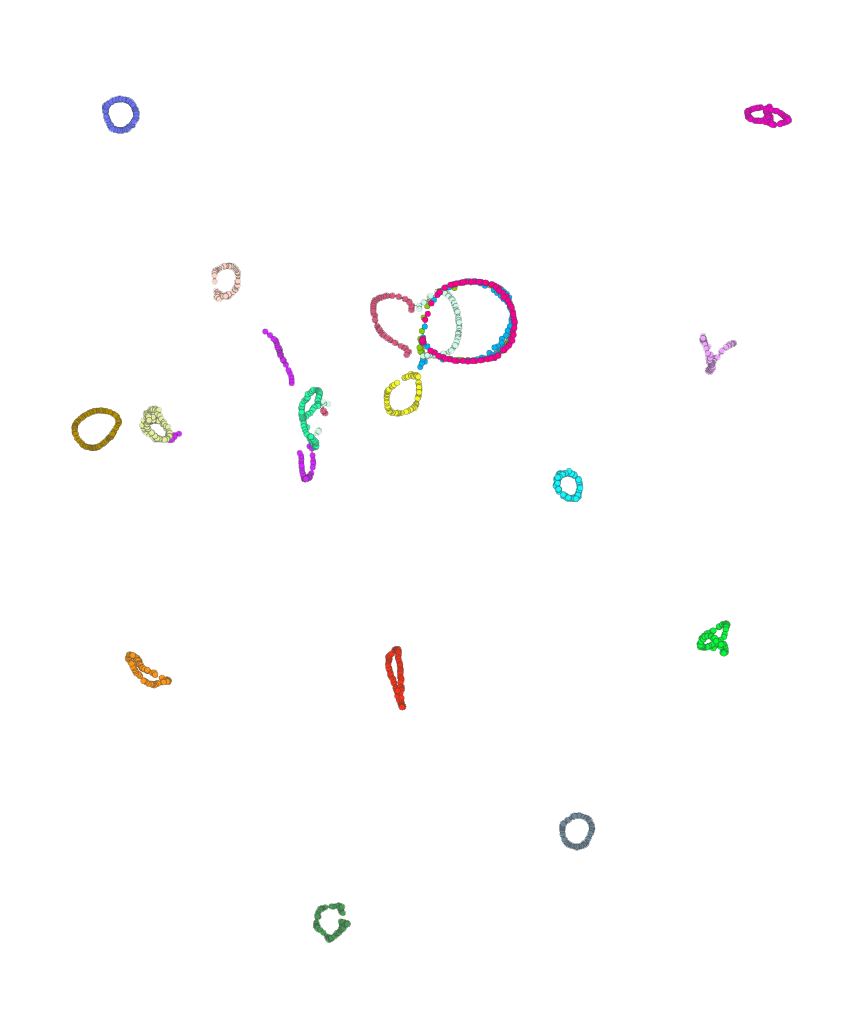}&
        \includegraphics[width=0.14\textwidth]{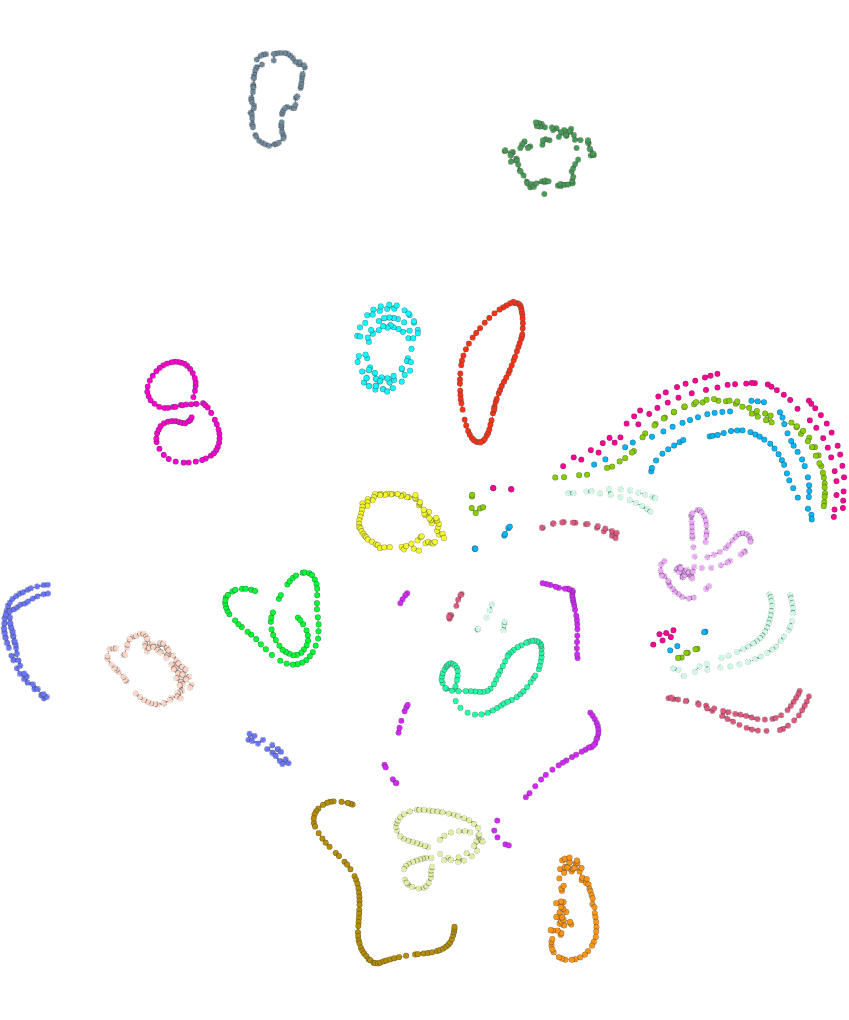}
        \\
        \includegraphics[width=0.14\textwidth]{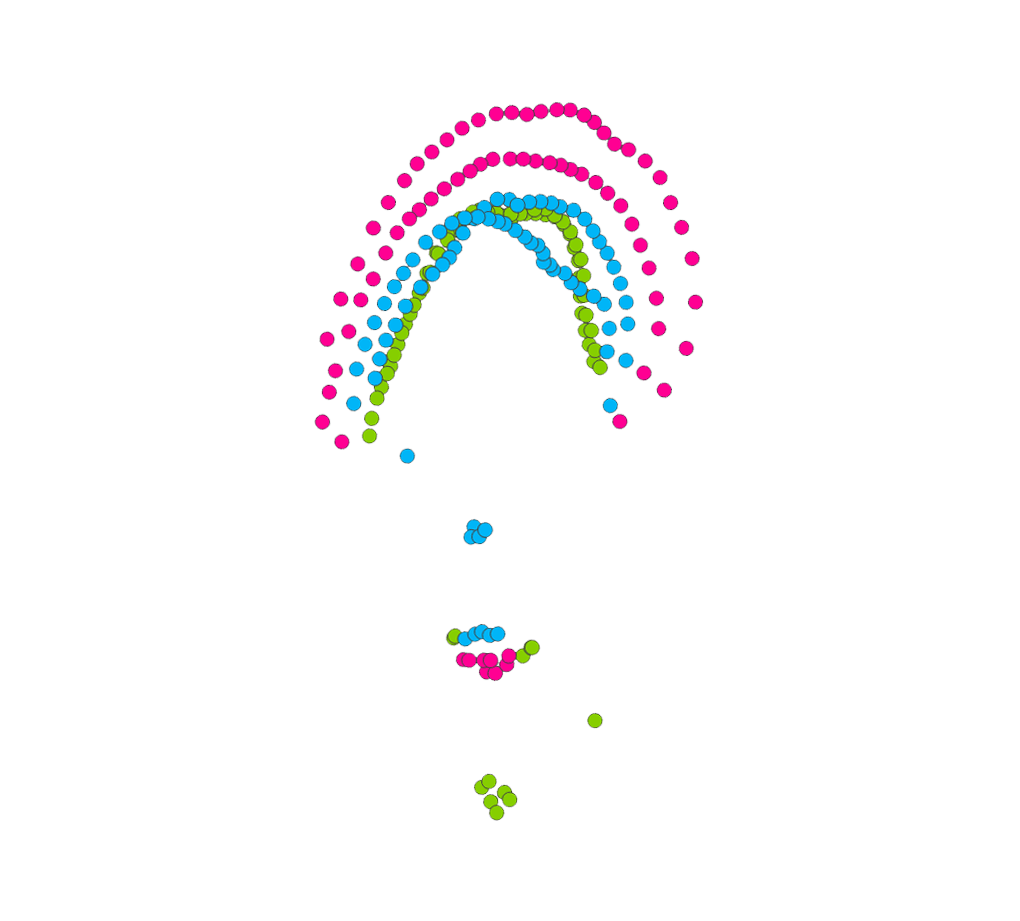}&
        \includegraphics[width=0.14\textwidth]{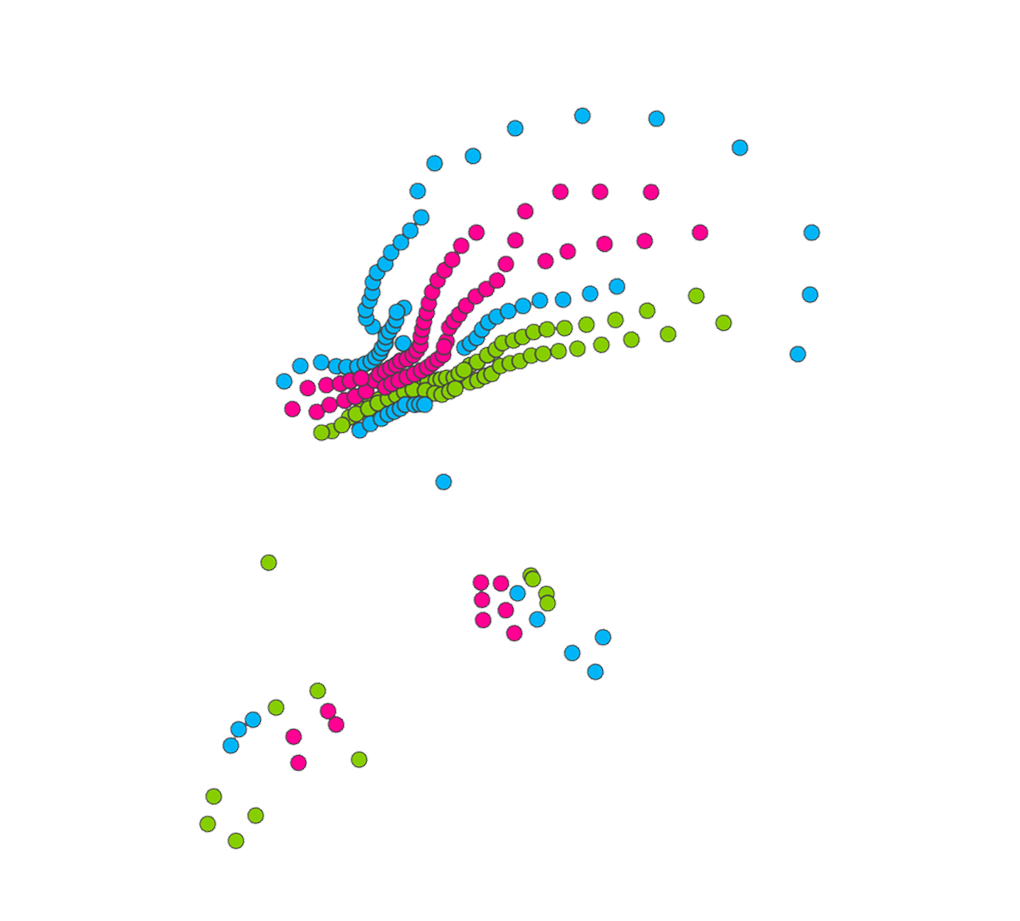}&
        \includegraphics[width=0.14\textwidth]{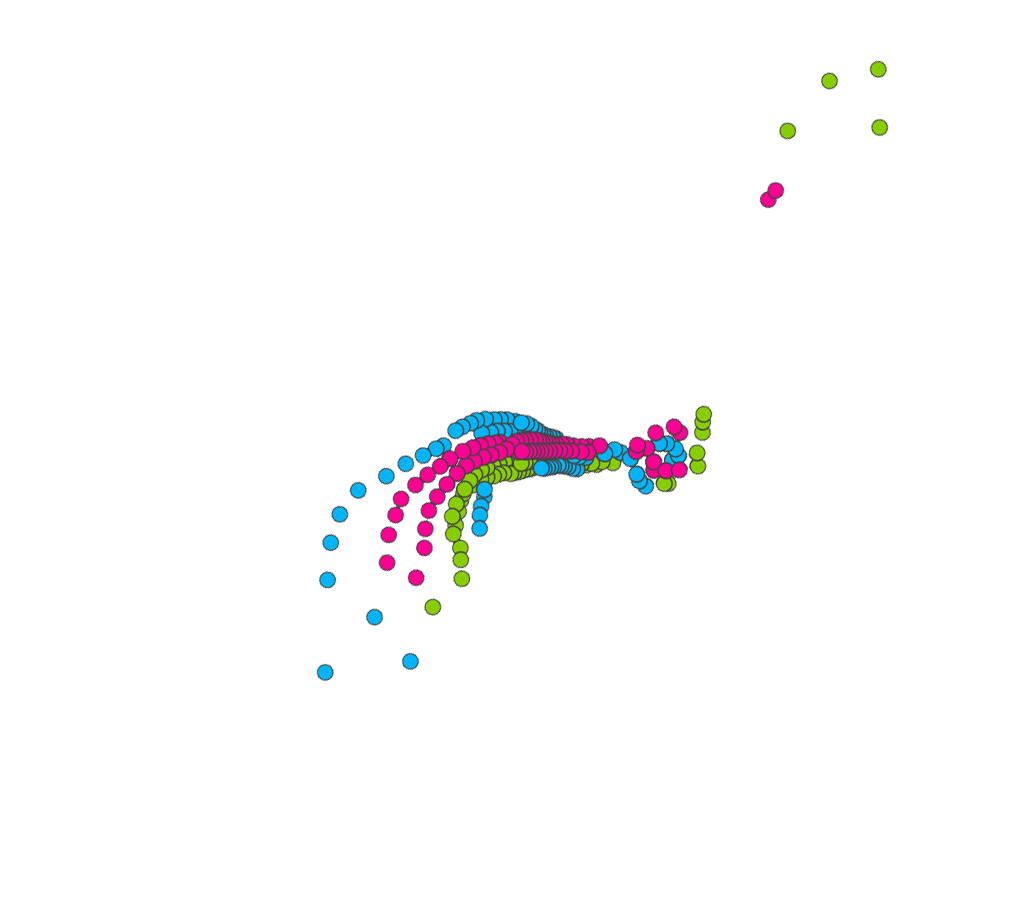}&
        \includegraphics[width=0.14\textwidth]{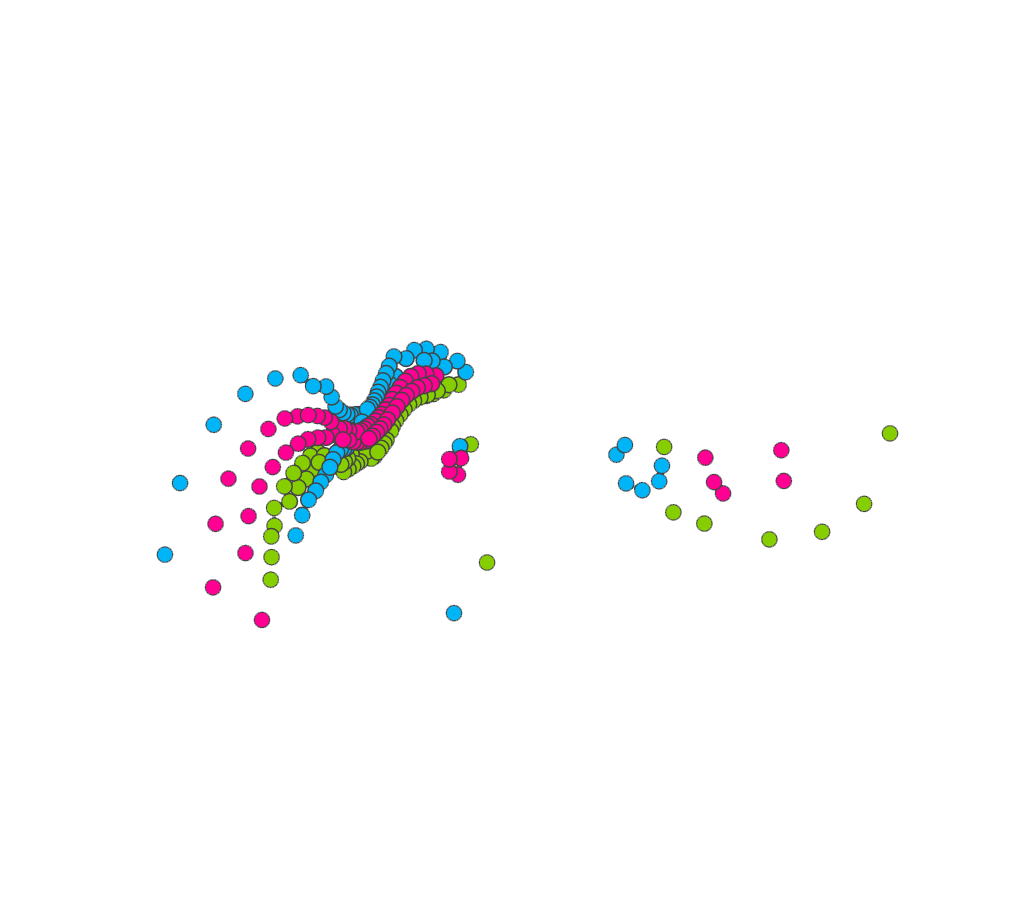}&
        \includegraphics[width=0.14\textwidth]{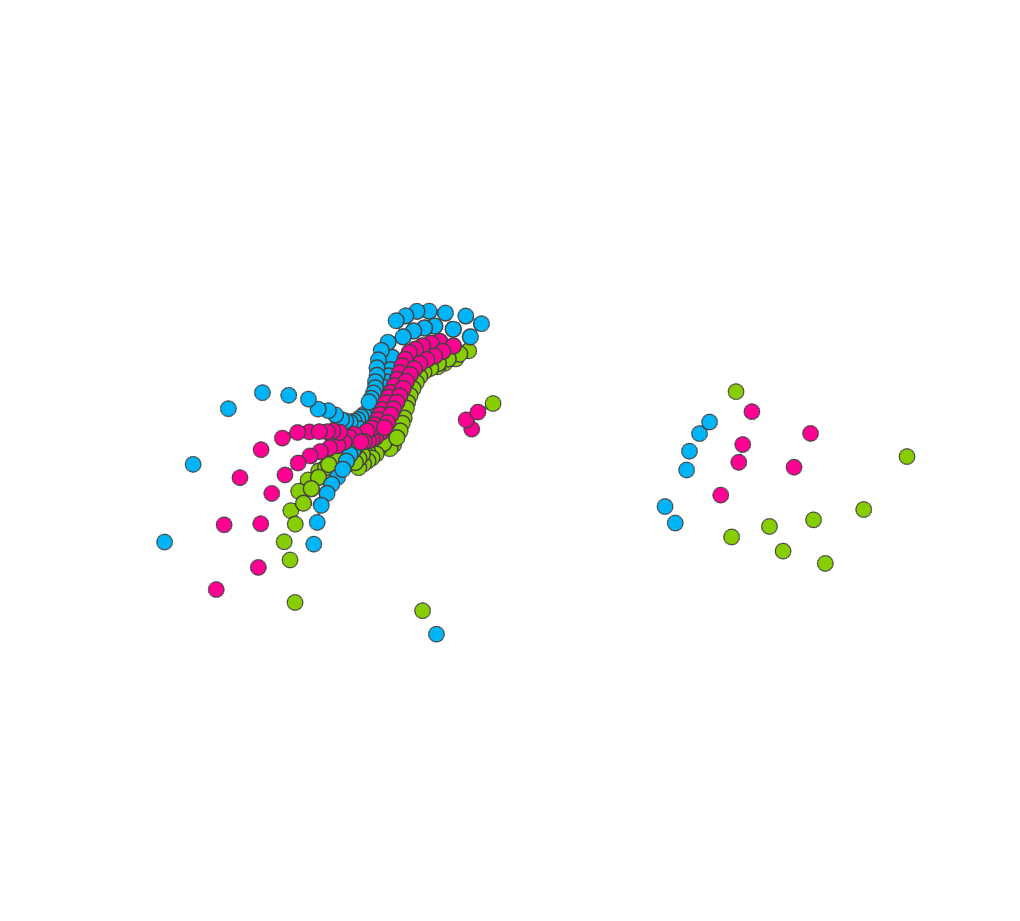}&
        \includegraphics[width=0.14\textwidth]{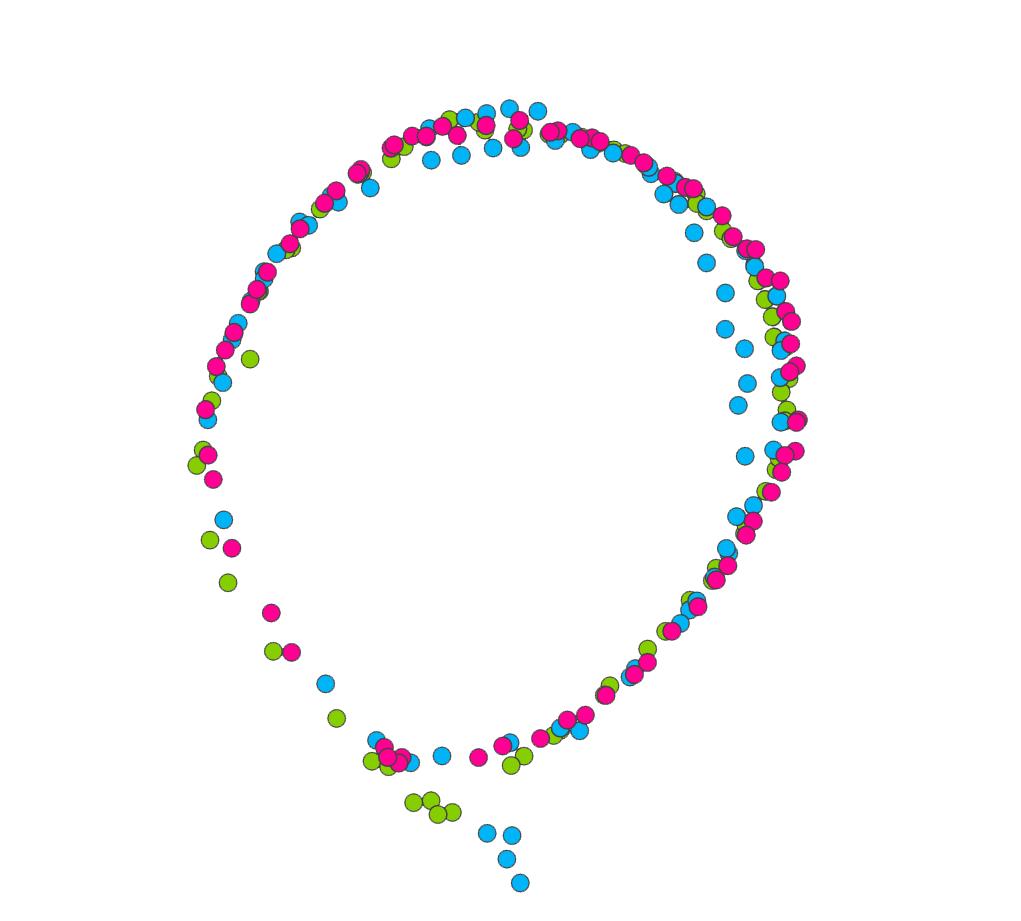}&
        \includegraphics[width=0.14\textwidth]{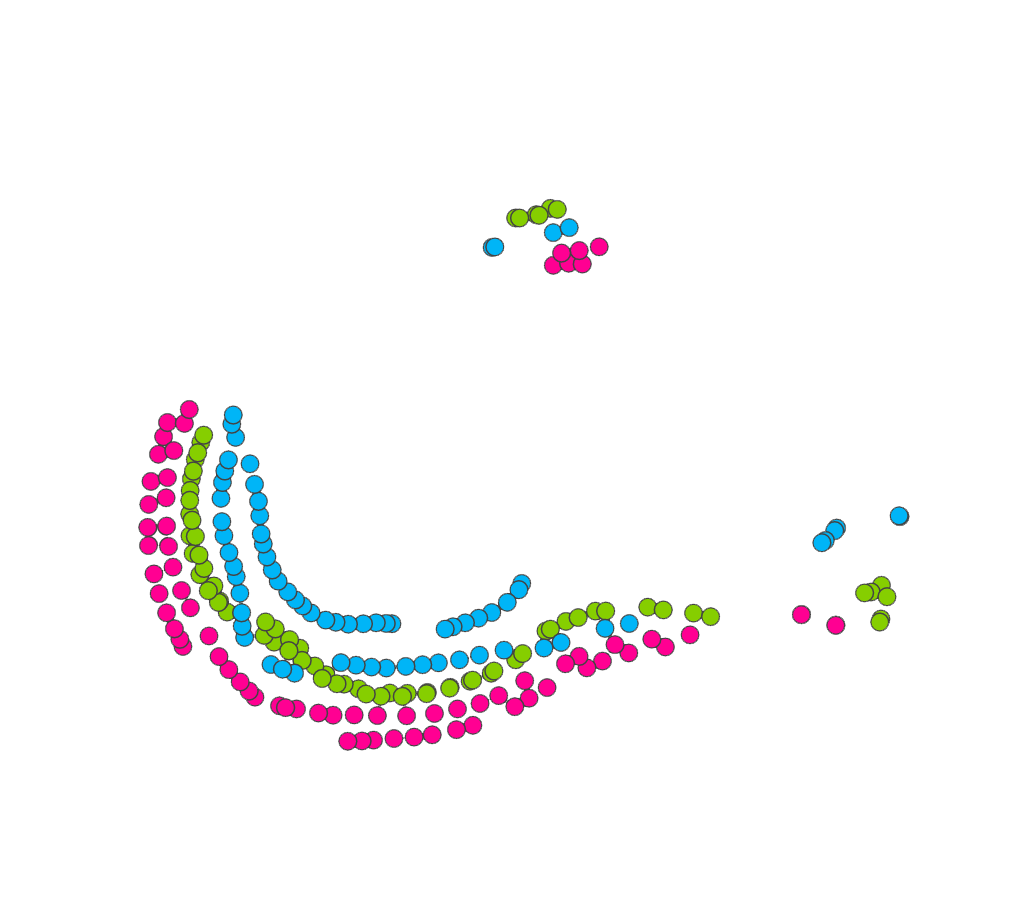}

    \end{tabular}
    
    \caption{Results of data visualization experiment. Different colors indicate different ground turth labels/classes.
    Top shows MNIST: FC architecture of the encoder/decoder (top row), and CNN (bottom row);
    Middle shows FMNIST: FC (top row), and CNN (bottom row); 
    Bottom shows COIL20 with CNN architecture, where zoom-ins of 3 classes are shown in the bottom row.}
    \label{fig:visualization}
\end{figure}

In this experiment we evaluate our method in the task of high dimension data visualization, \ie, reducing high dimensional data into two dimensions to be visually interpreted by a human observer or some downstream application. Usually the data is not assumed to lie on a manifold with such a low dimension, and it is therefore impossible to preserve all of its geometric properties. A common artifact when squeezing higher dimensional data into the plane is crowding \citep{maaten2008visualizing}, that is planar embedded points are crowded around the origin.

We evaluate our method on three standard datasets of images: MNIST \citep{lecun1998mnist} (60k hand-written digits), Fashion-MNIST (60k Zalando’s article images) \citep{xiao2017/online} and COIL20 \citep{nene1996columbia} (20 different images of object rotated with 72 even rotations).
For baselines we take: Vanilla AE; CAE; GP-RAE; DAE; U-MAP; and t-SNE.
We use the same architecture for all auto-encoder methods on each dataset. MNIST and FMNIST we evaluated in two scenarios: (i) Both encoder and decoder are fully-connected (MLP) networks; and (ii) Both encoder and decoder are Convolutional Neural Network (CNN). For COIL20 dataset both encoder and decoder are Convolutional Neural Network.  Full implementation details and hyper-parameters values can be found in the Appendix.

The results are presented in figure \ref{fig:visualization}; where each embedded point $\vz$ is colored by its ground-truth class/label.
We make several observation. First, in all the datasets our method is more resilient to crowding compared to the baseline AEs, and provide a more even spread. U-MAP and t-SNE produce better separated clusters. However, this separation can come at a cost: See the COIL20 result (third row) and blow-ups of three of the classes (bottom row). In this dataset we expect evenly spaced points that correspond to the even rotations of the objects in the images. Note (in the blow-ups) that U-MAP maps the three classes on top of each other (non-injectivity of the "encoder"), t-SNE is somewhat better but does not preserve well the distance between pairs of data points (we expect them to be more or less equidistant in this dataset). In I-AE the rings are better separated and points are more equidistant; the baseline AEs tend to densify the points near the origin.   
Lastly, considering the inter and intra-class variations for the MNIST and FMNIST datasets, we are not sure that isometric embeddings are expected to produce strongly separated clusters as in U-MAP and t-SNE (\eg, think about similar digits of different classes and dissimilar digits of the same class with distances measured in euclidean norm).  




\subsection{Hyper-Parameters Sensitivity}
To evaluate the affect of $ \lambda_{\text{iso}}$ on the output we compared the visualizations and optimized loss values of MNIST and FMNIST, trained with same CNN architecture as in Section \ref{ss:visulaization} with $ \lambda_{\text{iso}}\in\{0,0.01,0.025,0.05,0.075,0.1,0.25, 0.5, 0.75, 0.1\} $. Figure \ref{fig:hyperparameters} shows the different visualization results as well as $ L_{\text{rec}},L_{\text{iso}},L_{\text{piso}} $ as a function of $\lambda_{\text{iso}}$. As can be seen in both datasets the visualizations and losses are stable for $\lambda_{\text{iso}}$ values between $0.01$ and $0.5$, where a significant change to the embedding is noticeable at $0.75$. The trends in the loss values are also rather stable; $ L_{\text{iso}}$ and $ L_{\text{piso}}$ start very high in the regular AE, \ie,  $\lambda_{\text{iso}}=0$, and quickly stabilize. As for $L_{\text{rec}}$ on FMNIST we see a stable increase while in MNIST it also starts with a steady increase until it reaches $0.75$ and then it starts to be rockier, which is also noticeable in the visualizations.

\begin{figure}
    \centering
    \setlength\tabcolsep{0.3pt}
     \begin{tabular}{c c c c c c c c c c}
          $\lambda_{\text{iso}}=0 $ & $0.01$ & $0.025$ & $ 0.05 $ & $0.075 $ & $0.1$ & $0.25$ & $ 0.5 $ & $ 0.75 $ & $ 1 $ \\
          \hline
          \includegraphics[width=0.1\textwidth]{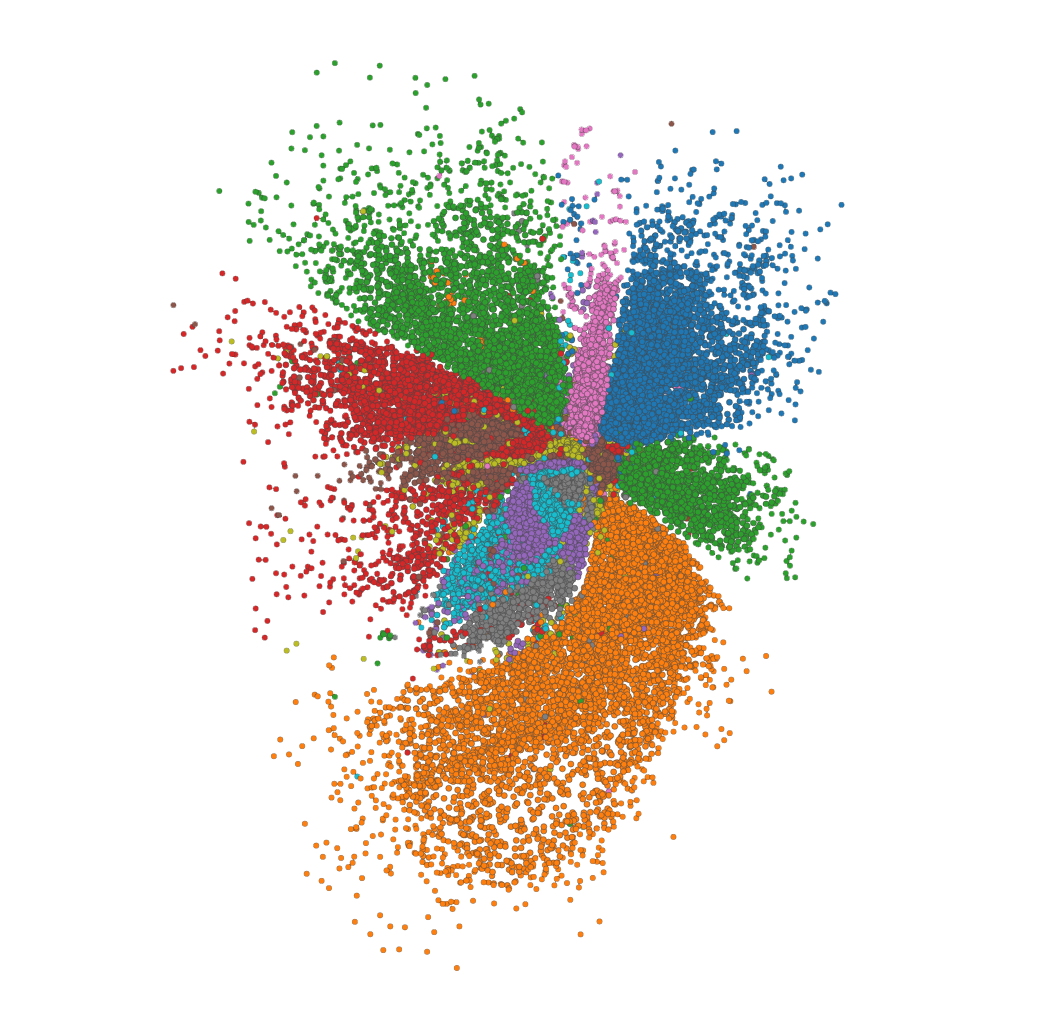}&
        \includegraphics[width=0.1\textwidth]{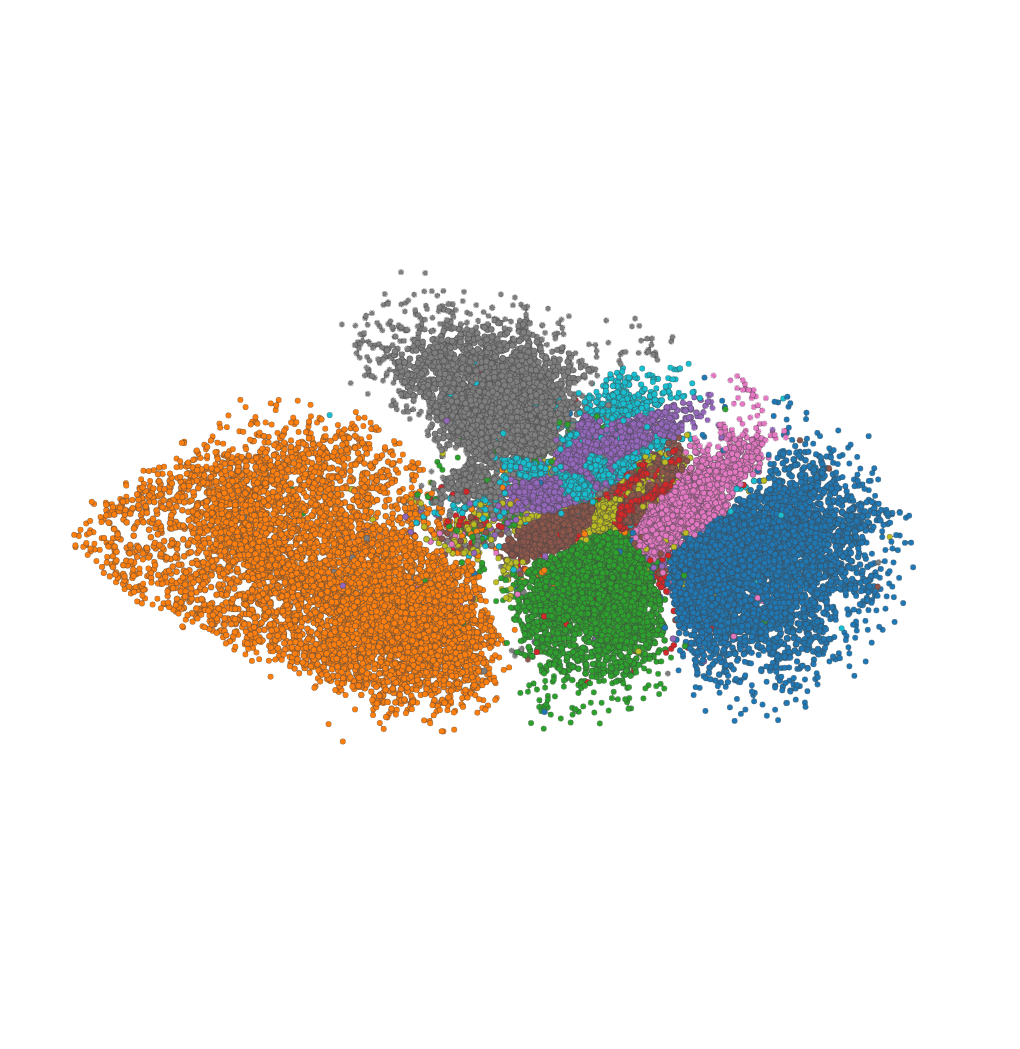}&
        \includegraphics[width=0.1\textwidth]{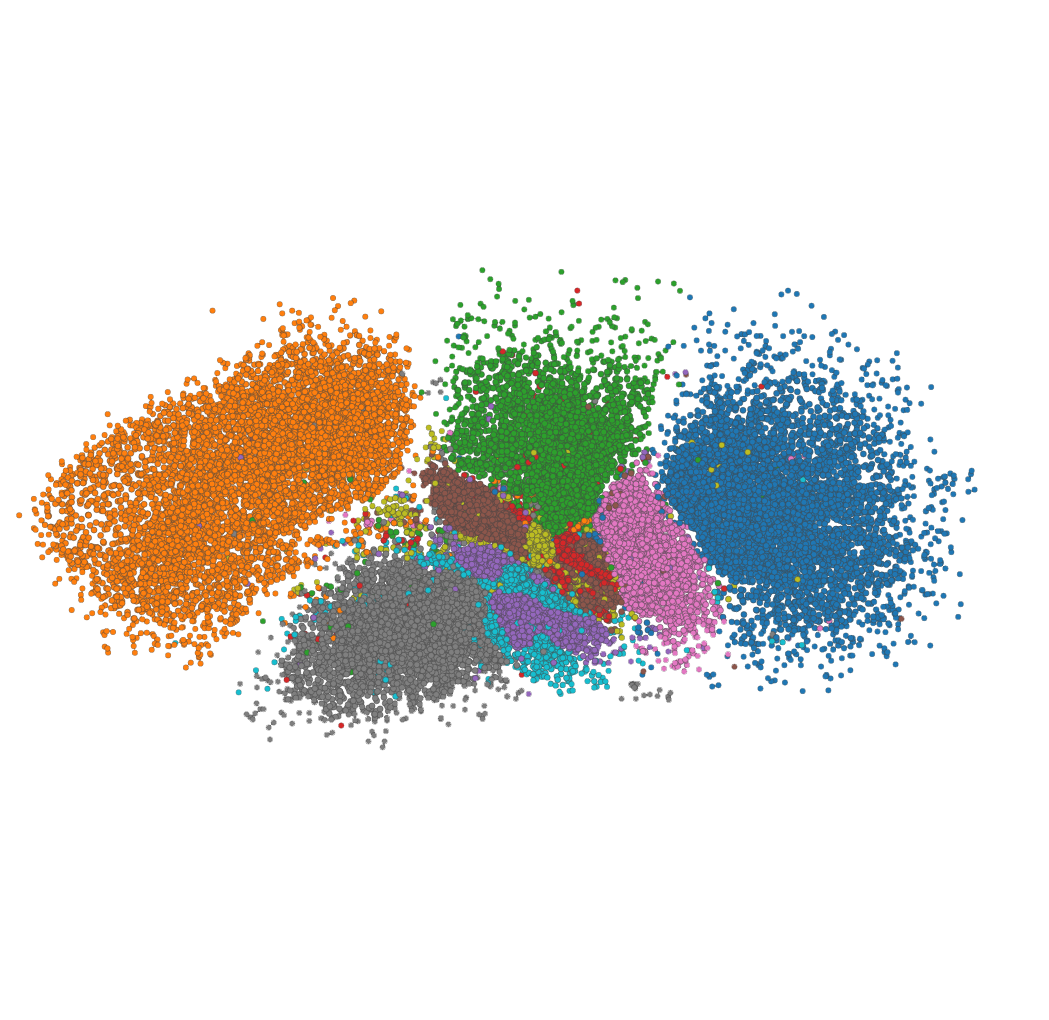}&
        \includegraphics[width=0.1\textwidth]{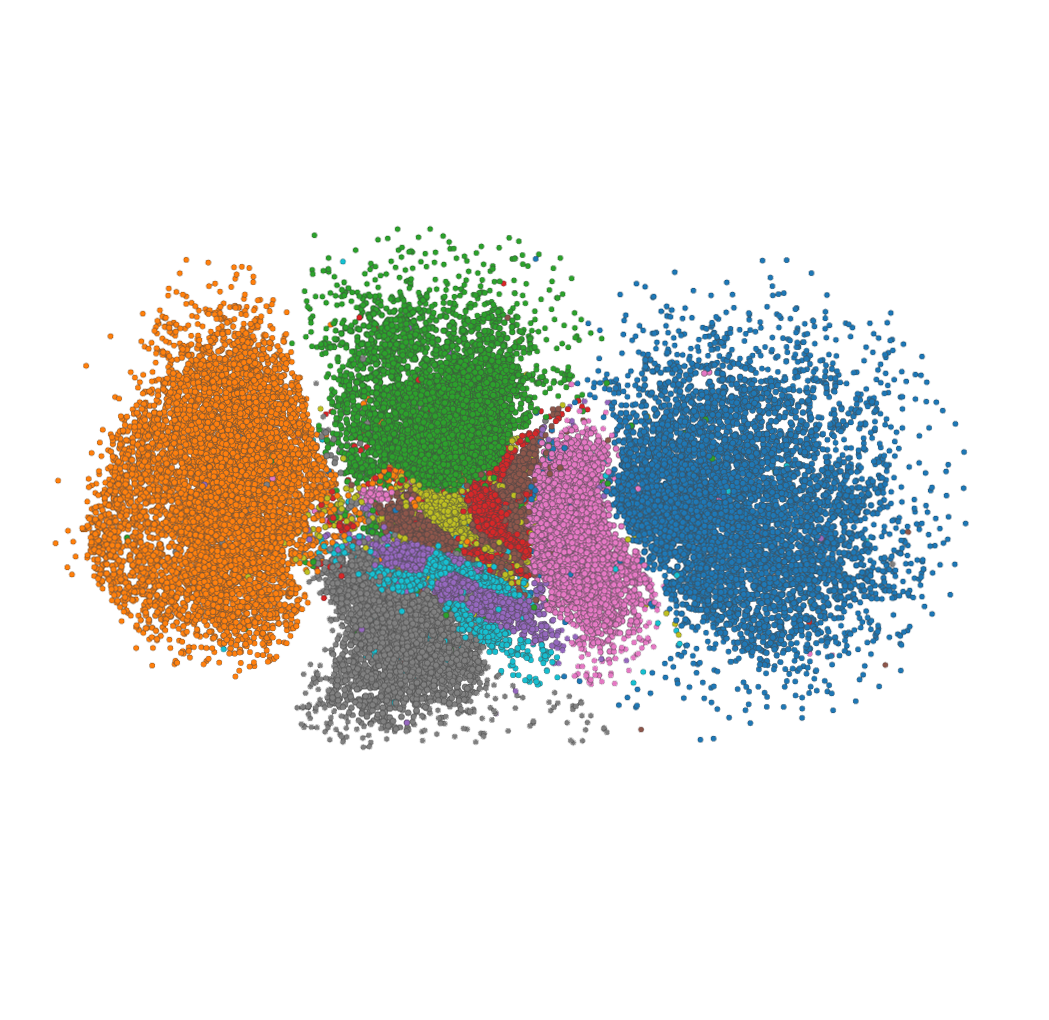}&
        \includegraphics[width=0.1\textwidth]{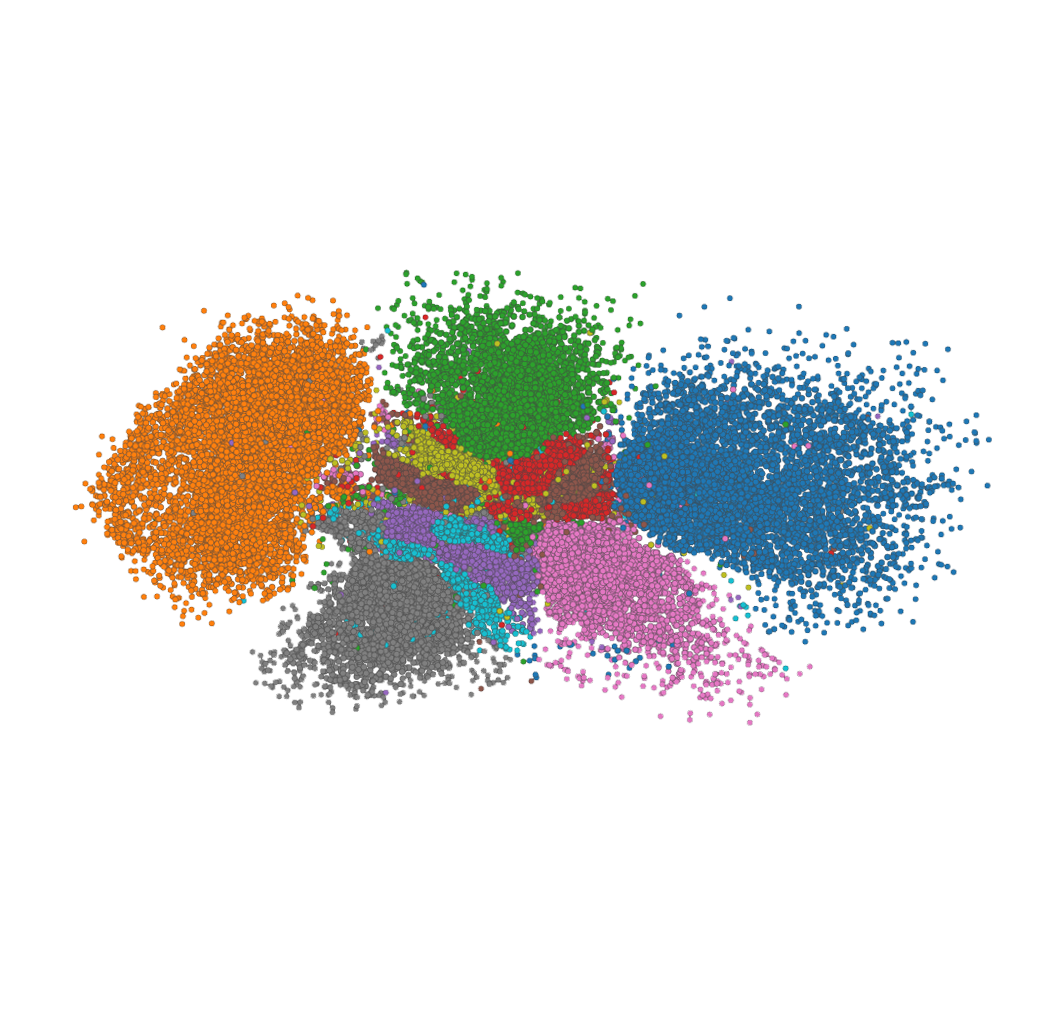}&
        \includegraphics[width=0.1\textwidth]{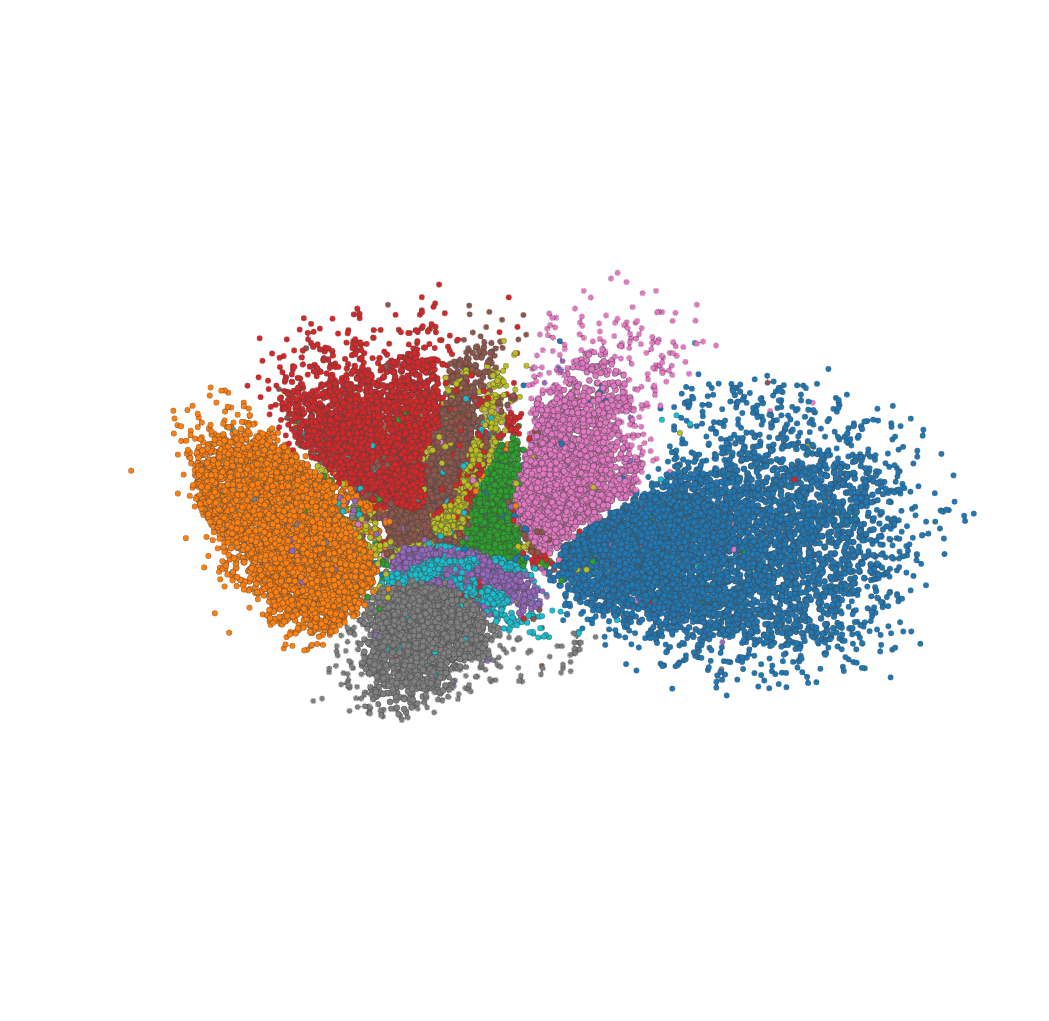}&
        \includegraphics[width=0.1\textwidth]{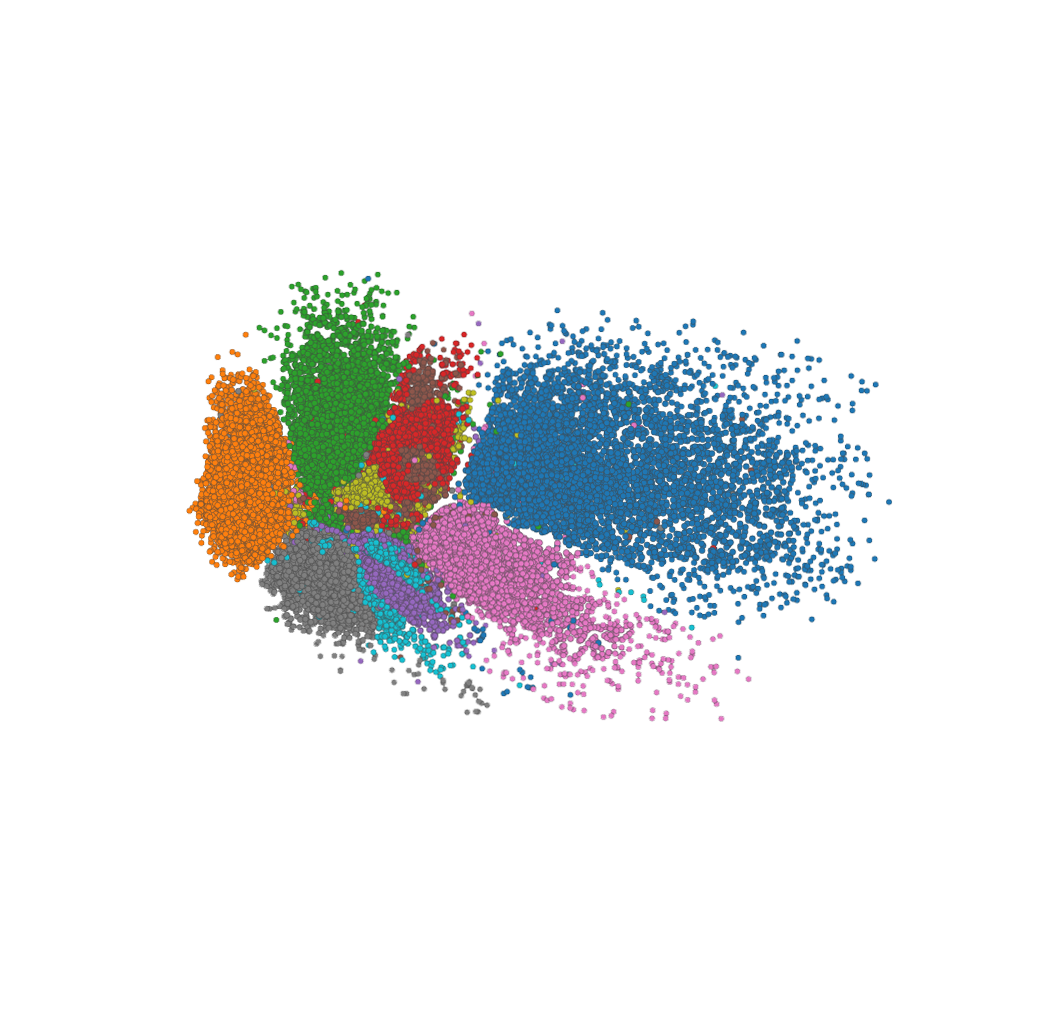}&
        \includegraphics[width=0.1\textwidth]{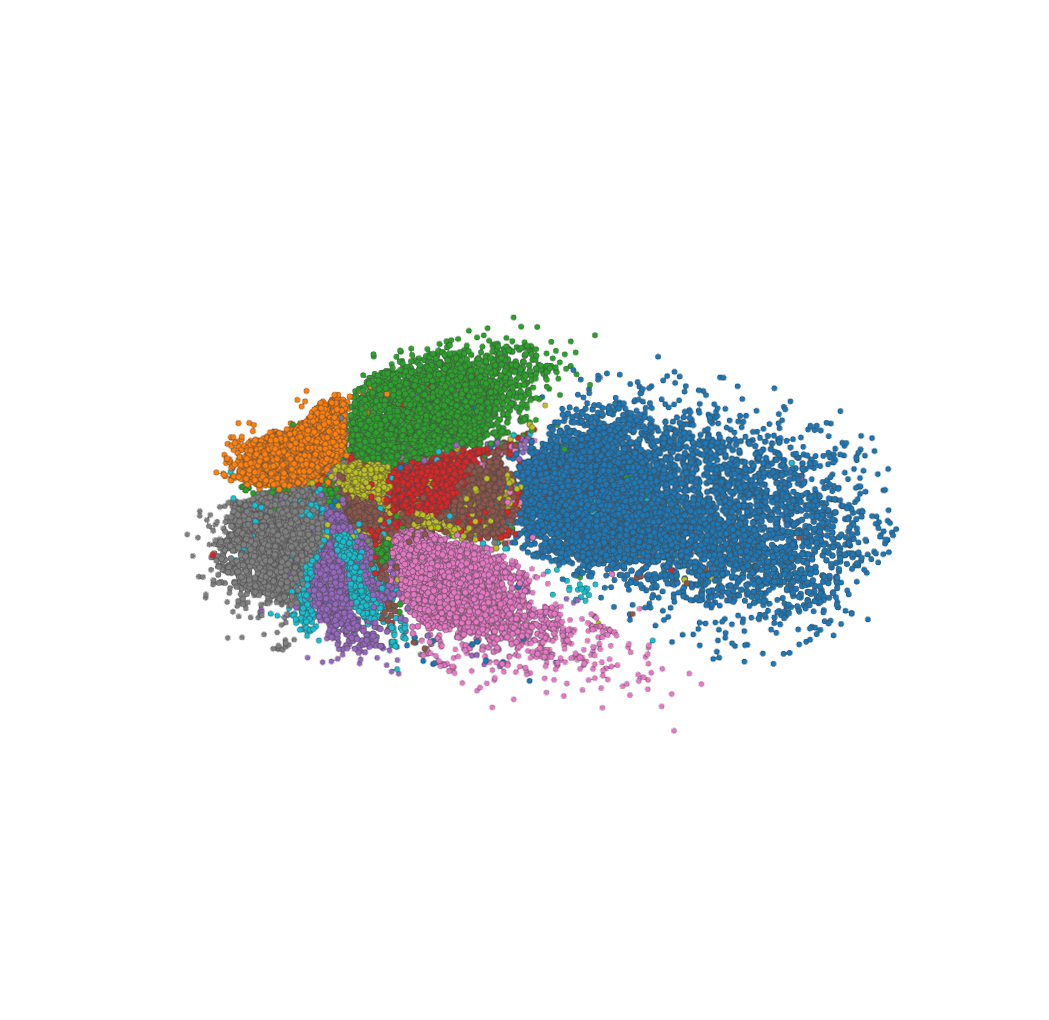}&
        \includegraphics[width=0.1\textwidth]{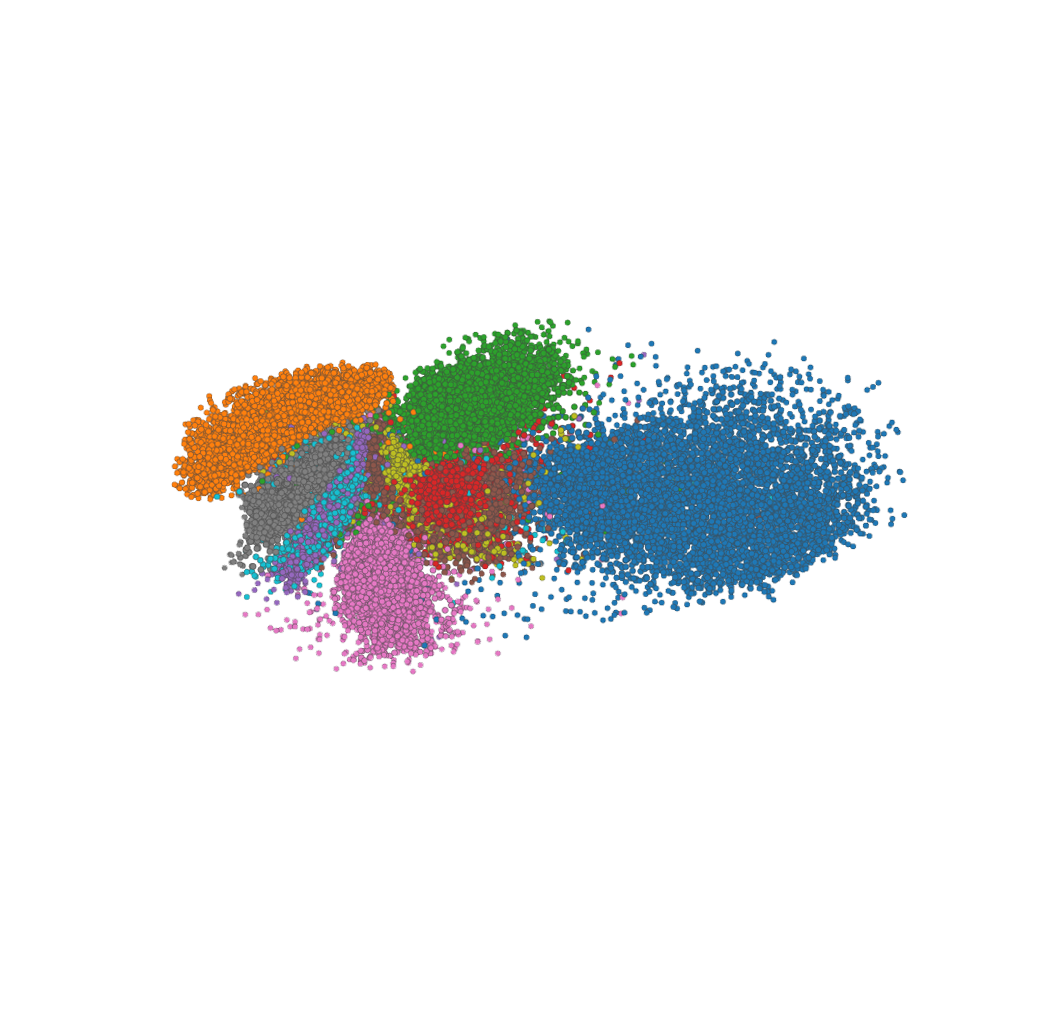}&
        \includegraphics[width=0.1\textwidth]{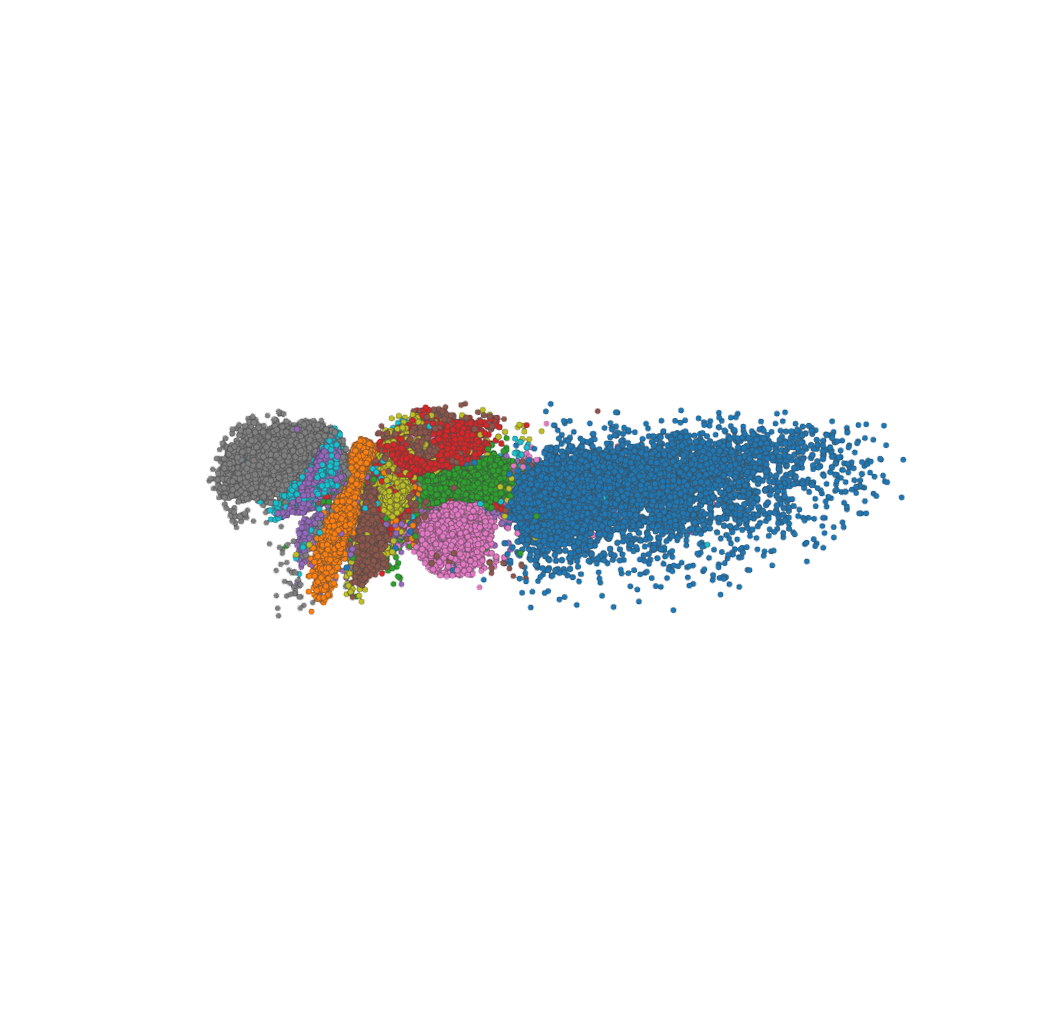}\\
        \includegraphics[width=0.1\textwidth]{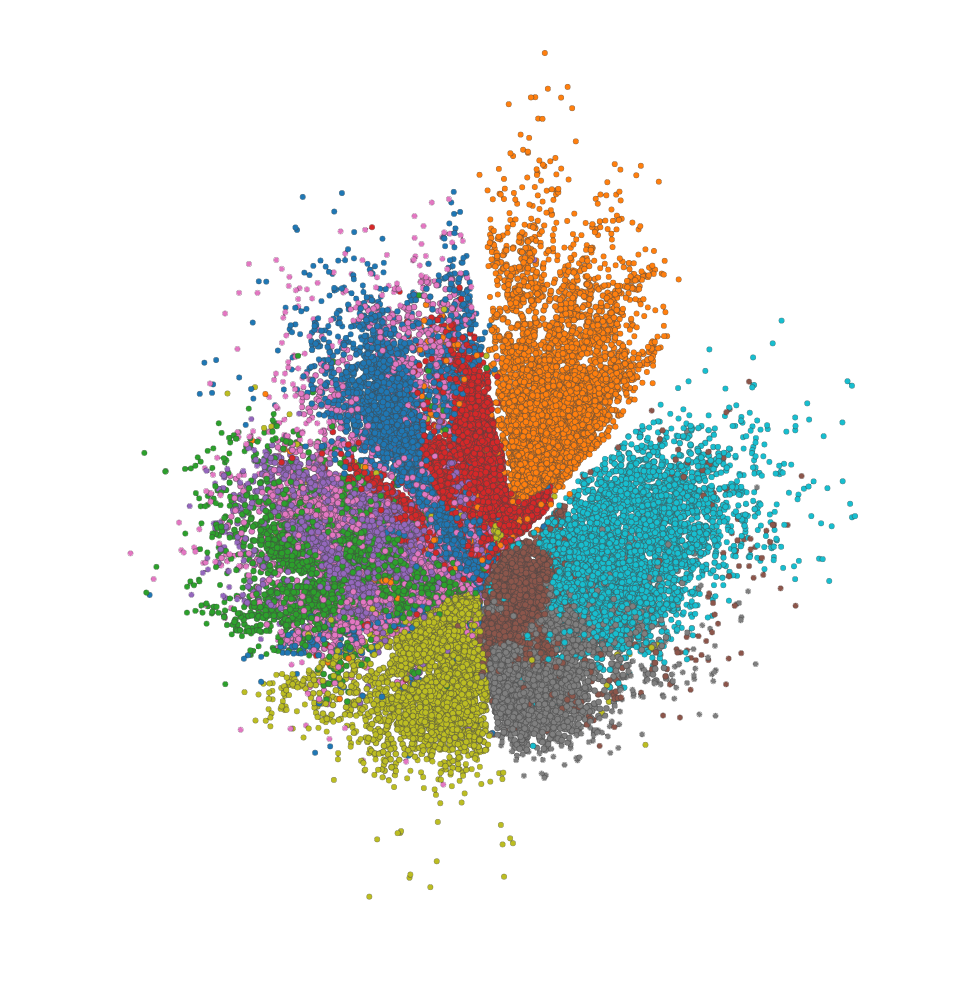}&
        \includegraphics[width=0.1\textwidth]{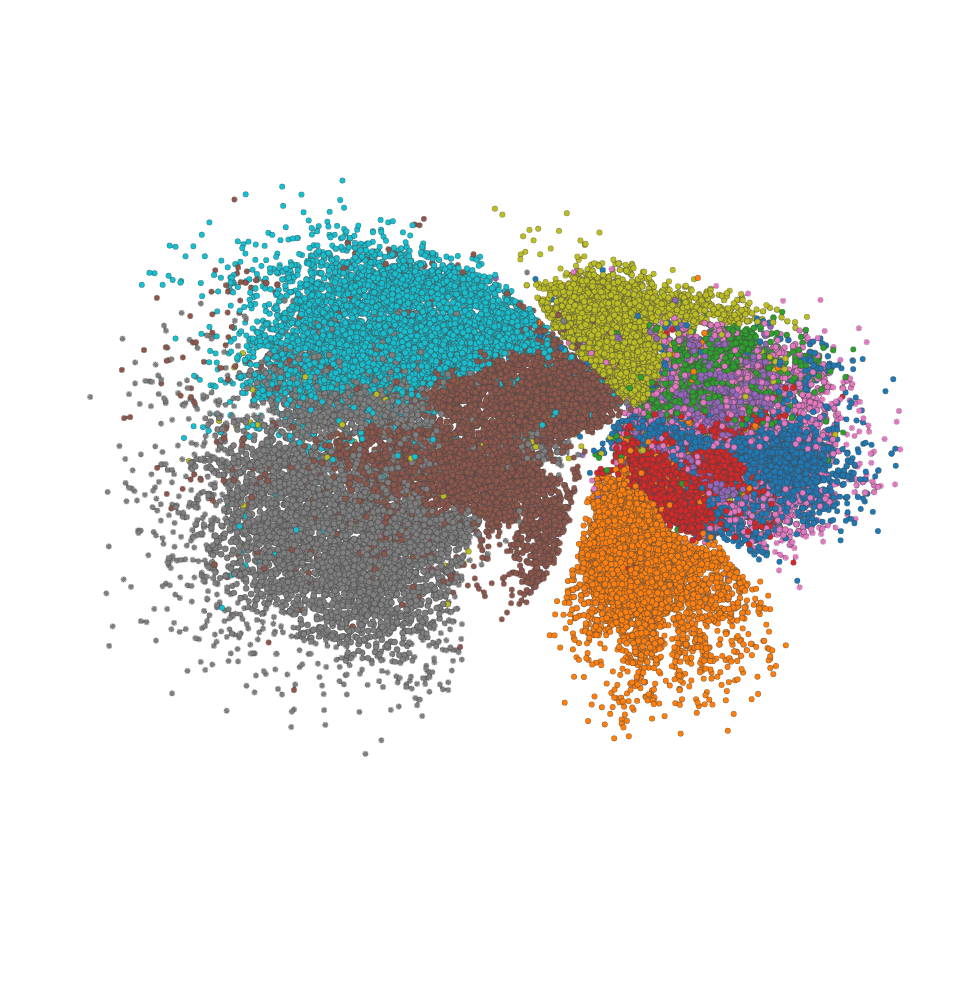}&
        \includegraphics[width=0.1\textwidth]{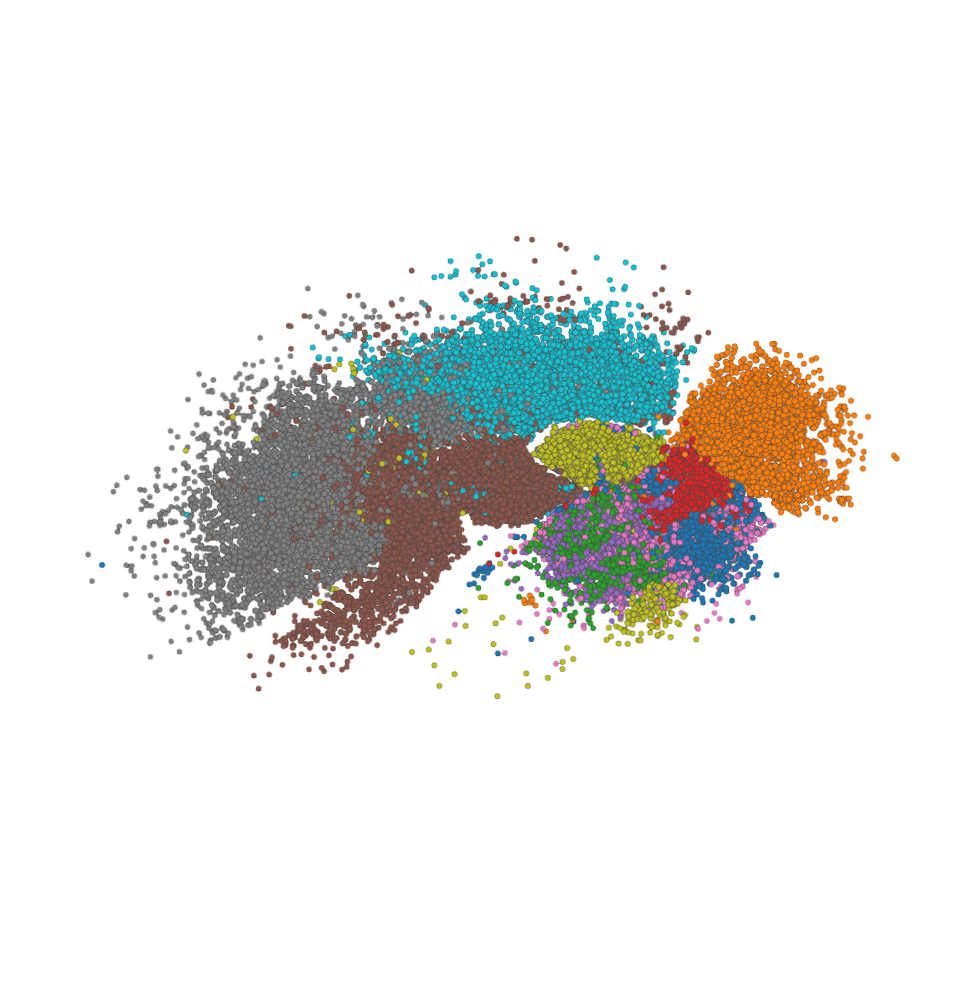}&
        \includegraphics[width=0.1\textwidth]{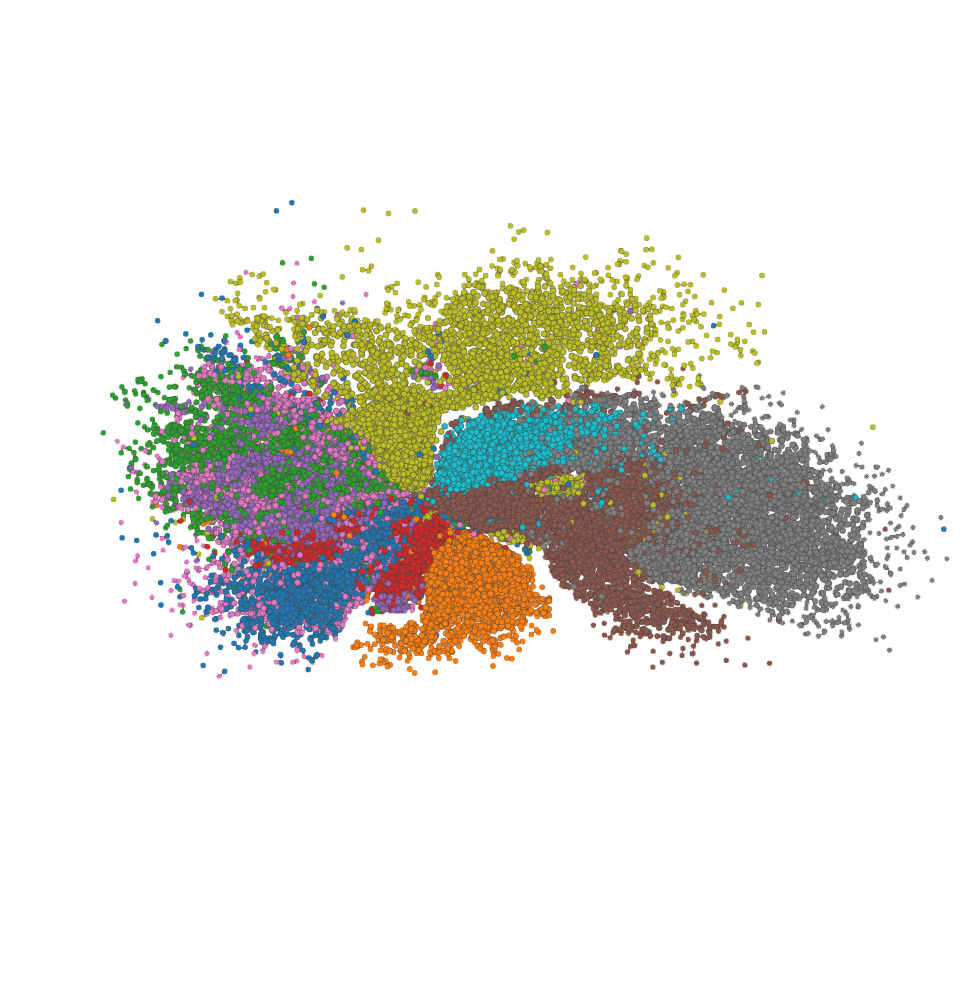}&
        \includegraphics[width=0.1\textwidth]{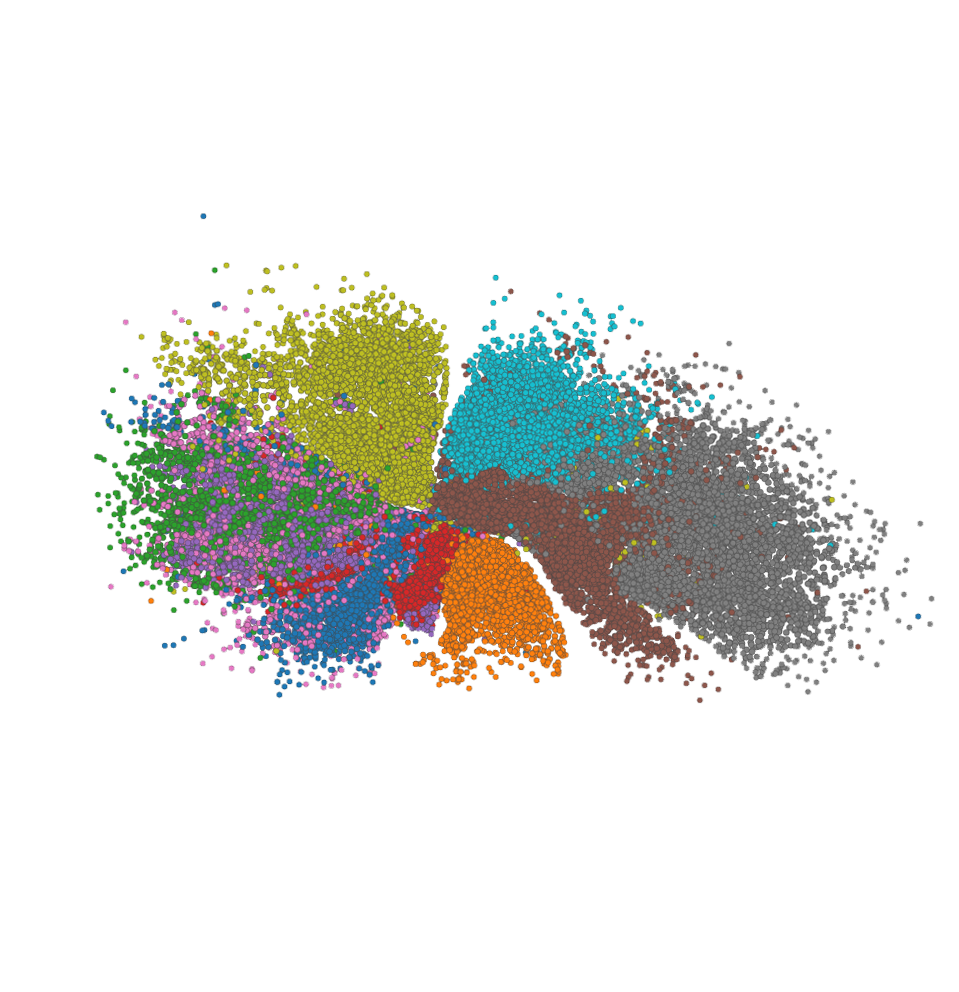}&
        \includegraphics[width=0.1\textwidth]{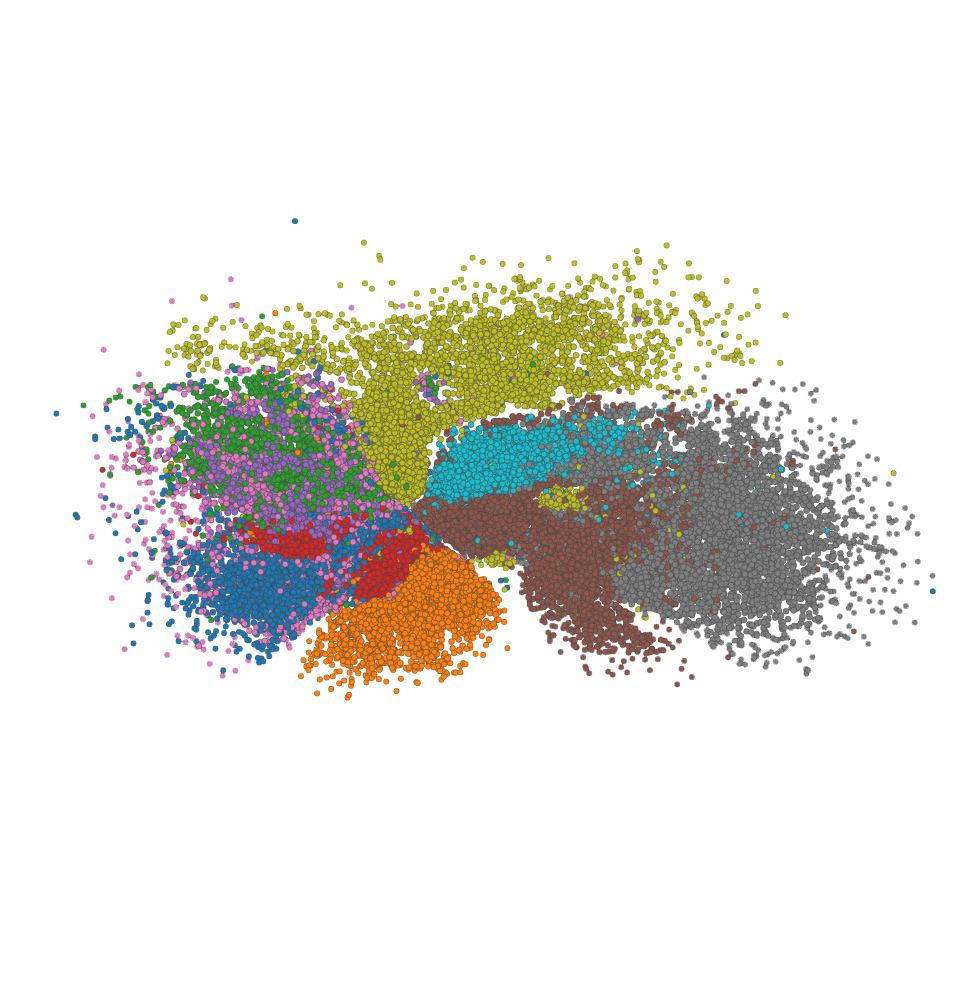}&
        \includegraphics[width=0.1\textwidth]{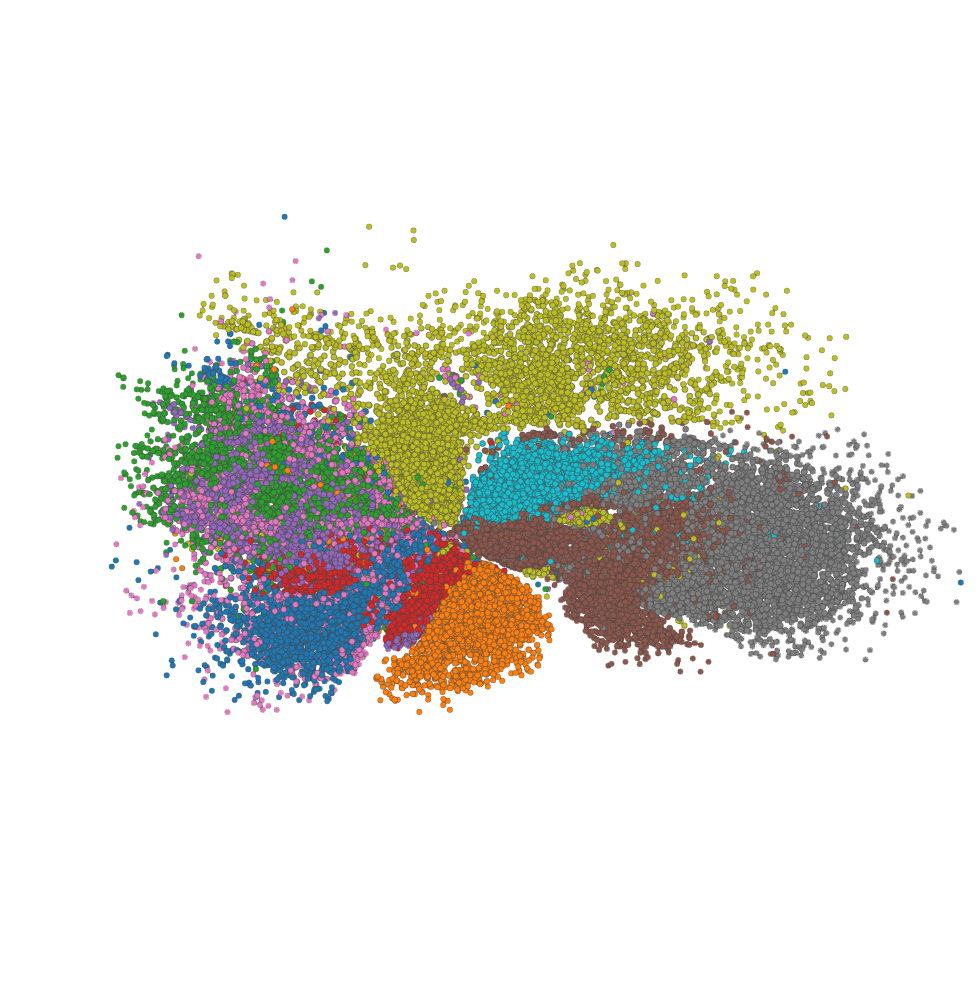}&
        \includegraphics[width=0.1\textwidth]{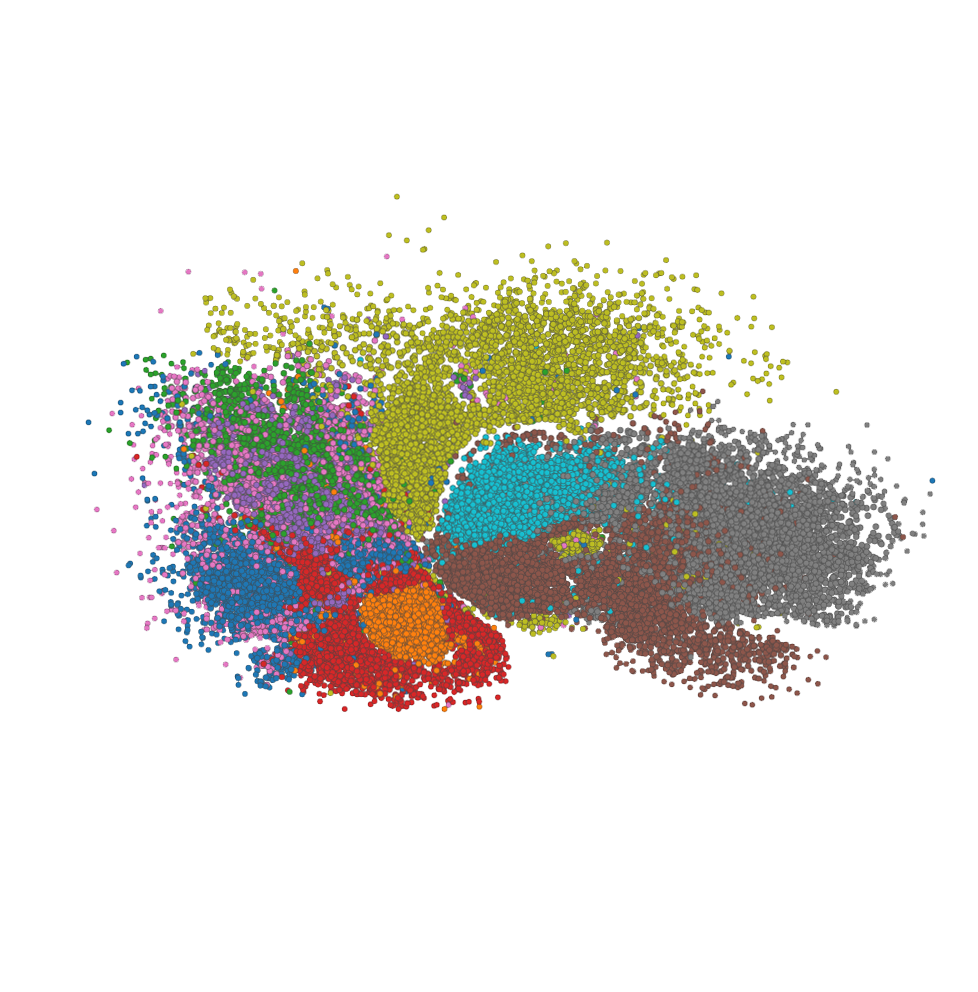}&
        \includegraphics[width=0.1\textwidth]{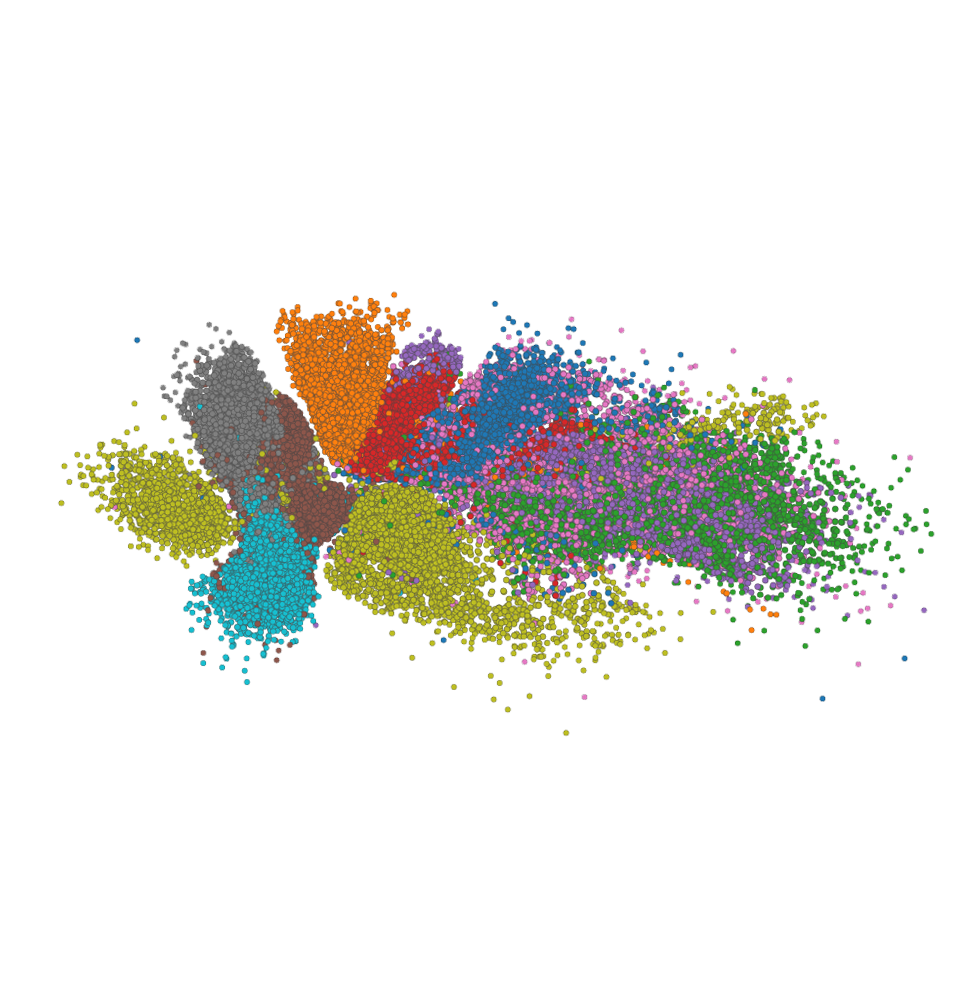}&
        \includegraphics[width=0.1\textwidth]{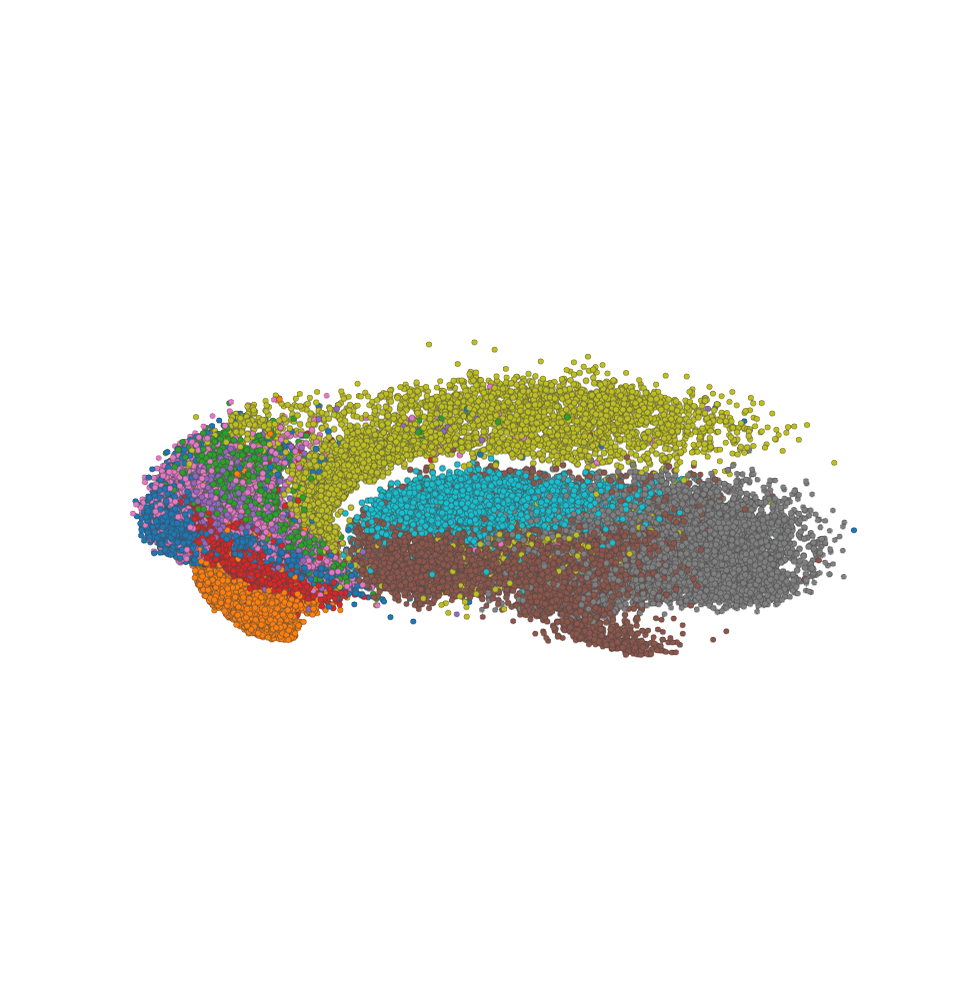}\\
        
\end{tabular}
    \setlength\tabcolsep{0.3pt}

\begin{tabular}{c c c}
    $ L_{\text{rec}}$ & $ L_{\text{iso}}$ & $ L_{\text{piso}}$ 
    \\
    \includegraphics[width=0.33\textwidth]{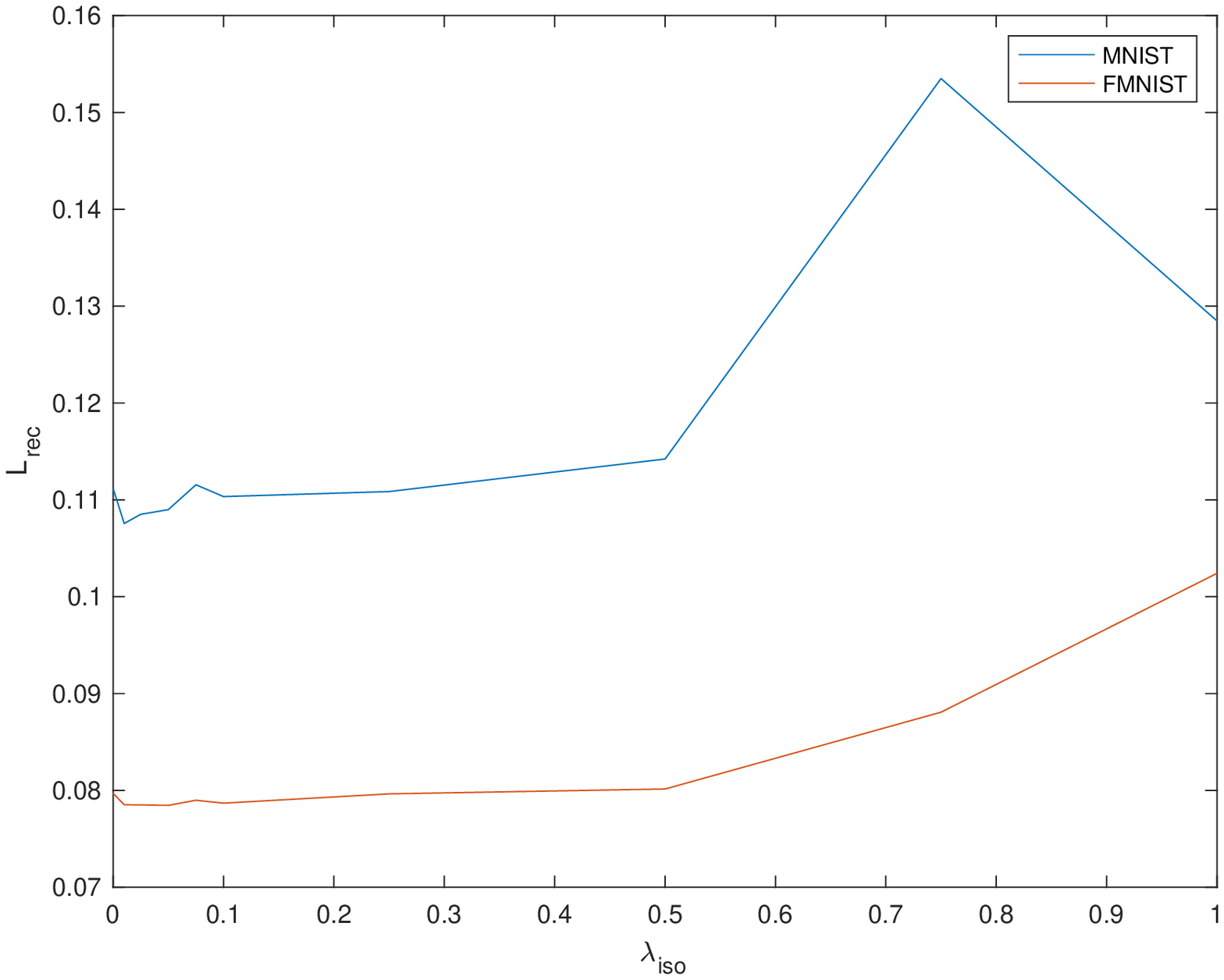} &
    \includegraphics[width=0.33\textwidth]{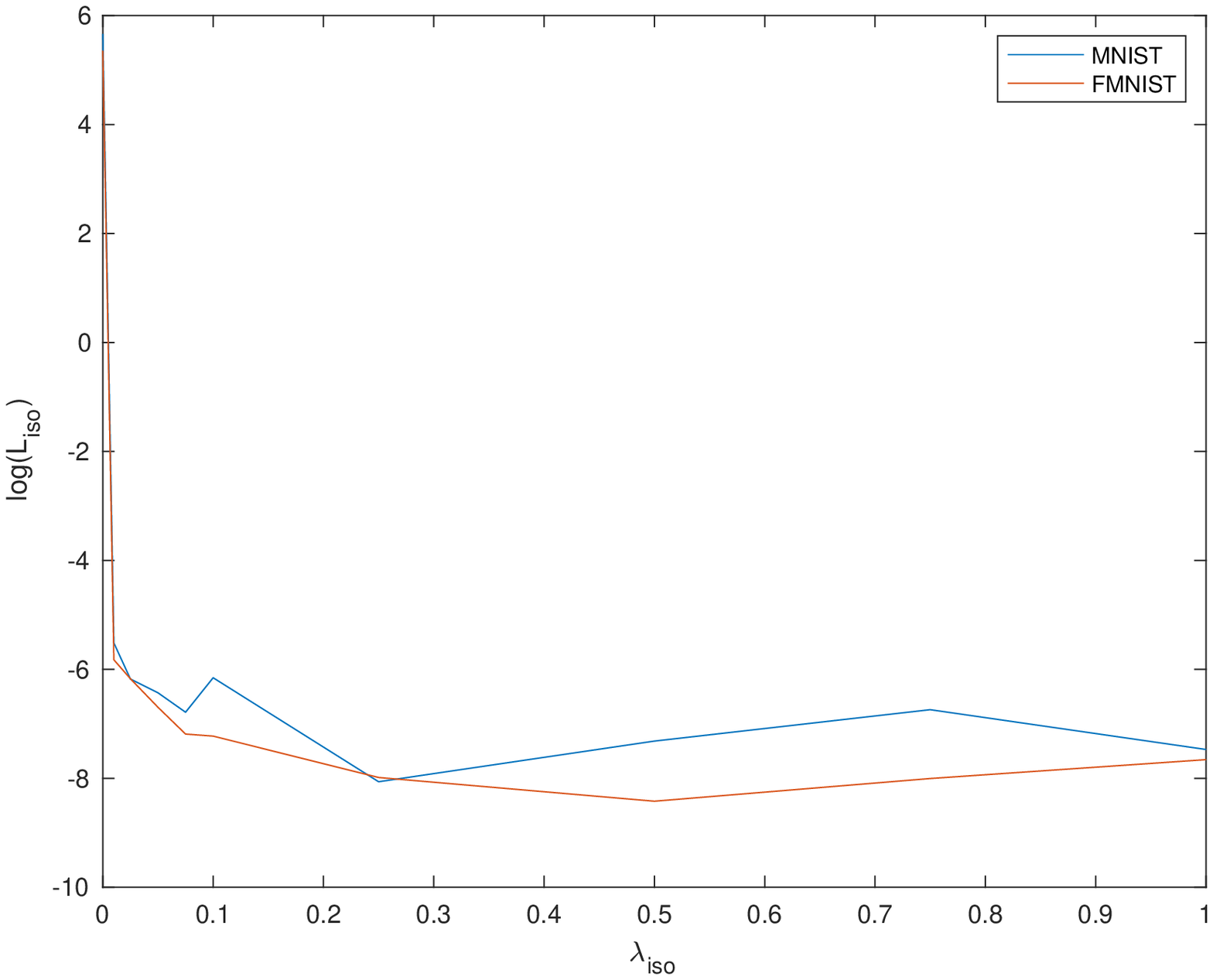} &
    \includegraphics[width=0.33\textwidth]{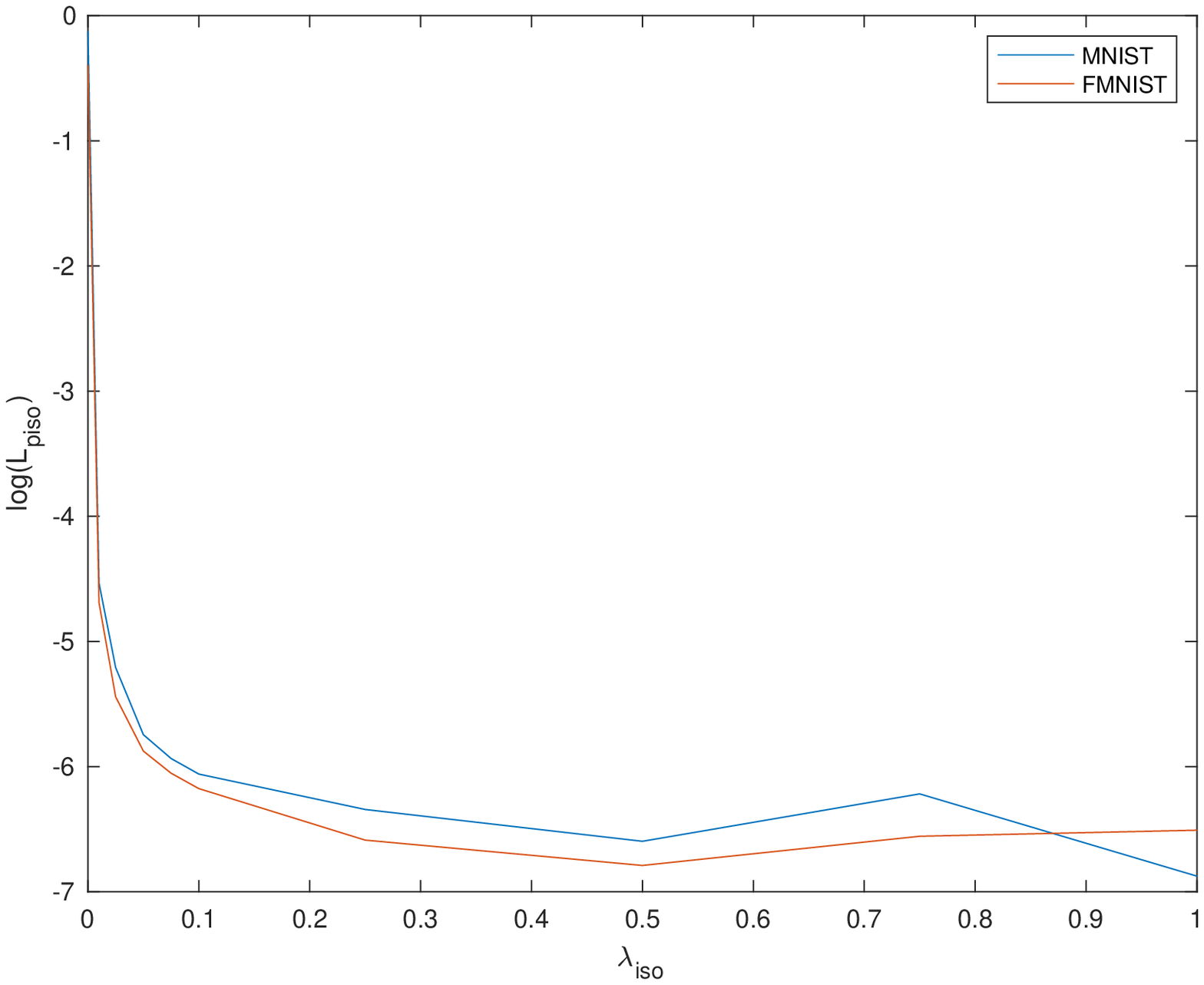}
\end{tabular}

    \caption{Sensitivity to hyper-parameters. Top: visualizations of MNIST (1st row) and FMNIST (2nd row) datasets trained with different $\lambda_{\text{iso}}$ values. Bottom: plots of the final train losses as a function of $\lambda_{\text{iso}}$; left to right: $ L_{\text{rec}} $ (linear scale), $L_{\text{iso}}$ (log scale), and $L_{\text{piso}}$ (log scale). }
    
    \label{fig:hyperparameters}
\end{figure}

\section{Conclusions}
\vspace{1pt}
We have introduced I-AE, a regularizer for autoencoders that promotes isometry of the decoder and pseudo-inverse of the encoder. Our goal was two-fold: (i) producing a favorable low dimensional manifold approximation to high dimensional data, isometrically parameterized for preserving, as much as possible, its geometric properties; and (ii) avoiding complex isometric solutions based on the notion of psuedo-inverse. Our regularizers are simple to implement and can be easily incorporated into existing autoencoders architectures. We have tested I-AE on common manifold learning tasks, demonstrating the usefulness of isometric autoencoders.

An interesting future work venue is to consider task (ii) from section \ref{s:intro}, namely incorporating I-AE losses in a probabilistic model and examine the potential benefits of the isometry prior for generative models. One motivation is the fact that isometries push probability distributions by a simple change of coordinates, $P(\vz) = P(f(\vz))$.




\bibliography{iclr_paper}
\bibliographystyle{iclr2021_conference}

\appendix
\section{Appendix}

\subsection[Implementation details]{Implementation details}
All experiments were conducted on a Tesla V100 Nvidia GPU using \textsc{pytorch} framework \cite{paszke2017automatic}.
\subsubsection{Notations}
Table \ref{tab:notation} describes the notation for the different network layers.
\begin{table}[h!]
\centering
\scriptsize
\begin{adjustbox}{max width=0.99\textwidth}
\begin{tabular}{c|c}
\hline
Notation      & Description   \\
\hline
\textsc{Lin} $n$      & Linear layer. $n$ denotes the output dimension.\\
\textsc{FC} $n$       & FullyConnected layer with SoftPlus ($\beta=100$) non linear activation. $n$ denotes the output dimension.\\
\textsc{FC\_B} $n$    & Block consisting of Lin $n$, followed by a batch normalization layer and SoftPlus ($\beta=100$) non linear activation.    \\
\textsc{Conv} $c,k,s,p$       & Convolutional layer with kernel of size $k\times k$, $c$ output channels, $s$ stride, and $p$ padding.  \\
\textsc{Conv\_B} $c,k,s,p$       & Block consisting of \textsc{Conv} $c,k,s,p$, followed by a batch normalization layer and SoftPlus($\beta=100$)  non linear activation. \\
\textsc{ConvT} $c,k,s,p$       & Convolutional transpose layer with kernel of size $k\times k$, $c$ output channels, $s$ stride, and $p$ padding.\\
\textsc{ConvT\_B} $c,k,s,p$       & Block consisting of \textsc{ConvT} $c,k,s,p$, followed by a batch normalization layer and SoftPlus($\beta=100$)  non linear activation.
\end{tabular}
\end{adjustbox}
 \vspace{5pt}
    \caption{Layers notation.}
    \label{tab:notation}
\end{table}
\subsubsection{Evaluation}
\paragraph{Architecture.} We used an autoencoder consisted of 5 \textsc{FC} 256 layers followed by a \textsc{Lin} $2$ layer for the encoder; similarly, 5 \textsc{FC} 256 layers followed by a \textsc{Lin} 3 layer were used for the decoder.

\paragraph{Training details.} All methods were trained for a relatively long period of 100K epochs. Training was done with the \textsc{Adam} optimizer \cite{kingma2014adam}, setting a fixed learning rate of $0.001$ and a full batch. I-AE parameter was set to $\lambda_{\text{iso}}=0.01$.

\paragraph{Baselines.} The following regularizers were used as baselines: Contractive autoencoder (CAE) \cite{Rifai2011ContractiveAE}; Contractive autoencoder with decoder weights tied to the encoder weights (TCAE) \cite{rifai2011learning}; Gradient penalty on the decoder (RAE-GP) \cite{ghosh2020from}; Denoising autoencoder with gaussian noise (DAE) \cite{vincent2010stacked}. For both CAE, and TCAE the regularization term is $\norm{ dg(\vx)}^2$. For RAE-GP the regularization term is $\norm{ df(\vz)}^2$. For U-MAP \cite{mcinnes2018umap}, we set the number of neighbours to $30$. For t-SNE \cite{maaten2008visualizing}, we set perplexity$=50$.

\subsubsection{Data visualization}

\paragraph{Architecture.} Table \ref{tab:arch_vis} lists the complete architecture details of this experiment. Both MNIST and FMNIST were trained with \textsc{FC-NN} and \textsc{S-CNN}, and COIL20 was trained with \textsc{L-CNN}.

\begin{table}[h!]
\centering
\scriptsize
\begin{adjustbox}{max width=0.99\textwidth}
\begin{tabular}{cc|cc|cc}
\hline
\multicolumn{2}{c}{\textsc{FC-NN}} & \multicolumn{2}{c}{\textsc{S-CNN}}            & \multicolumn{2}{c}{\textsc{L-CNN}}             \\ \hline
Encoder      & Decoder    & Encoder           & Decoder           & Encoder           & Decoder            \\ \hline
\textsc{FC\_B} 128       & \textsc{FC\_B} 1024    & \textsc{Conv\_B} 32,4,2,1  & \textsc{FC} 256 & \textsc{Conv\_B} 128,4,2,1  & \textsc{ConvT\_B} 2048,4,1,0 \\
\textsc{FC\_B} 256       & \textsc{FC\_B} 512     & \textsc{Conv\_B} 64,4,2,1  & \textsc{ConvT\_B} 128,4,1,0 & \textsc{Conv\_B} 256,4,2,1  & \textsc{ConvT\_B} 1024,4,2,1 \\
\textsc{FC\_B} 512       & \textsc{FC\_B} 256     & \textsc{Conv\_B} 128,4,2,1  & \textsc{ConvT\_B} 64,4,2,1 & \textsc{Conv\_B} 512,4,2,1  & \textsc{ConvT\_B} 512,4,2,1  \\
\textsc{FC\_B} 1024      & \textsc{FC\_B} 128     & \textsc{Conv\_B} 256,4,2,0 & \textsc{ConvT} 32,4,2,1   & \textsc{Conv\_B} 1024,4,2,1 & \textsc{ConvT\_B} 256,4,2,1  \\
\textsc{Lin} 2         & \textsc{Lin} 784     & \textsc{Lin} 2    &                 \textsc{ConvT} 1,4,2,3  & \textsc{Conv\_B} 2048,4,2,1 & \textsc{ConvT\_B} 128,4,2,1  \\
\textbf{}    &            &                   &                   & \textsc{Conv\_B} 4096,4,2,1 & \textsc{ConvT} 1,4,2,1    \\
             &            &                   &                   & \textsc{Conv} 2,2,2,1    &                   
\end{tabular}
\end{adjustbox}
    \vspace{10pt}
    \caption{High dimensional visualization experiment architectures.}
    \label{tab:arch_vis}
\end{table}
\paragraph{Training details.} Training was done using \textsc{Adam} optimizer \cite{kingma2014adam}. The rest of the training details are on table \ref{tab:specs_vis}.
\begin{table}[h!]
\centering
\begin{adjustbox}{max width=0.99\textwidth}
\begin{tabular}{llllll}
\hline
                                  & \multicolumn{2}{l}{MNIST}          & \multicolumn{2}{l}{FMNIST}         & COIL20 \\ \hline
\multicolumn{1}{l|}{Architecture} & \textsc{FC-NN} & \multicolumn{1}{l|}{\textsc{S-NN}}  & \textsc{FC-NN} & \multicolumn{1}{l|}{\textsc{S-CNN}} & \textsc{L-CNN}  \\ \hline
\multicolumn{1}{l|}{Batch Size}   & 128   & \multicolumn{1}{l|}{512}   & 128   & \multicolumn{1}{l|}{512}   & 144    \\
\multicolumn{1}{l|}{$\lambda_{\text{iso}}$}       & 0.1   & \multicolumn{1}{l|}{0.075} & 0.01  & \multicolumn{1}{l|}{0.075} & 0.1    \\
\multicolumn{1}{l|}{Epochs}       & 1000  & \multicolumn{1}{l|}{500}   & 1000  & \multicolumn{1}{l|}{500}   & 1000  
\end{tabular}
\end{adjustbox}
\caption{High dimensional visualization training details.}
\label{tab:specs_vis}
\end{table}

\paragraph{Baselines.} The following regularizers were used as baselines: Contractive autoencoder (CAE) \cite{Rifai2011ContractiveAE}; Gradient penalty on the decoder (RAE-GP) \cite{ghosh2020from}; Denoising autoencoder with gaussian noise (DAE) \cite{vincent2010stacked}. For CAE the regularization term is $\norm{dg(\vx)}^2$. For RAE-GP the regularization term is $\norm{ df(\vz)}^2$. We used U-MAP \cite{mcinnes2018umap} official implementation with $ \text{random\_state}=42 $, and \cite{Ulyanov2016} multicore implementation for t-SNE \cite{maaten2008visualizing} with default parameters.

\subsection{Additional Experiments}
\subsubsection{Generalization in high dimensional space}
\vspace{5pt}
Next, we evaluate how well our suggested isometric prior induces manifolds that generalizes well to unseen data. We experimented with three different images datasets: MNIST \citep{lecun1998mnist}; CIFAR10 \citep{krizhevsky2009learning}; and CelebA \citep{liu2015deep}. We quantitatively estimate methods performance by measuring the $L_2$ distance and the \emph{Fr\'{e}chet Inception Distance} (FID) \cite{heusel2017gans} on a held out test set. For each dataset, we used the official train-test splits.

For comparison versus baselines we have selected among relevant existing AE based methods the following: Vanilla AE (AE); autoencoder trained with weight decay (AEW); Contractive autoencoder  (CAE); autoencoder with spectral weights normalization (RAE-SN); and autoencoder with $L_2$ regularization on decoder weights (RAE-SN). RAE-$L_2$ and RAE-SN were recently successfully applied to this data in \citep{ghosh2020from}, demonstrating state-of-the-art performance on this task. In addition, we compare versus the Wasserstein Auto-Encoder (WAE) \cite{abc}, chosen as state-of-the-art among generative autoencoders.

For evaluation fairness, all methods were trained using the same training hyper-parameters: network architecture, optimizer settings, batch size, number of epochs for training and learning rate scheduling. See the appendix for specific hyper-parameters values. In addition, we generated a validation set out of the training set using 10k samples for the MNIST and CIFAR-10 experiment, whereas for the CelebA experiment we used the official validation set. For each training epoch, we evaluated the reconstruction $L_2$ loss on the validation set and chose the final network weights to be the one that achieves the minimum reconstruction. 
We experimented with two variants of I-AE regularizers: $L_{\text{piso}}$ and $L_{\text{piso}}+L_{\text{iso}}$. Table \ref{tab:gen} logs the results. Note that I-AE produced competitive results with the current SOTA on this task. 

\paragraph{Architecture.} For all methods, we used an autoencoder with Convolutional and Convolutional transpose layers. Table \ref{tab:arch_gen} lists the complete details.

\begin{table}[h!]
\centering
\scriptsize
\begin{adjustbox}{max width=0.99\textwidth}
\begin{tabular}{cc|cc|cc}
    
    \hline
    \multicolumn{2}{c}{MNIST} & \multicolumn{2}{c}{CIFAR-10} & \multicolumn{2}{c}{CelebA} \\ \hline
    Encoder      & Decoder    & Encoder           & Decoder     & Encoder           & Decoder      \\ \hline
    \textsc{Conv\_B} $128,4,2,1$ & \textsc{FC} $16384$    & \textsc{Conv\_B} $128,4,2,1$ & \textsc{FC} $16384$ & \textsc{Conv\_B} $128,5,2,1$ & \textsc{FC} $65536$ \\
    \textsc{Conv\_B} $256,4,2,1$       & \textsc{ConvT\_B} $512,4,2,1$     & \textsc{Conv\_B} $256,4,2,1$       & \textsc{ConvT\_B} $512,4,2,1$ & \textsc{Conv\_B} $256,5,2,1$ & \textsc{ConvT\_B} $512,4,2,1$ \\
    \textsc{Conv\_B} $512,4,2,1$       & \textsc{ConvT\_B} $256,4,2,1$     & \textsc{Conv\_B} $512,4,2,1$       & \textsc{ConvT\_B} $256,4,2,1$ &  \textsc{Conv\_B} $512,5,2,1$       & \textsc{ConvT\_B} $256,4,2,1$\\
    \textsc{Conv\_B} $1024,4,2,1$      & \textsc{ConvT\_B} $128,4,2,1$     & \textsc{Conv\_B} $1024,4,2,1$      & \textsc{ConvT\_B} $128,4,2,1$ & \textsc{Conv\_B} $1024,5,2,1$      & \textsc{ConvT\_B} $128,4,2,1$    \\
    \textsc{Lin} $16$         & \textsc{ConvT} $1,1,0,0$     & \textsc{Lin} $128$         & \textsc{ConvT} $3,1,0,0$  & \textsc{Lin} 128         & \textsc{ConvT} $3,1,0,0$\\
\end{tabular}
\end{adjustbox}
    \vspace{10pt}
    \caption{High dimensional generalization experiment architectures.}
    \label{tab:arch_gen}
\end{table}

\paragraph{Training details.} Training was done with the \textsc{Adam} optimizer \cite{kingma2014adam}, setting a learning rate of $0.0005$ and  batch size $100$. I-AE parameter was set to $\lambda_{\text{iso}}=0.1$.

\begin{table}[H]
    \centering
    \setlength\tabcolsep{8.5pt} 
    \vspace{4pt}
    \begin{tabular}{c}
        \begin{adjustbox}{max width=0.95\textwidth}
            \aboverulesep=0ex
            \belowrulesep=0ex
            \renewcommand{\arraystretch}{1.9}
            \begin{tabular}[t]{c|c|c|c|c|c |c|c|c|c}
            \multicolumn{2}{c}{} & 
            \multicolumn{8}{|c}{Methods} \\
            \cmidrule{3-10}
            
                Dataset & Distance & $L_{\text{piso}}$ & $L_{\text{piso}}+L_{\text{iso}}$ & AE & AEW & CAE & RAE-SN & RAE-$L_2$ & WAE \\
                \midrule
                \multirow{ 2}{*}{MNIST} &
                $L_2$ &  \textbf{0.96}    & \color{blue}{\textbf{0.99}}  & 1.14   & 1.0   & 1.15      & 1.35  & 1.14 & 1.64 \\
                & FID  & 6.09    & 7.94  & \textbf{4.95}   & \color{blue}{\textbf{5.59}}   & 6.46 & 10.72      & 11.41  & 6.99 \\
                 \midrule
                 \multirow{2}{*}{CIFAR-10} &
                 $L_2$ & \color{blue}{\textbf{20.19}}    & 21.05  & \textbf{20.16}   & 20.33   & 20.23  & 21.02  & 20.2 & 21.08 \\
                & FID & 70.14    & \textbf{56.04}  & 74.79   & \color{blue}{\textbf{68.71}}   & 71.71      & 70.79  & 71.05 & 74.2 \\
                \midrule
                \multirow{2}{*}{CelebA} &
                 $L_2$  & 20.38 &  \color{blue}{\textbf{19.93}} & 20.51 & \textbf{19.74} & 20.46 & 20.78 & 20.58 & 20.88 \\
                & FID  & \textbf{34.68} & 40.73 & 40.53 & 40.00   & 39.52 & 40.45 & \color{blue}{\textbf{38.86}} & 38.98 \\
                \bottomrule
        \end{tabular} 
        \end{adjustbox}
      
    \end{tabular}
    \vspace{10pt}
    \caption{Manifold approximation quality on test images.  We log the $L_2$ and FID distances (lower is better) from reconstructed images to the input images. The $L_2$ numbers are reported $* 10^3$. The top performance scores are highlighted as: {\color{black}\textbf{First}}, {\color{blue}\textbf{Second}}.}
    \label{tab:gen}
\end{table}

\begin{figure}[h]
    \centering
    \setlength\tabcolsep{0.3pt}
    \begin{tabular}{cc}
        input & \includegraphics[width=0.8\textwidth]{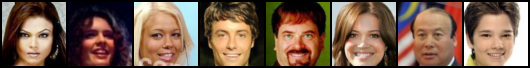} \\
        \hline
         $L_{\text{piso}}$ & \includegraphics[width=0.8\textwidth]{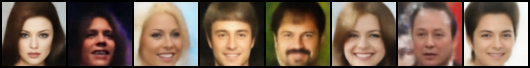} \\
         \hline
         $L_{\text{piso}}+L_{\text{iso}}$  & \includegraphics[width=0.8\textwidth]{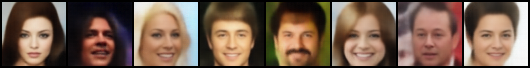} \\
         \hline
         AE & \includegraphics[width=0.8\textwidth]{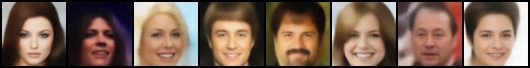} \\
         \hline
         AEW & \includegraphics[width=0.8\textwidth]{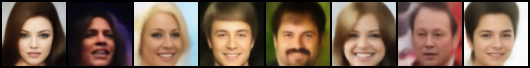} \\
         \hline
         CAE & \includegraphics[width=0.8\textwidth]{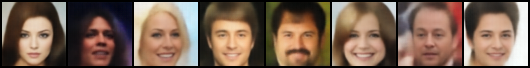} \\
         \hline
          RAE-SN & \includegraphics[width=0.8\textwidth]{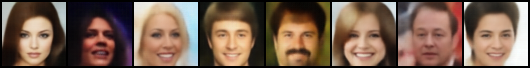} \\
         \hline
         RAE-$L_2$ & \includegraphics[width=0.8\textwidth]{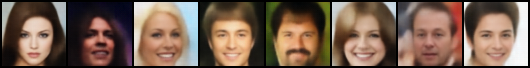} \\
         \hline
         WAE & \includegraphics[width=0.8\textwidth]{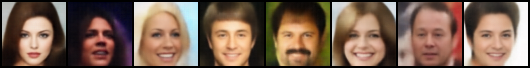} \\
        \hline
    \end{tabular}
    
    \caption{CelebA reconstructions.}
    \label{fig:recon_celeb}
\end{figure}
\begin{figure}[h]
    \centering
    \setlength\tabcolsep{0.3pt}
    \begin{tabular}{cc}
        input & \includegraphics[width=0.8\textwidth]{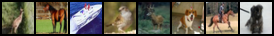} \\
        \hline
         $L_{\text{piso}}$ & \includegraphics[width=0.8\textwidth]{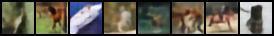} \\
         \hline
         $L_{\text{piso}}+L_{\text{iso}}$  & \includegraphics[width=0.8\textwidth]{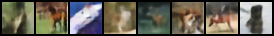} \\
         \hline
         AE & \includegraphics[width=0.8\textwidth]{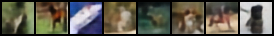} \\
         \hline
         AEW & \includegraphics[width=0.8\textwidth]{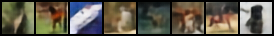} \\
         \hline
         CAE & \includegraphics[width=0.8\textwidth]{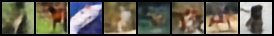} \\
         \hline
          RAE-SN & \includegraphics[width=0.8\textwidth]{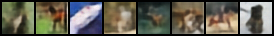} \\
         \hline
         RAE-$L_2$ & \includegraphics[width=0.8\textwidth]{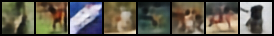} \\
         \hline
         WAE & \includegraphics[width=0.8\textwidth]{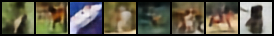} \\
        \hline
    \end{tabular}
    
    \caption{CIFAR-10 reconstructions.}
    \label{fig:recon_cifar}
\end{figure}
 
 \begin{figure}[h]
    \centering
    \setlength\tabcolsep{0.3pt}
    \begin{tabular}{cc}
        input & \includegraphics[width=0.8\textwidth]{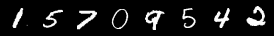} \\
        \hline
         $L_{\text{piso}}$ & \includegraphics[width=0.8\textwidth]{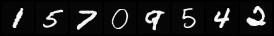} \\
         \hline
         $L_{\text{piso}}+L_{\text{iso}}$  & \includegraphics[width=0.8\textwidth]{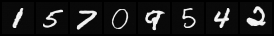} \\
         \hline
         AE & \includegraphics[width=0.8\textwidth]{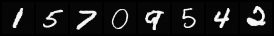} \\
         \hline
         AEW & \includegraphics[width=0.8\textwidth]{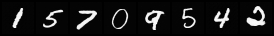} \\
         \hline
         CAE & \includegraphics[width=0.8\textwidth]{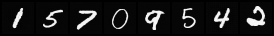} \\
         \hline
          RAE-SN & \includegraphics[width=0.8\textwidth]{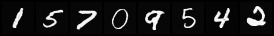} \\
         \hline
         RAE-$L_2$ & \includegraphics[width=0.8\textwidth]{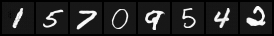} \\
         \hline
         WAE & \includegraphics[width=0.8\textwidth]{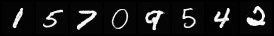} \\
        \hline
    \end{tabular}
    
    \caption{MNIST reconstructions.}
    \label{fig:recon_mnist}
\end{figure}
\end{document}